\crefname{assumption}{assumption}{assumptions}
\newcommand{\ci}{\perp\!\!\!\perp}
\theoremstyle{plain}
\newtheorem{theorem}{Theorem}[section]
\newtheorem{lemma}[theorem]{Lemma}
\theoremstyle{definition}
\newtheorem{assumption}[theorem]{Assumption}
\theoremstyle{remark}
\newtheorem{remark}[theorem]{Remark}
\newcommand{\E}{\mathbb{E}}
\newcommand{\R}{\mathbb{R}}
\newcommand{\ba}{\begin{array}}
\newcommand{\ea}{\end{array}}
\newcommand{\bs}{\begin{align}\begin{split}\nonumber}
\newcommand{\bsnumber}{\begin{align}\begin{split}}
\newcommand{\es}{\end{split}\end{align}}
\renewcommand{\[}{\left[}
\newcommand{\x}{\mathbf{x}}
\def\defeq{\triangleq} %
\def\balign#1\ealign{\begin{align}#1\end{align}}
\def\balignat#1\ealign{\begin{alignat}#1\end{alignat}}
\def\bitemize#1\eitemize{\begin{itemize}#1\end{itemize}}
\def\benumerate#1\eenumerate{\begin{enumerate}#1\end{enumerate}}
\newenvironment{talign}
 {\csname align\endcsname}
 {\endalign}
\def\balignt#1\ealignt{\begin{talign}#1\end{talign}}%
\newcommand{\indep}{\perp \!\!\! \perp}
\title{Learning Treatment Representations for Downstream Instrumental Variable Regression}
\author{%
  Shiangyi Lin\\
  Institute of Computational and Mathematical Engineering\\
  Stanford University\\
  \texttt{shiangyi@stanford.edu}\\
  \AND
  Hui Lan\\
  Institute of Computational and Mathematical Engineering\\
  Stanford University\\
  \texttt{huilan@stanford.edu}\\
  \AND
  Vasilis Syrgkanis
  \thanks{Supported by NSF Award IIS-2337916}\\
  Management Science and Engineering\\
  Stanford University\\
  \texttt{vsyrgk@stanford.edu}\\
}
\begin{document}

\maketitle

\begin{abstract}
Traditional instrumental variable (IV) estimators face a fundamental constraint: they can only accommodate as many endogenous treatment variables as available instruments. This limitation becomes particularly challenging in settings where the treatment is presented in a high-dimensional and unstructured manner (e.g. descriptions of patient treatment pathways in a hospital). In such settings, researchers typically resort to applying unsupervised dimension reduction techniques to learn a low-dimensional treatment representation prior to implementing IV regression analysis. We show that such methods can suffer from substantial omitted variable bias due to implicit regularization in the representation learning step. We propose a novel approach to construct treatment representations by explicitly incorporating instrumental variables during the representation learning process. Our approach provides a framework for handling high-dimensional endogenous variables with limited instruments. We demonstrate both theoretically and empirically that fitting IV models on these instrument-guided representations ensures identification of directions that optimize outcome prediction. Our experiments show that our proposed methodology improves upon the conventional two-stage approaches that perform dimension reduction without incorporating instrument information.

\end{abstract}

\section{Introduction}

Instrumental‐variable (IV) methods are among the most widely used tools for
recovering causal effects in the presence of unmeasured confounding.
Unfortunately, classical IV estimators scale poorly when the treatment
variable~$X$ is itself high-dimensional, unstructured, or both.  In modern
applications—where the treatment might be provided in the form of clinical treatment pathways encoded as free-text,
purchase histories, or genome-wide expression profiles—the number of potentially endogenous coordinates of~$X$ can dwarf the number of available instruments~$Z$ (e.g. variables related to capacity constraints in a hospital setting, see, e.g., \cite{dong2019off,dongcapacity,qin2023waiting}).  A common workaround is to compress~$X$ to a low-dimensional
summary~$D$ with unsupervised techniques (e.g.\ PCA, auto-encoders) and then
run a standard two-stage least squares (2SLS) on~$D$.  Because the dimension
reduction step ignores~$Z$, however, the resulting regression can suffer from
severe omitted-variable bias: directions of~$X$ that matter for the first-stage
relationship between~$Z$ and~$X$ may be discarded, violating the exclusion
restriction and invalidating the causal inference step (c.f. Figure~\ref{fig:svd-pca} for such a failure).
 
We propose \emph{Instrument‐Guided Representation Learning} (IGRL), a methodology for learning low-dimensional treatment representations that preserve the validity of downstream IV
analysis. IGRL folds the instruments directly into the representation learner so that the learned features~$D$ capture the variation in~$X$ that is driven by~$Z$. The procedure can be viewed as a regularization of the unsupervised learner toward directions that satisfy the
exclusion restriction, thereby eliminating the spurious back-door paths that plague two-step approaches. The resulting representation can then be used in an IV analysis, to learn directions of intervention in the representation space that will improve the target outcome and can be translated back to interventions in the original treatment space.

Prior work on that combines elements of representation learning with elements of instrumental variable analysis is limited and confined to linear methods. \citeauthor{Rao1964TheUA} and \citeauthor{sabatier1989pcaiv} described a procedure of performing principal component analysis (PCA) of a response variable with respect to its instruments. \citeauthor{Takane2001CPCA} studied constrained principal component analysis, which takes external information into consideration during dimensional reduction \cite{Takane2001CPCA}. More recently, \citeauthor{kelly2020ica} and \citeauthor{wang2024counterfactual} incorporates instrumental variables in estimating factor models that improves rate of convergence and avoid overfiting for high-dimensional data \cite{kelly2020ica},\cite{wang2024counterfactual}. The desiderata in all of these works are very different from identifying dimensions of variation that align with the instruments so that causal effects can be identified by downstream IV analysis.

Our work is also related to the literature on learning non-linear disentangled representations and causal representation learning \cite{hyvarinen2000independent,hyvarinen2013independent,khemakhem2020variational,halva2020hidden,monti2020causal,scholkopf2021toward,ahuja2022towards,hyvarinen2023nonlinear,jin2023learning,hyvarinen2024identifiability,halva2024identifiable}. However, the focus of this line of work has primarily been on discovering causal structure in data \cite{scholkopf2021towardcrl}, rather than constructing representations for downstream causal tasks. Our work is closely related to the identifiable VAE (iVAE) \cite{khemakhem2020variational}. The instrument can be viewed as the auxiliary information that can guide non-linear latent factor analysis. However, a crucial difference of our work is that we view the instrument $Z$ as only privileged information that is available only when estimating the causal effects and not when performing interventions. Hence, crucially we want our encoder to only take as input the treatment $X$ and not the instrument $Z$. Moreover, our desiderata is not the discovery of the true latent factors, but solely the discovery of valid decompositions of the treatment for downstream IV analysis. This allows us to relax many of the assumptions that are prevalent in this line of work.

Similar to our work, \citeauthor{saengkyongam2023identifying}'s \textit{Rep4Ex} approach addresses interventional outcome prediction under a similar structural equation model, but intervenes on $Z$ rather than our latent treatment space ($D$). Other dimensionality reduction studies for high-dimensional treatments \citep{nabi2022semiparametric, andreu2024contrastive} operated without unobserved confounders and used outcome-guided factor selection. Additional discussion appears in the appendix.

Our work aligns closer to the recent contributions by \citeauthor{vafa2024estimating} and \citeauthor{du2024labor}, which also highlights the omitted variable bias problem in learned representations in the context where representation learning is used for a set of high-dimensional observed confounders of a treatment and designs representation learning techniques to alleviate it. In that setting, the learned representation can implicitly omit important parts of the observed confounders, causing bias in the final causal estimate due to implicit unobserved confounding. Our goal is inherently different as we want to learn a latent representation of a highly confounded, high-dimensional treatment, as opposed to learning a latent representation of a high-dimensional confounder.

\section{Problem Statement: Learning Interventions via Representations}

We consider a setting where we are given data that contain samples of variables $(Z, X, Y)$, where $X$ is a high-dimensional ``treatment'' variable, $Y$ is a scalar outcome of interest and $Z$ is a low-dimensional vector of instruments. The treatment $X$ is heavily confounded via unobserved confounding variables $U$ that have a causal influence on the value of $X$ and also on $Y$, as depicted in Figure~\ref{fig:iv}. 

Our goal is to learn a latent representation of the highly confounded, high-dimensional treatment, so as to perform instrumental variable analysis on this learned representation and identify an outcome-improving direction of intervention in representation space and hence subsequently also in the original treatment space. Naive representation learning approaches for the treatment run the risk of an omitted variable problem that can invalidate the downstream causal analysis based on instrumental variables. Causal analysis using instrumental variables crucially assumes that the instrument $Z$, the treatment $X$, and the outcome $Y$ respect the causal graph depicted in Figure~\ref{fig:iv}. In particular, the instrument $Z$ is assumed to only affect outcome $Y$ through its effect on treatment $X$. When the high-dimensional treatment $X$ is replaced by a learned representation $D$, we run the risk that the part of $X$ that is not represented in $D$ contains elements that are correlated with both the instrument $Z$ and the outcome $Y$. As a result, $D$ no longer absorbs the entire effect of the instrumental variable Z on the outcome $Y$. This creates causal pathways from the instrument $Z$ to the outcome $Y$ that do not flow through the representation $D$, as shown in Figure~\ref{fig:inv-iv}. Therefore, we need to regularize the representation learning process to ensure that the causal influence through these omitted paths is minimal.
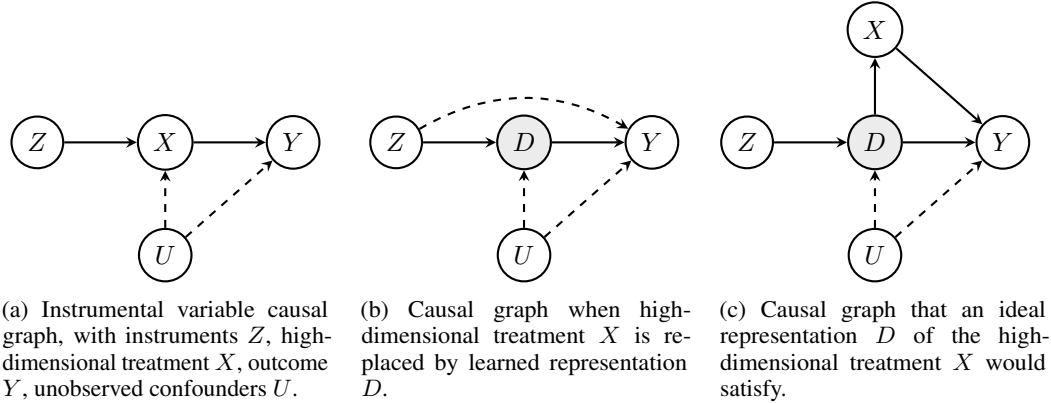
\begin{figure}[htpb]

    \begin{subfigure}[t]{0.31\textwidth}
    \centering
    \begin{tikzpicture}[node distance=1.7cm, thick, >=stealth]

        \node[draw, circle] (Z) {$Z$};
        \node[draw, circle, right of=Z] (X) {$X$};
        \node[draw, circle, right of=X] (Y) {$Y$};
        \node[draw, circle, below of=X, node distance=1.5cm] (U) {$U$};

        \draw[->] (Z) -- (X); %
        \draw[->] (X) -- (Y); %
        \draw[->, dashed] (U) -- (X); %
        \draw[->, dashed] (U) -- (Y); %

    \end{tikzpicture}
    \caption{Instrumental variable causal graph, with instruments $Z$, high-dimensional treatment $X$, outcome $Y$, unobserved confounders $U$.}\label{fig:iv}
    \end{subfigure}
    ~~~
    \begin{subfigure}[t]{0.31\textwidth}
    \centering
    \begin{tikzpicture}[node distance=1.7cm, thick, >=stealth]

        \node[draw, circle] (Z) {$Z$};
        \node[draw, circle, fill=gray!15, right of=Z] (D) {$D$};
        \node[draw, circle, right of=D] (Y) {$Y$};
        \node[draw, circle, below of=D, node distance=1.5cm] (U) {$U$};

        \draw[->] (Z) -- (D); %
        \draw[->] (D) -- (Y); %
        \draw[->, dashed] (U) -- (D); %
        \draw[->, dashed] (U) -- (Y); %
        \draw[->, dashed, bend left] (Z) to (Y); %

    \end{tikzpicture}
    \caption{Causal graph when high-dimensional treatment $X$ is replaced by learned representation $D$.}\label{fig:inv-iv}
    \end{subfigure}
    ~~~
    \begin{subfigure}[t]{0.31\textwidth}
        \begin{tikzpicture}[node distance=1.7cm, thick, >=stealth]

        \node[draw, circle] (Z) {$Z$};
        \node[draw, circle,  fill=gray!15, right of=Z] (D) {$D$};
        \node[draw, circle, above of=D, node distance=1.5cm] (X) {$X$};
        \node[draw, circle, right of=D] (Y) {$Y$};
        \node[draw, circle, below of=D, node distance=1.5cm] (U) {$U$};

        \draw[->] (Z) -- (D); %
        \draw[->] (D) -- (Y); %
        \draw[->] (D) -- (X); %
        \draw[->] (X) -- (Y); %
        \draw[->, dashed] (U) -- (D); %
        \draw[->, dashed] (U) -- (Y); %

    \end{tikzpicture}
    \caption{Causal graph that an ideal representation $D$ of the high-dimensional treatment $X$ would satisfy.}\label{fig:ideal-iv}
    \end{subfigure}
\caption{Omitted variable bias in instrumental variable analysis with learned treatment representations.}
\end{figure}

An ideal latent representation $D$ should satisfy the causal graph depicted in Figure~\ref{fig:ideal-iv}. In particular, the instrument $Z$ should not have a causal effect on $X$ that is not absorbed by the latent representation $D$. If the representation encodes all outcome-relevant information, then a direct edge from $X$ to $Y$ should not exist. However, the existence of such an edge does not invalidate the downstream instrumental variable analysis, and hence, it is not essential to exclude it.

\paragraph{Structural Equation Model.} To formalize our problem we will consider the following data generating process (structural causal model) for our observed random variables:
\begin{equation}\label{eqn:structural}
    \begin{aligned}
      D =~& A\cdot Z + U,~&  U \ci Z \\
      X =~& f(D, V),~&  V \ci Z \\
      Y =~& h(D) + \eta(U, V, \epsilon),~&  \epsilon \ci Z
    \end{aligned}
\end{equation}
where the random variables $U, V, D, \epsilon$ are latent and $A$ is an $r\times k$ matrix that captures the effect of the instruments $Z\in \mathbb{R}^k$ on a vector of latent decisions $D\in \mathbb{R}^{r}$. For convenience of notation, we will assume that $\E[U]=\E[V]=\E[\eta(U, V, \epsilon)]=0$.\footnote{Appropriate intercept constants need to be added to the equations in the absence of this convention.} $U$ represents the unobserved confounder that drives the elements of the treatment that are also driven by the instrument. $\epsilon$ represents an outcome noise variable and is allowed to be correlated with $U, V$. $D$ represents the aspects of the treatment $X$ that are affected by the instrument and $V$ represents the remaining aspects that describe the treatment $X$, but are independent of the instrument. In particular, we assume that the encoding/decoding between the latent representations and the observed treatment is invertible:
\begin{assumption}[Invertible Encoding]\label{ass:invertible_encoding}
    The function $f$ is invertible, and write the encoding function $e(X) = f^{-1}(X) = (D, V)$, i.e. there is a one-to-one correspondence between the high-dimensional treatment $X$ and the characteristics $(D, V)$ that describe the treatment.
\end{assumption}
From this perspective, $(D, V)$ can be thought as a non-linear decomposition of the treatment into the instrument-dependent and the instrument-independent components. We will further denote with $e_D(X)=D$ and $e_V(X)=V$ for the encodings of the treatment that return the corresponding components. Moreover, we assume that the transformation between the instrument to the latent representation $D$ is full rank.
\begin{assumption}[Full-Rank Latents]\label{ass:fr_a}
    Assume that the matrix $A$ has full row-rank and $\E[ZZ^\top]\succ 0$.
\end{assumption}
Note that the full rank assumption on $\E[ZZ^\top]$ can always be satisfied by a preprocessing step that applies a PCA transformation to the instruments and removes co-linear or almost co-linear instruments. 

\paragraph{Learning Good Interventions via Representations.} Given data containing observations $(Z, X, Y)$ stemming from such a structural equation model, our goal is to learn a soft intervention mapping $t(X)$, such that the average intervened outcome is larger than the original outcome. We will denote with $Y^{(X \leftarrow x)}$ the random outcome from the intervention where we fix the value of $X$ to be $x$. Thus we are searching for a soft intervention $t(X)$ such that:
\begin{align}
    \E\left[Y^{(X\leftarrow t(X))}\right] > \E[Y] 
\end{align}
Note that due to the one-to-one correspondence of $X$ with its decomposition, any such interventional outcome can equivalently be thought as an intervention on the latent components of the treatment, i.e. $Y^{(D\leftarrow e_D(x), V\leftarrow e_V(x))}$. Given the structural Equation~\eqref{eqn:structural}, the expected outcome under a soft intervention $t(X)$ can be written as:
\begin{align}
    \E\left[Y^{(X\leftarrow t(X))}\right] = \E[h(e_D(t(X))) + \eta(U, e_V(t(X)), \epsilon)]
\end{align}

We will identify such an intervention via the means of intervention on a learned representation. In particular, given observations, we will learn an encoding $\tilde{e}_D(X)=\tilde{D}$ that respects the properties in Equation~\eqref{eqn:structural} (potentially together with  a learned encoding $\tilde{e}_V(X) = \tilde{V}$) and a corresponding decoder $\tilde{f}(\tilde{D})$ (potentially also taking as input $\tilde{V}$) that maps the learned encoding back into a high-dimensional treatment. Subsequently, we will estimate an outcome improving direction $u$ in the learned representation space via instrumental variable analysis, viewing $\tilde{D}$ as the ``treatment'' and $Z$ as the instrument. We will apply the direction $u$ to the learned representations, i.e. $\tilde{D}+\alpha u$, for some scalar intervention amount $\alpha$. For ease of notation, we denote with $(\cdot)_{\alpha u}$ to be the corresponding random variable $(\cdot)$ after this intervention. Then decode back to the high-dimensional treatment space $X_{\alpha u}=\tilde{f}(\tilde{D} + \alpha u)$ (potentially $X_{\alpha u} = \tilde{f}(\tilde{D} + \alpha u, \tilde{V})$ if an encoding of $V$ was also learned). This process (depicted also visually in Figure~\ref{fig:intervention} and described algorithmically in Algorithm~\ref{alg:latent_perturbation}) defines our soft-intervention mapping, formally defined as:
\begin{align}
    t(X) = \tilde{f}(\tilde{e}_D(X) + \alpha u, \tilde{e}_V(X)),
\end{align}
with the second input of $\tilde{f}$ omitted if an encoding $\tilde{e}_V$ is not learned.

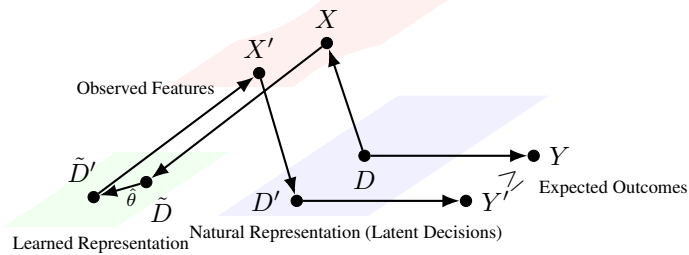
\begin{figure}[htpb]
    \centering
    \begin{tikzpicture}[
        x={(-0.6cm,-0.4cm)},
        y={(0.5cm,0cm)},
        z={(0cm,0.5cm)},
        arrow/.style={-{Latex}, thick},
        point/.style={circle, draw, fill=black, inner sep=1.5pt},
        annotate/.style={font=\scriptsize, align=center}
    ]
    
    \fill[blue!10,opacity=0.6] (-2,0,0) -- (2,0,0) -- (2,4,0) -- (-2,4,0) -- cycle;
    \node[below right,font=\scriptsize] at (2,-1,0) {Natural Representation (Latent Decisions)};
    
    \begin{scope}
        \clip (-2,-2,3) -- (2,-2,3) -- (2,2,3) -- (-2,2,3) -- cycle;
        \draw[red!10, thick, smooth, variable=\x, samples=50, fill=red!10, opacity=0.6] 
            plot[domain=-2:2] (\x,2,{0.5*sin(deg(\x))+3}) -- 
            plot[domain=2:-2] (\x,-2,{0.5*sin(deg(\x))+3});
    \end{scope}
    \node[above right,font=\scriptsize] at (2,-4,3) {Observed Features};
    
    \fill[green!10,opacity=0.6] (3,-1.5,2.5) -- (5.5,-1.5,2.5) -- (5.5,1.5,2.5) -- (3,1.5,2.5) -- cycle;
    \node[below right,font=\scriptsize] at (5.5,-1.5,2.5) {Learned Representation};
    
    \node[point, label={below:$D$}] (D) at (0,1.5,0) {};
    \node[point, label={above:$X$}] (X) at (0,.5,3) {};
    \node[point, label={below=.15cm, right=.2cm:$\tilde{D}$}] (Dtilde) at (4,0.5,2.5) {};
    \node[point, label={above=.15cm, left=.15cm:$\tilde{D}'$}] (DtildePrime) at (4.5,-0.3,2.5) {};
    \node[point, label={above:$X'$}] (Xprime) at (1.5,0.5,3.4) {};
    \node[point, label={left:$D'$}] (Dprime) at (1.5,1.5,0) {};

    \node[point, label={right:$Y$}] (Y) at (0,6,0) {};
    \node[point, label={right:$Y'$}] (Yprime) at (1.5,6,0) {};
    \node[rotate=40] (geq) at (.75,6.3,0) {$\geq$};
    \node[below right,font=\scriptsize] at (0.5,6.5,0) {Expected Outcomes};
    
    \draw[arrow] (D) -- (X);
    \draw[arrow] (X) -- (Dtilde);
    \draw[arrow] (Dtilde) -- (DtildePrime) node[right=.3cm,annotate] {$\hat{\theta}$};
    \draw[arrow] (DtildePrime) -- (Xprime);
    \draw[arrow] (Xprime) -- (Dprime);
    \draw[arrow] (D) -- (Y);
    \draw[arrow] (Dprime) -- (Yprime);

    \end{tikzpicture}
    \caption{Intervention on learned representation.}\label{fig:intervention}
\end{figure}

\begin{algorithm}
    \caption{Intervention in Latent Representation Space and evaluation}
    \label{alg:latent_perturbation}
    \begin{algorithmic}[1]
    
    \STATE \textbf{Autoencoder fitting.} Learn encoder $\tilde{e}$ and decoder $\tilde{f}$ of $X$ and using observed data $(Z, X, Y)$.
    \STATE \textbf{IV analysis.} Identify causal model $\tilde{h}(\tilde{D})$ using IV regression analysis with instrument $Z$, treatment $\tilde{D}\triangleq \tilde{e}_D(X)$ and outcome $Y$. Calculate average causal derivative $u=\E[\nabla_D \tilde{h}(\tilde{D})]$.

    \STATE \textbf{Encode.} Transform $X$ into latent representation $\tilde{D}$ using learned encoder $\tilde{D} = \tilde{e}_D(X)$
    \STATE \textbf{Perturb.} Apply perturbation in the latent space: $\tilde{D}_{\alpha u} = \tilde{D} + \alpha u$ where $\alpha$ is a scalar factor controlling perturbation magnitude.
    \STATE \textbf{Decode.} Map perturbed latent representation $\tilde{D}_{\alpha u}$ back to input space: $X_{\alpha u} = \tilde{f}(\tilde{D}_{\alpha u})$ (or $X_{\alpha u} = \tilde{f}(\tilde{D}_{\alpha u}, \tilde{e}_V(X))$ if the learned encoder also learns a representation of $V$).
    \STATE \textbf{Evaluate.} Apply the true decomposition $e(X_{\alpha u})=(D_{\alpha u}, V_{\alpha u})$ and evaluate outcome under intervention: $Y_{\alpha u} = h(D_{\alpha u}) + \eta(U, V_{\alpha u}, \epsilon)$.
    \STATE Compare average original outcome $Y$ to average perturbed outcome $Y_{\alpha u}$.
    \end{algorithmic}
\end{algorithm}

\section{Instrument Guided Representation Learning: The Linear Setting}

To make matters more concrete, we will start this analysis with the case where the structural equation model that is associated with the causal graph in Figure~\ref{fig:ideal-iv} contains only linear relationships:
\begin{equation}
\begin{aligned}
  D =~& A \cdot Z  + U,&  U\ci Z \\
  X =~& B \cdot D + B_{\perp} \cdot V,& V \ci Z\\
  Y =~& \theta^\top D + \eta(U, V, \epsilon),& \epsilon \ci Z
\end{aligned}   
  \label{eq:linear-model}
\end{equation}
where $B$ is an $m\times r$ dimensional matrix that maps the $k$ instrument-driven latent decisions $D$ to the observed high-dimensional treatments $X\in \mathbb{R}^m$ \emph{and is assumed to be full column rank}. $B_{\perp}$ is a matrix whose column space is orthogonal to the column space of $B$ and is also \emph{assumed to be full column rank}. $U$ corresponds to a random vector of latent unobserved confounders that also affect decisions and outcomes. $\theta$ is an $r$ dimension vector capturing the direct effects of the latent decisions on the outcome. We will assume that the matrix $A$ is of full row rank, i.e., we have more instruments $Z$ than latent decisions $D$, and the instruments vary these latent dimensions in a full-rank manner. 

Note that this setting falls under our general model since the function $f(D, V)=B D + B_\perp V$ is invertible. In particular, by the orthogonality of the column space of the two matrices and the fact that they are both full column rank, we have that:
\begin{align}
    e_D(X) \triangleq~& B^+ X = D & e_V(X) \triangleq~& B_\perp^+ X = V
\end{align}
where $B^+$ denotes the Moore-Penrose pseudo-inverse of a matrix and which is a left inverse for full column rank matrices, i.e. $B^+ = (B^\top B)^{-1} B^\top$.
Moreover, note that we could have equivalently defined the structural equation for $X$ as:
\begin{align}
  X =~& B \cdot D + V,& V \ci Z
  \label{eqn:linear-model-equivalence}
\end{align}

We could always split the second part into $B \cdot V + B_\perp V$ and redefine $D\to D + V$, or equivalently redefine $U \to U + V$. The formulation in Equation~\ref{eq:linear-model} is chosen for notational convenience.

Our target quantity of interest is the overall effect $\theta$ of the latent factors $D$ on the outcome $Y$. If we could identify the latent factors $D, V$ from the observed variables, then we could simply use the improving intervention direction $u=\theta / \|\theta\|$. In this case, our improving intervention corresponds to $t(X) = B (e_D(X) + \alpha u) + B_\perp V = X + \alpha B u$, with $D_{\alpha u} = D + \alpha u$ and $V_{\alpha u} = V$, which would lead to an internventional outcome of $Y_{\alpha u} = \theta^\top (D + \alpha u) + \eta(V, U, \epsilon)$, hence:
\begin{align}
    \E[Y_{\alpha u}] = \E[Y] + \alpha \theta^\top u = \E[Y] + \alpha \|\theta\|
\end{align}
Note that in this linear setting, to perform the intervention, it suffices that solely learn a linear encoder $e_D(X)=B^+ X$, since the intervention can be performed implicitly as $t(X) = X + \alpha B u$, which would not require learning an encoding for $V$. Hence we will take this approach in the remainder of this section.
We will show that in this setting it is feasible to identify improving interventions, even though the natural latent decomposition $D$ might not be necessarily identifiable. We will show that we can always identify a representation $\tilde{D}$, such that $\tilde{D}$ is an invertible linear transformation of $D$. 

Note that in this setting, a linear regression of $X$ on $Z$ uncovers the matrix $C=B\cdot A$ since our structural equation model implies the regression equation:
\begin{align}\label{eqn:linear_X}
    X =~& B\cdot A\cdot Z + B_{\perp}\cdot V + B\cdot U, & \E[B_{\perp}\cdot V + B\cdot U\mid Z] = B_{\perp} \E[V] + B\cdot\E[U]=~& 0
\end{align}
Moreover, since $A$ is full row rank, the column space of $C$ can be proven to be the same as the column space of $B$. Thus, if we perform a \emph{thin} singular value decomposition of $C={\cal U} \Sigma {\cal V}^\top$, then the $m\times k$ matrix of left eigenvectors ${\cal U}$ can be used as matrix $\hat{B}$, as they correspond to an orthonormal basis of the column space of $C$ and, therefore, also of the column space of $B$. Consequently, $\Sigma {\cal V}^\top$ can be used as $\hat{A}$. Subsequently, we can take $\tilde{D} = \hat{B}^\top X = \hat{B}^\top B D$. Since the column space of $\hat{B}$ is the same as the column space of $B$, the square matrix $P=\hat{B}^\top B$ is invertible. 
An intervention in the direction of $u$ in the learned representation can be thought of as an intervention in the direction of $P^{-1}u$ in the natural representation. An instrumental variable regression estimate, using $Z$ as the instrument, $\tilde{D}$ as the treatment, and $Y$ as the outcome, is characterized as the solution to the moment restriction: $\E[Z (Y - \tilde{\theta}^\top \tilde{D})] = 0$. It can be shown that as long as matrix $A$ is full row rank and the instruments are not co-linear, i.e. $\E[ZZ^\top] \succ 0$, then the above system has a unique solution, $\tilde{\theta}=(P^{-1})^\top\theta$, which is the correct causal effect of interventions on $\tilde{D}$. We will then learned representation space in the direction $u=\tilde{\theta}/\|\tilde{\theta}\|$. The implied intervention in the $X$-space is $t(X) = X + \alpha \hat{B}u$.
Algorithm~\ref{alg:svd} formalizes this procedure and the following theorem formalizes these arguments and provides the outcome improvement guarantee for this intervention.

\begin{theorem}\label{thm:linear-id}
    Under the linear structural equation model in Equation~\eqref{eq:linear-model} and assuming $B, B_\perp$ have full column rank and Assumption~\ref{ass:fr_a} holds, then the representation and intervention produced by the LIRR algorithm satisfy: $\tilde{D}=P D$, for the invertible matrix $P\triangleq \hat{B}^\top B$. Moreover, $\tilde{\theta} = (P^{-1})^\top \theta$ and the interventional outcome satisfies the guaranteed improvement property:
    \begin{align*}
        \E[Y_{\alpha u}] = \E[Y] + \alpha \|(P^{-1})^\top \theta\|
    \end{align*}
    \label{thm: svd}
\end{theorem}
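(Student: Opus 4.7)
\emph{Plan.} I would prove the three conclusions in order—identification of $\tilde{D}$ in terms of $D$, identification of $\tilde{\theta}$, and the outcome formula—reducing the entire argument to linear algebra of the pair $(B,\hat B)$.

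First, from equation~\eqref{eqn:linear_X}, the population linear regression of $X$ on $Z$ recovers $C = BA$. Because $B$ is full column rank and $A$ is full row rank by Assumption~\ref{ass:fr_a}, the column space of $C$ coincides with that of $B$. The thin SVD $C = \mathcal{U}\Sigma\mathcal{V}^\top$ therefore yields $\hat{B}\triangleq \mathcal{U}$ with orthonormal columns spanning $\text{col}(B)$. Hence there is a unique $r\times r$ matrix with $B = \hat{B}P$, and left-multiplying by $\hat{B}^\top$ (using $\hat{B}^\top\hat{B}=I_r$) gives $P = \hat{B}^\top B$; $P$ is invertible since both $B$ and $\hat{B}$ have rank $r$. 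Since $\text{col}(\hat{B})\subseteq \text{col}(B)$ is orthogonal to $\text{col}(B_\perp)$, we have $\hat{B}^\top B_\perp = 0$, so $\tilde{D} = \hat{B}^\top X = \hat{B}^\top(BD + B_\perp V) = P D$.

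Next, substituting $D = P^{-1}\tilde{D}$ into the outcome equation gives $Y = \big((P^{-1})^\top\theta\big)^\top \tilde{D} + \eta(U,V,\epsilon)$. The latent noise $\eta$ is a zero-mean function of variables independent of $Z$, so $\E[Z\eta] = 0$ and $\tilde{\theta}^\star = (P^{-1})^\top\theta$ satisfies the moment condition $\E[Z(Y - \tilde{\theta}^\top\tilde{D})] = 0$. Uniqueness follows from $\E[Z\tilde{D}^\top] = \E[ZZ^\top]\,A^\top P^\top$ having full column rank $r$, a consequence of $\E[ZZ^\top]\succ 0$, $A$ full row rank, and $P$ invertible. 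Finally, the intervention in $X$-space is $X_{\alpha u} = X + \alpha\hat{B} u$ with $u = \tilde{\theta}/\|\tilde{\theta}\|$. Using $B = \hat{B}P$, one computes $B^+ = P^{-1}\hat{B}^\top$, so $B^+\hat{B} = P^{-1}$; and $B_\perp^\top\hat{B} = 0$ gives $B_\perp^+\hat{B}=0$. Hence $D_{\alpha u} = D + \alpha P^{-1} u$ and $V_{\alpha u} = V$, leading to $Y_{\alpha u} = Y + \alpha\theta^\top P^{-1}u$. The choice of $u$ makes $\theta^\top P^{-1}u = \|(P^{-1})^\top\theta\|^2/\|(P^{-1})^\top\theta\| = \|(P^{-1})^\top\theta\|$, which upon taking expectations gives the claim.

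The main obstacle is really only the very first step: justifying that the thin-SVD basis $\hat{B}$ spans exactly $\text{col}(B)$. This is the single linear-algebraic fact from which the invertibility of $P$, the orthogonality $\hat{B}^\top B_\perp = 0$, and the pseudo-inverse identity $B^+\hat{B} = P^{-1}$ all follow simultaneously; once in place, the remainder of the proof is bookkeeping.
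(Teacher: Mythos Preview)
Your proposal is correct and follows essentially the same approach as the paper: recover $C=BA$ by regression, use the column-space equality $\text{col}(C)=\text{col}(B)$ to get $\hat{B}$ with $B=\hat{B}P$ and $P=\hat{B}^\top B$ invertible, then verify the moment condition and compute the interventional outcome via $D_{\alpha u}=D+\alpha P^{-1}u$. Your derivation of $\tilde{D}=PD$ via $\hat{B}^\top B_\perp=0$ is a slightly more direct variant of the paper's computation of $D=B^+X=P^{-1}\tilde{D}$, but the logic is identical.
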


\begin{algorithm}\label{algo:linear}
    \caption{Linear Instrument Regularized Representation (LIRR) and Intervention}
    \label{alg:svd}
    \begin{algorithmic}[1]
    \STATE {\bf Input:} magnitude of intervention $\alpha$
    \STATE Run linear regression of $X$ on $Z\in \mathbb{R}^k$, to estimate a coefficient matrix $C$
    \STATE Calculate the \emph{thin} SVD decomposition of $C={\cal U}\Sigma {\cal V}^\top$, keeping only the top $k$ singular values
    \STATE Define $\hat{B}={\cal U}$ and $\hat{A}=\Sigma {\cal V}^\top$ and $\tilde{D} = \tilde{e}_D(X)=\hat{B}^\top X$
    \STATE Run linear IV regression solving moment $\E[Z(Y-\tilde{\theta}^\top \tilde{D})]=0$
    \STATE Let $u=\tilde{\theta} / \|\tilde{\theta}\|$ and perform intervention on learned representation space $\tilde{D}_{\alpha u} = \tilde{D} + \alpha u$
    \STATE Encode back to X-space intervention of $X_{\alpha u} = X + \alpha \hat{B}^\top u$
    \end{algorithmic}
\end{algorithm}

The LIRR algorithm offers substantial improvements over typical approaches to dimensionality reduction when one is faced with high-dimensional treatments and low dimensional instruments. For instance, if instead one performed a typical dimensionality reduction approach of taking the top-$k$ principal components of the treatment $X$ and using that as a learned representation (with $k$ being the dimension of the instrument), then this top-$k$ components can miss many of the dimensions that the instrument is varying and hence lead to an erroneous downstream intervention via the IV analysis. Such a stark comparison is presented in Figure~\ref{fig:svd-pca}, where we depict the outcome pre and post intervention in a synthetic example where we know the ground truth. While our LIRR approach consistently produces intervened outcomes with larger values, the PCA followed by IV approach produces worse outcomes. 
\begin{figure}[H]
\centering
\includegraphics[scale=0.3]{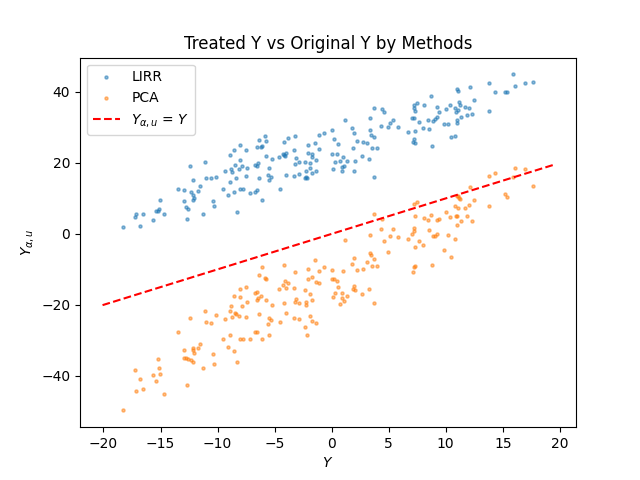}
\caption{Comparing LIRR with a PCA followed by IV approach to constructing improving interventions. Data is generated following the variant of the linear SEM presented in Equations~\ref{eq:linear-model}\&\ref{eqn:linear-model-equivalence}, where $U$ and $V$ are mixtures of independently sampled uniform random variables and $\eta(U,V,\epsilon)=U+\epsilon$, where $\epsilon$ is Gaussian.}
\label{fig:svd-pca}
\end{figure}

\section{Instrument Guided Representation Learning: The Non-Linear Setting}

We will now investigate the general setting introduced in Equation~\eqref{eqn:structural}. In this non-linear setting, we will require some further assumptions on the latent factors. In particular, we will be assuming that the latent components $D$ are independent of the orthogonal components $V$ that constitute $X$ that are not driven by the instrument. In particular, we will assume the slightly stronger property of joint independence of $Z, U, V$, which implies that $D\ci V$.

\begin{assumption}[Joint Independence]\label{ass:ic} Assume that $Z\ci U\ci V$ (jointly independent).
\end{assumption}

Moreover, we will assume the regularity conditions that the mixing function $f$ is differentiable and that the instrument is supported on an open subset of $\mathbb{R}^k$. For convenience of notation, we will also assume that $\E[Z]=0$, which can be achieved by a pre-processing step that demeans the instruments.

\begin{assumption}[Differentiable Decoding Function]\label{ass:differentiable_f}
    $f$ is a differentiable function with uniformly bounded derivatives. 
\end{assumption}

\begin{assumption}\label{ass:supp_z}
    $\E[Z]=0$ and the support of $Z$, ${\cal Z}$, is an open subset of $\mathbb{R}^k$.
\end{assumption}

Moreover, we will make a completeness assumption on the strength of the instrument. Completeness is a standard assumption for non-parametric identification using instrumental variable analysis \cite{cui2024semiparametric}.

\begin{assumption}[Bounded Completeness]\label{ass:bounded_completeness}
    $D$ is bounded complete for Z, that is, for all bounded real functions $h$, we have that:
    \begin{align*}
        \E[h(D)|Z] = 0 \quad \text{a.s.} \quad \Rightarrow \quad h(D) = 0 \quad \text{a.s.}
    \end{align*}
\end{assumption}
We discuss sufficient conditions for the bounded completeness assumption in the Appendix (Lemma~\ref{lemma:conditions_completeness}). In particular, it involves characteristic function assumptions that have also been typical in the identifiable latent factor literature \cite{lu2021invariant}.
\begin{theorem}\label{thm:non-linear-id} 
Suppose that the data generating process follows the SEM described in Equation \ref{eqn:structural}, and satisfies Assumptions~\ref{ass:invertible_encoding}~\&~\ref{ass:fr_a}~\&~\ref{ass:ic}~\&~\ref{ass:differentiable_f}~\&~\ref{ass:supp_z}~\&~\ref{ass:bounded_completeness}. Let $(\tilde{D}, \tilde{V}):= (\tilde{e}_{D}(X), \tilde{e}_{V}(X))=\tilde{e}(X)$ denote the learned representations. Consider encoder-decoder pairs with perfect reconstruction, i.e. $X = \tilde{f}\circ\tilde{e}(X)$. Then, for the solution $\tilde{e}, \tilde{f}$, and full row rank matrix $\tilde{A}$ that minimizes the objective function 
    \begin{align}\label{eqn:objective}
        \E[\|\tilde{e}_D(X) - \tilde{A} Z\|^2]
    \end{align}
subject to the following constraints:
\begin{itemize}
    \item $\tilde{e}$ is a differentiable function with uniformly bounded derivatives.
    \item $\tilde{A}$ has full row rank.
    \item $\tilde{D} = \tilde{A}Z + \tilde{U}$ with $\tilde{U}\ci Z$ and $\E[\tilde{U}]=0$. 
\end{itemize}
we have that, with probability $1$, $\tilde{D} = P D$ and $\tilde{U}=PU$, for $P=\tilde{A}A^+$. Moreover, the matrix $P$ is invertible.

\end{theorem}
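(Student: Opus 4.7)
The plan is to work in the coordinates of the natural latents $(D,V)$, show that the $\tilde{U}\ci Z$ constraint forces the learned encoder to factor linearly through $D$, and then read off $P$ from the matrix identity $\tilde{A}=PA$. Writing $g:=\tilde{e}_D\circ f$, Assumption~\ref{ass:invertible_encoding} and the perfect-reconstruction hypothesis give $\tilde{D}=g(D,V)$ with $g$ differentiable (by Assumption~\ref{ass:differentiable_f} and the bounded-derivative constraint on $\tilde{e}$), so the residual is $\tilde{U} = g(AZ + U, V) - \tilde{A}Z$. First I would show that $\tilde{A}$ lies in the row space of $A$: picking any $z\in \supp{Z}$ and any $w\in\ker A$ small enough that $z+w\in \supp{Z}$ (possible by Assumption~\ref{ass:supp_z}), the constraint $\tilde{U}\ci Z$ forces $g(Az+U,V)-\tilde{A}z$ and $g(A(z+w)+U,V)-\tilde{A}(z+w)$ to have the same distribution. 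Since $Aw=0$, this reduces to a probability measure being invariant under translation by $-\tilde{A}w$, which by a characteristic-function argument (the characteristic function is continuous and equals $1$ at the origin, so cannot vanish identically in a neighbourhood) forces $\tilde{A}w=0$. Hence $\ker A\subseteq \ker \tilde{A}$, and together with Assumption~\ref{ass:fr_a} this yields $\tilde{A}=PA$ for $P:=\tilde{A}A^+$.

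The core step is to show $g(d,v) = Pd + r(v)$ for some function $r$. Define $h(d,v):=g(d,v)-Pd$. The easy half is a first-moment identity: $\E[\tilde{U}\mid Z]=0$ combined with joint independence (Assumption~\ref{ass:ic}) and $\E[D\mid Z]=AZ$ gives $\E[m(D)-PD\mid Z]=0$ where $m(d):=\E_V[g(d,V)]$; Assumption~\ref{ass:bounded_completeness} then yields $m(D)=PD$ almost surely, so $\E_V[h(D,V)]=0$. The hard half lifts this averaged identity to a pointwise one. Using the full characteristic-function form $\E[e^{it^\top \tilde{U}}\mid Z]=\psi(t)$, substituting $\tilde{A}=PA$, and changing variables $a=Az$, one obtains that $\E\!\left[e^{it^\top(PU+h(a+U,V))}\right]$ is constant in $a$ ranging over the open set $A\cdot\supp{Z}\subseteq \R^r$. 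Differentiating in $a$ (legitimate by the bounded-derivative constraint on $\tilde{e}$) and applying Fourier inversion yields conditional-mean identities of the form $\E[\nabla_D h(D,V)\mid \tilde{U}]=0$; iterating through higher-order derivatives and invoking the joint independence in Assumption~\ref{ass:ic} to separate the $U$-dependence from the $V$-dependence ultimately forces $\nabla_D h(d,v)=0$ almost surely, so $h$ depends only on $v$. This pointwise upgrade is the main obstacle and is the step that relies most heavily on Assumption~\ref{ass:bounded_completeness} and the open-support condition in Assumption~\ref{ass:supp_z}.

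Finally, with $g(d,v)=Pd+r(v)$ in hand, the residual simplifies to $\tilde{U}=PU+r(V)$. The constraint $\E[\tilde{U}]=0$ gives $\E[r(V)]=0$, and because $U\ci V$ with both centered the cross term vanishes, so the objective decomposes as $\E\|\tilde{U}\|^2 = \E\|PU\|^2 + \E\|r(V)\|^2$. A minimizer must therefore set $r(V)=0$ almost surely, giving $\tilde{D}=PD$ and $\tilde{U}=PU$ as claimed. Invertibility of $P=\tilde{A}A^+$ follows because $\tilde{A}$ has full row rank by hypothesis and $A$ has full row rank by Assumption~\ref{ass:fr_a}, so $P$ itself must have full row rank; with the dimension matching $\tilde{r}=r$ implied by the perfect-reconstruction invertibility of $\tilde{e}$, $P$ is square and hence invertible.
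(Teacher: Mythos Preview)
Your first paragraph is correct and is actually a cleaner route to $\tilde{A}=PA$ than the paper's (the paper reaches $\tilde{A}=\tilde{A}A^{+}A$ only \emph{after} establishing the averaged linearity of $\tilde q_1$). There is a small gap in your ``easy half'': Assumption~\ref{ass:bounded_completeness} is \emph{bounded} completeness, and $m(D)-PD$ need not be bounded. The paper handles this by first differentiating the identity $\tilde{A}z=\E[m(Az+U)]$ in $z$ to obtain $\E[m'(D)-P\mid Z]=0$ with $m'$ bounded, and only then invoking completeness.

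The real problem is the ``hard half'': the claim that the feasibility constraints alone force $g(d,v)=Pd+r(v)$ is \emph{false}, so your characteristic-function sketch cannot be completed. Counterexample: take $r=k=1$, $A=1$, $U\sim N(0,1)$, $V\sim\text{Unif}[0,2\pi)$, $f=\text{id}$, and set $\tilde{e}(d,v)=(d+\tfrac12\sin(d+v),\,v)$, $\tilde{A}=1$. This $\tilde{e}$ is a smooth diffeomorphism with bounded derivatives, and $\tilde{U}=U+\tfrac12\sin(Z+U+V)$ has conditional characteristic function $\phi_U(t)\cdot\frac{1}{2\pi}\int_0^{2\pi}e^{(it/2)\sin w}\,dw=\phi_U(t)J_0(t/2)$, independent of $z$ (periodicity of $V$ erases the $z$-dependence). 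So all constraints hold, yet $g(d,v)=d+\tfrac12\sin(d+v)$ is not of the form $Pd+r(v)$.

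The paper bypasses the pointwise upgrade entirely. Once $\E_V[g(d,V)]=Pd$ is known, one has the Pythagorean decomposition
\[
\E\bigl\|\tilde{D}-\tilde{A}Z\bigr\|^2 \;=\; \E\bigl\|g(D,V)-PD\bigr\|^2 \;+\; \E\bigl\|PD-\tilde{A}Z\bigr\|^2,
\]
the cross term vanishing because $\E[g(D,V)-PD\mid D,Z]=0$ (using $V\ci Z\mid D$, a consequence of joint independence). The second term is $\E\|PU\|^2$, fixed once $\tilde{A}$ is; the paper then exhibits the alternative feasible encoder $\tilde{e}'(x)=(\tilde{A}A^{+}e_D(x),e_V(x))$, which zeroes the first term while keeping the second, so any optimum must already satisfy $g(D,V)=PD$ a.s. In the counterexample the objective is $1+\tfrac14\cdot\tfrac12=1.125>1$, so it is feasible but not optimal---exactly as the theorem predicts. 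The point is that minimization, not the independence constraint, is what kills the $d$-dependence of $h$; your decomposition $\E\|PU\|^2+\E\|r(V)\|^2$ only applies after a structural fact that the constraints do not deliver.
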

\begin{lemma}\label{lemma:identifiability_V}
    Suppose the assumptions of Theorem \ref{thm:non-linear-id} hold, and additionally impose the following constraints on the learned functions $\tilde{e}, \tilde{f},\tilde{A}$ that minimize the objective in Equation ~\ref{eqn:objective}: 
    \begin{itemize}
        \item $\tilde{D} \ci \tilde{V}$
        \item $\tilde{e}$ is an invertible function when restricted to inputs in the image of $f$ and $\tilde{f}\circ \tilde{e}(x) = x$ for all $x\in \text{Im}(f)$.
    \end{itemize}
Let $q\defeq \tilde{e}\circ f$ and $q^{-1}\defeq e \circ \tilde{f}$. Let $q_2$ denote the $\tilde{V}$-component of the output of $q$ and $q_2^{-1}$ the $V$-component of the output of $q^{-1}$.
Then we have that $(\tilde{D},\tilde{V})=q(D,V)$ and $(D,V)=q^{-1}(\tilde{D}, \tilde{V})$, almost surely. Moreover, it must also hold with probability 1 that:
    \begin{itemize}
        \item $\tilde{V}=q_2(D, V)=(\tilde{e}\circ f)_2(D, V)$\footnote{With $(\tilde{e}\circ f)_2$ we denote the $V$ component of the output of the function $\tilde{e}\circ f_0$.} with the property that for all $d,d'\in {\cal D}$: $$\text{Law}(q_2(d,V))=\text{Law}(q_2(d',V)).$$
        \item $V=q_2^{-1}(D, V)=(e\circ \tilde{f})_2(\tilde{D}, \tilde{V})$ with the property that for all $d,d'\in {\cal \tilde{D}}\defeq \{P d: d\in {\cal D}\}$: $$\text{Law}(q_2^{-1}(d,\tilde{V}))=\text{Law}(q_2^{-1}(d',\tilde{V})).$$
    \end{itemize}
\end{lemma}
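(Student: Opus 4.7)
My plan is to first unpack the functional identities $(\tilde{D},\tilde{V})=q(D,V)$ and $(D,V)=q^{-1}(\tilde{D},\tilde{V})$ directly from the definitions, and then extract the law-invariance conclusions via a short conditional-probability argument. The first identity follows immediately: since $e=f^{-1}$ gives $X=f(D,V)$, we have $(\tilde{D},\tilde{V})=\tilde{e}(X)=\tilde{e}(f(D,V))=q(D,V)$. The second comes from the perfect-reconstruction hypothesis $\tilde{f}\circ\tilde{e}(x)=x$ on $\text{Im}(f)$: since $(\tilde{D},\tilde{V})=\tilde{e}(X)$, we get $\tilde{f}(\tilde{D},\tilde{V})=X$ and hence $(D,V)=e(X)=e(\tilde{f}(\tilde{D},\tilde{V}))=q^{-1}(\tilde{D},\tilde{V})$. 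A key structural fact I would then invoke is that, by Theorem~\ref{thm:non-linear-id}, $\tilde{D}=PD$ for an invertible matrix $P$, which has two consequences used below: $q_1(D,V)=PD$ depends only on $D$, and $\sigma(\tilde{D})=\sigma(D)$.

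For the invariance of $\text{Law}(q_2(d,V))$ I would combine two independences. First, $D\ci V$ follows from the joint-independence Assumption~\ref{ass:ic}, because $D=AZ+U$ is a function of $(Z,U)$ and $V$ is jointly independent of $(Z,U)$. Second, $D\ci\tilde{V}$ follows from the added constraint $\tilde{D}\ci\tilde{V}$ together with $\sigma(D)=\sigma(\tilde{D})$. Fixing any measurable $B$, the identity $\tilde{V}=q_2(D,V)$ combined with $D\ci V$ yields
\begin{equation*}
    \Pr(\tilde{V}\in B\mid D=d)=\Pr(q_2(d,V)\in B\mid D=d)=\Pr(q_2(d,V)\in B),
\end{equation*}
while $D\ci\tilde{V}$ gives $\Pr(\tilde{V}\in B\mid D=d)=\Pr(\tilde{V}\in B)$, which does not depend on $d$. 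Equating the two delivers that $d\mapsto\Pr(q_2(d,V)\in B)$ is constant, proving the invariance of $\text{Law}(q_2(d,V))$.

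The mirror claim for $q_2^{-1}$ follows by swapping roles: $\tilde{D}\ci V$ holds because $\tilde{D}=PD$ and $D\ci V$, while $\tilde{D}\ci\tilde{V}$ is the imposed constraint. Starting from $V=q_2^{-1}(\tilde{D},\tilde{V})$, the same two-line conditional computation gives $\Pr(q_2^{-1}(d,\tilde{V})\in B)=\Pr(V\in B)$ for each $d\in\tilde{\mathcal{D}}$. The main subtlety I anticipate is the bookkeeping between ``for $P$-a.e.\ $d$'' and ``for every $d$'' in the support that the lemma states: the conditional-probability step naturally yields only the almost-sure version, and upgrading to every point of $\mathcal{D}$ or $\tilde{\mathcal{D}}$ will lean on the differentiability of $f$ and $\tilde{e}$ (Assumption~\ref{ass:differentiable_f}), which makes $d\mapsto\text{Law}(q_2(d,V))$ weakly continuous, so that a.s.\ constancy extends to pointwise constancy on the (open) support guaranteed by Assumption~\ref{ass:supp_z} and the structure $D=AZ+U$.
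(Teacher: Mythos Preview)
Your proposal is correct and follows essentially the same approach as the paper: both deduce $D\ci\tilde{V}$ from the imposed constraint $\tilde{D}\ci\tilde{V}$ together with $\tilde{D}=PD$ (from Theorem~\ref{thm:non-linear-id}), combine this with $D\ci V$ to conclude that $\text{Law}(q_2(d,V))$ does not depend on $d$, and then mirror the argument for $q_2^{-1}$. Your explicit conditional-probability computation just unpacks the one-line implication the paper asserts, and you are in fact more careful than the paper about the a.e.\ versus pointwise-on-$\mathcal{D}$ distinction, which the paper's proof does not address.
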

\begin{remark}
    Note that the assumptions that $\E[Z]=0$ and $\E[ZZ']\succ 0$ are without loss of generality as we can always pre-process $Z$ by centering it and removing co-linear instruments. Moreover, in practice the assumption that $\tilde{D}=\tilde{A} Z + \tilde{U}$, with $\tilde{U}\ci Z$ and $\E[\tilde{U}]=0$ can be achieved by minimizing a square loss with an intercept, i.e.
    \begin{align*}
        \min_{e, f, A, c: e, f \text{invertible}, e\circ f=\text{identity}} \E[\|e_D(X) - A Z - c\|^2]
    \end{align*}
    and then defining $\tilde{D}=\tilde{e}_D(X) \triangleq e_D(X)- c$, $\tilde{f} = f + c$.
\end{remark}

Subsequently, we identify an intervention as described in Algorithm~\ref{alg:latent_perturbation}. In particular, we will run an IV analysis, with $Z$ as the instrument, $\tilde{D}$ as the treatment, and $Y$ as the outcome, to estimate a causal model in representation space by finding a solution to the conditional moment restrictions:
\begin{align}\label{eqn:npiv-tilde}
    \E[Y - \tilde{h}(\tilde{D}) \mid Z] = 0
\end{align}
Note that since $\tilde{D}=PD$ and since $\E[Y\mid Z] = \E[h(D) \mid Z]$, we have by the completeness assumption that:
\begin{align*}
    \E[h(D) - \tilde{h}(P D)\mid Z] = 0 \Rightarrow h(D) = \tilde{h}(P D) \text{ a.s. } \implies h(P^{-1} \tilde{D}) = \tilde{h}(\tilde{D}) \text{ a.s. }
\end{align*}
If for instance, $h$ is assumed to be linear, then $\tilde{h}$ is also a linear function and it suffices to run a linear instrumental variable analysis (e.g. two-stage-least-squares). If $h$ is non-linear, then we calculate the average derivative of $\tilde{h}$, i.e. $$\tilde{\theta}=\E[\nabla_{\tilde{D}} \tilde{h}(\tilde{D})]=(P^{-1})^{\top} \E[\nabla_{D} h(D)]$$ and perform the intervention $$u=\tilde{\theta}/\|\tilde{\theta}\|$$ as described in Algorithm~\ref{alg:latent_perturbation}. In finite samples, recently introduced doubly robust methods for estimation of average derivatives of solutions to non-parametric IV problems can be used \cite{bennett2022inference,bennett2023source}. 

\begin{theorem}\label{thm:non_linear_improvement}
    Assume that:
    \begin{align*}
        Y = h(D) + \eta(U, V, \epsilon), \quad \epsilon\ci\{Z,U,V\}
    \end{align*}
    and that $h$ is twice differentiable with a bounded second derivative. Let $\tilde{e}, \tilde{f}, \tilde{A}$ be an optimal solution as prescribed in Lemma~\ref{lemma:identifiability_V} with the extra constraint that:
    \begin{align*}
        \tilde{U} \ci \tilde{V} \ci Z \tag{joint independence}
    \end{align*}
    and assume that the assumptions of Lemma~\ref{lemma:identifiability_V} are satisfied. Furthermore, assume that the variable $D$ has full support in $\mathbb{R}^r$, i.e. ${\cal D}=\mathbb{R}^r$. Then setting $u=\tilde{\theta}/\|\tilde{\theta}\|$, in Algorithm~\ref{alg:latent_perturbation}, with $\tilde{\theta}=\E[\nabla_{\tilde{D}} \tilde{h}(\tilde{D})]$ and $\tilde{h}$ the solution to the conditional moment restriction problem in Equation~\eqref{eqn:npiv-tilde}, we have that:
    \begin{align*}
        \E[Y_{\alpha u} - Y] = \alpha \|(P^{-1})^\top\E[\nabla_D h(D)]\| + O(\alpha^2)
    \end{align*}
\end{theorem}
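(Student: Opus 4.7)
The plan is to combine the identifiability result of Lemma \ref{lemma:identifiability_V} with a standard NPIV completeness argument, and then carefully track the intervention through the composition $q^{-1} = e \circ \tilde{f}$. From Theorem \ref{thm:non-linear-id} and Lemma \ref{lemma:identifiability_V}, I would use the invertible matrix $P$ with $\tilde{D} = PD$, $(D,V) = q^{-1}(\tilde{D}, \tilde{V})$, and the law-invariance property $\mathrm{Law}(q_2^{-1}(d, \tilde{V})) = \mathrm{Law}(V)$ for every $d \in \tilde{\mathcal{D}}$. Since the first block of $q^{-1}$ is the linear map $\tilde{d} \mapsto P^{-1}\tilde{d}$, the latent intervention $\tilde{D}_{\alpha u} = \tilde{D} + \alpha u$ translates exactly to $D_{\alpha u} = D + \alpha P^{-1} u$, while $V_{\alpha u} = q_2^{-1}(PD + \alpha u, \tilde{V})$. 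I would then apply an NPIV argument: $\eta(U,V,\epsilon)$ has mean zero and is independent of $Z$ (using $\epsilon\ci\{Z,U,V\}$ and $Z\ci U\ci V$), so $\E[Y\mid Z] = \E[h(P^{-1}\tilde{D}) \mid Z]$; combined with $\E[Y-\tilde h(\tilde{D})\mid Z]=0$ and bounded completeness (inherited for $\tilde D$ by invertibility of $P$), this gives $\tilde{h}(\tilde D)=h(P^{-1}\tilde D)$ a.s. Differentiating and averaging would yield $\tilde\theta = (P^{-1})^\top \E[\nabla_D h(D)]$.

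The hard part is showing that the noise channel is left untouched by the intervention, i.e.\ $\E[\eta(U, V_{\alpha u}, \epsilon)] = \E[\eta(U, V, \epsilon)]$ exactly. My plan is to leverage the extra constraint $\tilde U \ci \tilde V \ci Z$ together with $\tilde U = PU$, which gives $\tilde V \ci (U,Z)$; combined with $\epsilon \ci (Z,U,V)$ and the fact that $\tilde V$ is a deterministic function of $(Z,U,V)$, this should upgrade to the joint independence $\tilde V \ci (U,Z,\epsilon)$, and therefore $\tilde V \ci (D, U, \epsilon)$ since $D = AZ + U$. I would then condition on $(D,U,\epsilon)$: the distribution of $\tilde V$ is unaltered, so $V_{\alpha u}$ and $V$ have conditional laws equal to $\mathrm{Law}(q_2^{-1}(PD + \alpha u, \tilde V))$ and $\mathrm{Law}(q_2^{-1}(PD, \tilde V))$ respectively, each under the unconditional $\tilde V$. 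By the law-invariance property from Lemma \ref{lemma:identifiability_V}, these two laws coincide, so the conditional expectations of $\eta(u,\cdot,e)$ against them agree, and integrating out yields exact cancellation.

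Finally, I would Taylor expand $h$ using the uniform bound on its second derivative to get $h(D + \alpha P^{-1}u) = h(D) + \alpha(P^{-1}u)^\top \nabla_D h(D) + O(\alpha^2)$ pointwise, with remainder bounded uniformly in $D$. Taking expectations and combining with the noise-term cancellation yields $\E[Y_{\alpha u} - Y] = \alpha\, u^\top \tilde\theta + O(\alpha^2)$, and the choice $u = \tilde\theta/\|\tilde\theta\|$ turns the linear term into $\alpha\|\tilde\theta\| = \alpha\,\|(P^{-1})^\top\E[\nabla_D h(D)]\|$. The two main technical subtleties are (a) tracking which independences survive the nonlinear reparametrization in order to apply the law invariance conditional on the appropriate sigma-algebra, and (b) exploiting the exact linearity of the first component of $q^{-1}$ so that $D_{\alpha u} = D + \alpha P^{-1} u$ holds exactly rather than only to leading order.
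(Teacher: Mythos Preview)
Your proposal is correct and follows essentially the same route as the paper. The only variation is in the noise-cancellation step: the paper first integrates out $\epsilon$ (using $\epsilon\ci\{V_{\alpha u},V,U\}$) to reduce $\eta$ to $\tilde\eta(U,V)=\E_\epsilon[\eta(U,V,\epsilon)]$, and then shows $\text{Law}(V\mid U)=\text{Law}(V_{\alpha u}\mid U)$ from $\tilde V\ci(\tilde D,U)$; you instead condition directly on $(D,U,\epsilon)$ and use the slightly stronger $\tilde V\ci(D,U,\epsilon)$ to compare laws, which is an equally valid (and arguably cleaner) way to handle the same cancellation.
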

Hence, for small enough step size $\alpha$, the identified intervention will achieve a positive improvement on the outcome (assuming that $\E[\nabla_D h(D)]\neq 0$). 

\paragraph{Instrument Regularized Auto-Encoder} To achieve the positive improvement as described in Theorem~\ref{thm:non_linear_improvement}, then we need to incorporate loss components that are minimized only when i) $e,f$ reconstruct the input $X$, ii) $e_D(X)$ is predicted linearly by $Z$ with a full rank matrix $A$, iii) the residual of this regression $e_D(X) - AZ - c$, which approximates $U$, needs to be independent of $Z$, iv) $Z$ needs to be independent of $e_V(X)$ and v) $e_D(X)$ needs to be independent of $e_V(X)$,$e_D(X) - AZ - c$ needs to be independent of $e_V(X)$, and the variables $(e_D(X) - AZ - c, e_V(X), Z)$ are jointly independent. While we do not explicitly enforce $\tilde{A}$ to be full row rank, we expect this to be satisfied due to the reconstruction loss and the condition that $\tilde{D}\ci\tilde{V}$. Moreover, note that instead of only joint independence of $Z,\tilde{U},\tilde{V}$ we also explicitly enforce pairwise independencies for computational reasons.

We introduce the instrument-regularized auto-encoder loss, which incorporates all these elements:
\begin{equation}
\begin{aligned}
    \min_{e, f, A, c}~& \E\left[ \|X - f\circ e(X)\|^2\right]
    + \lambda \E\left[\|e_D(X) - A Z - c\|^2 \right]\\
    ~~~~~~~& + \mu_1 {\cal R}(e_D(X) - A Z - c, Z) + \mu_2 {\cal R}(Z, e_V(X)) \\
    ~~~~~~~& + \mu_3 \big(c_1 {\cal R}(e_D(X), e_V(X)) + c_2 {\cal R}(e_D(X) - A Z - c, e_V(X))\\
    ~~~~~~~& ~~~~~~~~~~+ c_3 {\cal R}(e_D(X) - A Z - c, Z, e_V(X))\big) 
\end{aligned}\tag{IRAE}
\label{eq:loss function}
\end{equation}
${\cal R}(A, B)$ or ${\cal R}(A, B, C)$, denotes any regularizer that can be evaluated on a set of $n$ samples and which takes small values when the random variables $A, B$ or $A, B, C$ are jointly independent. Many examples of such independence-regularizers have been introduced in the literature. Our methodology is agnostic to the exact regularizer used. In our experiments, we used a kernel-based independence test statistic (HSIC) \cite{gretton2007kernel} and its generalization to joint independence (d-HSIC) \cite{pfister2018kernel}.

In experiments, for the purposes of ablation analysis, we will denote with IRAE[0] the variant that contains only the regularization parts that are multiplied by $\lambda$, with IRAE[1] the variant that contains the parts that are multiplied by $\lambda, \mu_1$, with IRAE[2] the variant that contains the parts multiplied by $\lambda, \mu_1, \mu_2$ and IRAE the variant that contains all regularizers.

\section{Experimental Evaluation}

\paragraph{Linear setting}
We benchmark LIRR (\cref{algo:linear}) against PCA under the setting of linear data generating process. As a baseline, we consider using PCA to extract the top $k=4$ principal components of X as the learned latent representation. After the representation is generated, we run 2SLS with representation $\tilde{D}$ as ``treatment'', outcome $Y$, and instruments $Z$ to identify the direction of perturbation. We apply steps 4-6 in \cref{alg:latent_perturbation} with $\alpha=1$ to compute the improvement $\E[Y_{\alpha u} - Y]$.

For each experiment, we randomly generate elements of $A, B, \theta$ in \cref{eq:linear-model} from normal distributions and test our method across three distinct noise cases: 1) independent Gaussian distributions for both U and V, 2) correlated Uniform distribution for U, independent Gaussian distribution for V, 3) correlated Uniform distribution for U, correlated Gaussian distribution for V. Each experiment was repeated 100 times with different random seeds, each containing a sample size of 10000 with 80-20 train-test split. We also varied the dimensionality of X, $m$, to examine the dimension effects while holding the dimension of Z constant ($k=4$). The distribution of average improvements across seeds is presented in \cref{table:linear-table}. Detailed data generation procedures are provided in the appendix.

We note that when noise follows independent Gaussian distributions across coordinates of $U$ and $V$, PCA method performs comparably to LIRR. However, PCA fails to generalize effectively under non-independent noise conditions. The average improvement of our proposed method exceeds that of SVD in case 2 and 3, and being more than 1 standard deviation from zero except for the case of DGP 3 and $m=500$. Curse of dimensionality still exists, as improvement decreases as the $m$ increases.

\begin{table}
  \caption{Average Test Improvement Comparison on Linear Data: LIRR vs. PCA (Mean $\pm$ Std). DGP 1 corresponds to independent U and V, DGP 2 corresponds to correlated U and independent V, and lastly DGP 3 corresponds to correlated U and V.}
  \label{table:linear-table}
  \centering
  \small  %
  \setlength{\tabcolsep}{3pt}  %
  \begin{tabular}{rllrrr}
    \toprule
    Size $m$ & Method & & DGP 1 & DGP 2 & DGP 3 \\
    \midrule
    \multirow{2}{*}{50} & LIRR & & $\mathbf{3.7283 \pm 2.7360}$ & $\mathbf{5.4706 \pm 4.1242}$ & $\mathbf{5.4944 \pm 4.0596}$ \\
    & PCA & & $3.1035 \pm 3.6229$ & $3.1717 \pm 4.0468$ & $2.5171 \pm 4.8519$ \\
    \midrule
    \multirow{2}{*}{100} & LIRR & & $\mathbf{2.4189 \pm 2.0164}$ & $\mathbf{4.0806 \pm 3.5969}$ & $\mathbf{3.8931 \pm 3.3116}$ \\
    & PCA & & $2.1249 \pm 2.7203$ & $2.4044 \pm 3.5741$ & $2.5713 \pm 3.7491$ \\
    \midrule
    \multirow{2}{*}{500} & LIRR & & $\mathbf{1.0355 \pm 0.9698}$ & $\mathbf{1.6996 \pm 1.5957}$ & $\mathbf{1.5934 \pm 1.7305}$ \\
    & PCA & & $1.0098 \pm 1.0786$ & $0.9005 \pm 1.3995$ & $1.1716 \pm 1.6904$ \\
    \bottomrule
  \end{tabular}
\vspace{-1.1em}
\end{table}

\paragraph{Non-linear setting} Next we consider a non-linear data generating process, where the data is generated by \cref{eqn:structural} where $f$ is quadratic and $h$ is linear. We benchmark LIRR and IRAE against PCA and vanilla Autoencoder (vanilla AE), variational autoencoder (VAE), and iVAE under the setting of a quadratic data generating process. Here, vanilla AE refers to autoencoder with only reconstruction loss. VAE refers variational autoencoder that maximizes the likelihood $p_{f}(X)$ with Gaussian latent representation. iVAE \cite{khemakhem2020variational} utilizes both $Z$ and $X$ in encoding and decoding, maximizing the conditional likelihood of $p_{f, A}(X|Z)$ as information of $Z$ is available in simulations. For LIRR, PCA, IRAE[1], vanilla AE, VAE, iVAE the bottleneck is of the same dimension as the instrument, i.e. $k=4$, so that downstream 2SLS will not be ill-posed, whereas the bottleneck size of IRAE[2] and IRAE was $10$. \cref{alg:latent_perturbation} is then applied to evaluate the average improvement in outcome, when each of the aforementioned representation learning methods is used. For probabilistic autoencoder VAE and iVAE, we sampled 10 representations for each observation X and compared them to the original outcome. 

We repeated the experiment 30 times across different random seeds, each containing a sample size of 10000 with 70-10-20 train-val-test split. Results on average improvement are depicted in Table~\ref{table:quadratic-table}. Our findings reveal that dimension reduction methods which operate without Z information (PCA, vanilla AE, vanilla VAE) yield minimal outcome improvement. In contrast, methods that incorporate Z consistently demonstrate positive mean improvements. The most substantial improvement is achieved by our IRAE[1] and IRAE method, with IRAE having performance gains at more than one standard deviation above zero. 

\begin{table}
  \caption{Average Test Improvement Comparison of 9 Methods on Quadratic Data (Mean $\pm$ Std). DGP 1 corresponds to independent U and V, DGP 2 corresponds to correlated U and independent V, and lastly DGP 3 corresponds to correlated U and V.}
  \label{table:quadratic-table}
  \centering
  \small
  \setlength{\tabcolsep}{5pt}  %
  \begin{tabular}{lrrr}
    \toprule
   Method & Case 1 & Case 2 & Case 3 \\
    \midrule
    PCA & $0.1322 \pm 0.3216$ & $0.0545 \pm 0.2994$ & $0.0848 \pm 0.2382$ \\
    LIRR & $3.5086 \pm 2.0455$ & $3.4711 \pm 1.9683$ & $3.5682 \pm 2.1296$ \\
    Vanilla AE & $0.4138 \pm 2.2000$ & $0.8418 \pm 1.1560$ & $0.7801 \pm 1.7335$ \\
    IRAE[0] & $6.1055 \pm 7.1634$ & $2.2898 \pm 6.9957$ & $4.8993 \pm 6.3310$ \\
    IRAE[1] & $\mathbf{6.4174 \pm 5.2602}$ & $4.6175 \pm 5.0479$ & $\mathbf{5.8023 \pm 7.1041}$ \\
    IRAE[2] & $5.5471 \pm 4.6573$ & $4.5554 \pm 4.0707$ & $5.2145 \pm 4.4358$ \\
    IRAE & $5.7740 \pm 4.7664$ & $\mathbf{6.5253 \pm 6.0132}$ & $4.9113 \pm 4.0009$ \\
    Vanilla VAE & $0.3651 \pm 0.4629$ & $0.2725 \pm 0.5071$ &  $0.2055 \pm 0.3394$\\
    iVAE & $0.2709 \pm 0.3672$ & $0.1192 \pm 0.2503$ &  $0.1652 \pm 0.2929$\\
    \bottomrule
  \end{tabular}
\vspace{-1.1em}
\end{table}

\paragraph{MNIST experiment 1} We examine a case where the outcome is determined by the color of MNIST digits. In this experiment, we independently generated 2-dimensional instrumental variables $Z$ and 2-dimensional confounders $U$. The color features $D$ are represented as 3-dimensional RGB values determined by both $Z$ and $U$. The outcome variable is calculated as the sum of R, G, and B values. The observed data X consists of MNIST digit pixels. If our methods successfully identify the correct causal direction, we expect intervened images to display increased brightness. 
All except IRAE[2] and IRAE has bottleneck size same as dimension $Z$ and the IRAE[2] and IRAE methods had a bottleneck of size $10$. Performance improvement results across $40$ seeds, with each experiment being run on a subsample of the MNIST dataset, are reported in \cref{tab:mnist-table} for the leading methods (AE and IRAE) as well as for ablation variants of IRAE. Additional visualizations are available in the appendix. As a remark, we note that having multiple dependence penalty terms may be difficult to train. For this reason, we trained IRAE[1] first and transferred the knowledge into the larger bottleneck models in IRAE[2] and IRAE.

Our experiments reveal important insights about latent space representation and instrumental variables. The vanilla AE, with no specialized latent regularization, produces reconstructed digits that closely resemble the originals, indicating the latent space primarily focuses on digit reconstruction. This can be seen in \cref{fig:vanilla-dgp1-scatter-main} where the latent distribution is best explained by digit (right most plot). When IV regression are applied to this representation, no meaningful directional information can be extracted since digit morphology has no relation to the target variable Y, resulting in no improvement. When we introduce instrument regularization while maintaining the same dimensionality as $Z$ in IRAE[1], the representation is forced to capture more color information at the expense of digit reconstruction. Ideally, we could increase the prediction error weight to infinity to enforce full capture of $Z$ information, but in practice, some digit information remains in the representation, leading to better improvement than Vanilla AE but worse than the following two methods. By expanding the latent dimension, we achieve both better digit reconstruction and color information preservation. The larger dimensional space accommodates more digit morphology without needing to compete space with color information, bringing reconstruction error closer to zero while enabling instrumental variables to recover the target direction.  This can be seen in \cref{fig:irae3-dgp1-scatter-main} where the latent distribution is well explained by instruments, RGB values, and $Y$ (three plots on the left). The improvement in IRAE[2] is less pronounced than in IRAE due to information leakage between components $D$ and $V$, resulting in acceptable reconstruction and prediction error but less identifiable direction when IVs are applied solely to the $D$ component. By adding a dependence penalty between $D$ and $V$, IRAE achieves marginally better improvement.

\begin{table}
  \caption{Average Test Improvement Comparison of 5 Methods on MNIST Data (Mean $\pm$ Std)}
  \label{tab:mnist-table}
  \centering
\small\small
\setlength{\tabcolsep}{2pt}
\begin{tabular}{ll*{5}{>{\centering\arraybackslash}p{1.8cm}}}
\toprule
 &  & Vanilla AE & IRAE[0] & IRAE[1] & IRAE[2] & IRAE \\
sample size & image &  &  &  &  &  \\
\midrule
\multirow[t]{3}{*}{1000} & reconstructed & \textbf{-0.57 (0.03)} & -0.64 (0.17) & -0.6 (0.15) & -0.63 (0.09) & -0.62 (0.09) \\
 & intervened(0.2) & -0.56 (0.04) & \textbf{-0.44 (0.14)} & -0.5 (0.14) & -0.52 (0.16) & -0.46 (0.19) \\
 & intervened(1.0) & -0.51 (0.05) & -0.39 (0.15) & -0.42 (0.15) & -0.34 (0.2) & \textbf{-0.31 (0.24)} \\
\cline{1-7}
\multirow[t]{3}{*}{10000} & reconstructed & -0.51 (0.03) & -0.73 (0.04) & -0.73 (0.04) & \textbf{-0.33 (0.05)} & \textbf{-0.33 (0.04)} \\
 & intervened(0.2) & -0.5 (0.03) & -0.07 (0.15) & 0.07 (0.26) & 0.72 (0.48) & \textbf{0.87 (0.43)} \\
 & intervened(1.0) & -0.47 (0.04) & 0.04 (0.2) & 0.15 (0.29) & 0.92 (0.47) & \textbf{1.1 (0.46)} \\
\cline{1-7}
\multirow[t]{3}{*}{30000} & reconstructed & -0.51 (0.03) & -0.74 (0.04) & -0.73 (0.04) & \textbf{-0.33 (0.05)} & \textbf{-0.33 (0.06)} \\
 & intervened(0.2) & -0.49 (0.03) & -0.11 (0.08) & -0.17 (0.11) & \textbf{0.38 (0.4)} & 0.33 (0.44) \\
 & intervened(1.0) & -0.44 (0.05) & -0.06 (0.12) & -0.13 (0.2) & \textbf{0.78 (0.43)} & 0.69 (0.56) \\
\cline{1-7}
\multirow[t]{3}{*}{-1} & reconstructed & -0.47 (0.02) & -0.71 (0.05) & -0.71 (0.04) & \textbf{-0.25 (0.05)} & \textbf{-0.25 (0.06)} \\
 & intervened(0.2) & -0.46 (0.03) & -0.06 (0.26) & 0.14 (0.3) & \textbf{0.98 (0.42)} & 0.9 (0.52) \\
 & intervened(1.0) & -0.41 (0.06) & 0.05 (0.29) & 0.26 (0.36) & \textbf{1.13 (0.44)} & 1.05 (0.51) \\
\cline{1-7}
\bottomrule
\end{tabular}
\vspace{-1em}
\end{table}

\begin{figure}[H]
\centering
\includegraphics[width=0.8\textwidth]{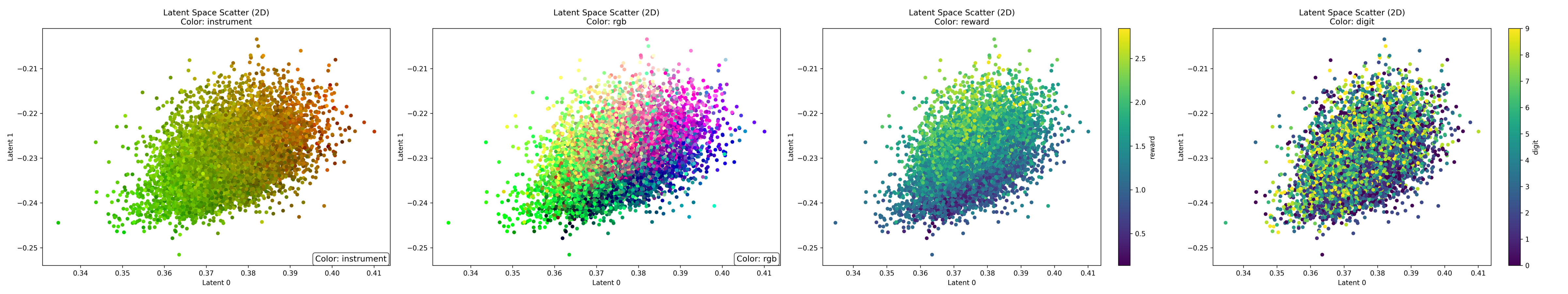}
\caption{Alignment of recovered latent variables with instrument, true representation [R,G,B], reward (Y) and digit for the IRAE model (Case 1 DGP). Data points with similar instrument, color, and reward are grouped together in the latent space.}\label{fig:irae3-dgp1-scatter-main}
\end{figure}

\begin{figure}[H]
\centering
\includegraphics[width=0.8\textwidth]{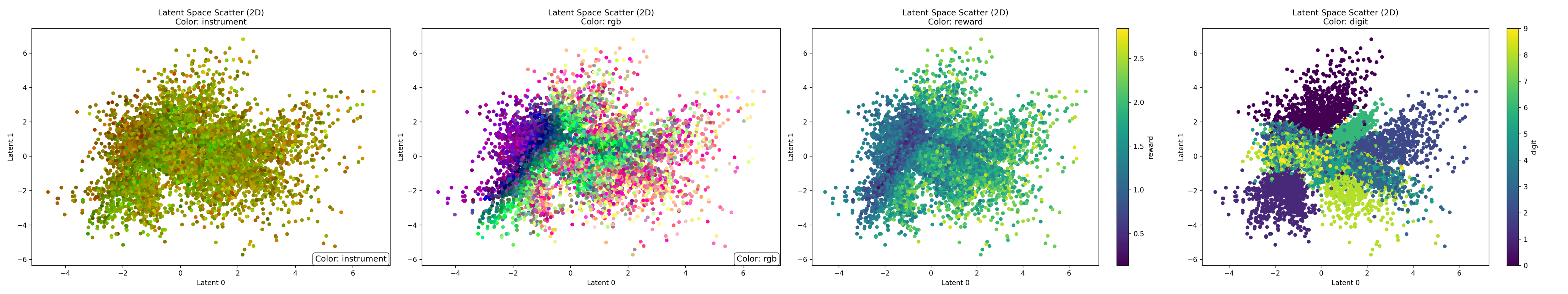}
\caption{Alignment of recovered latent variables with instrument, true representation [R,G,B], reward and digit for the Vanilla AE model (Case 1 DGP). Data points with same digits are grouped together in the latent space.}\label{fig:vanilla-dgp1-scatter-main}
\end{figure}

\begin{figure}[H]
\centering
\includegraphics[width=0.8\textwidth]{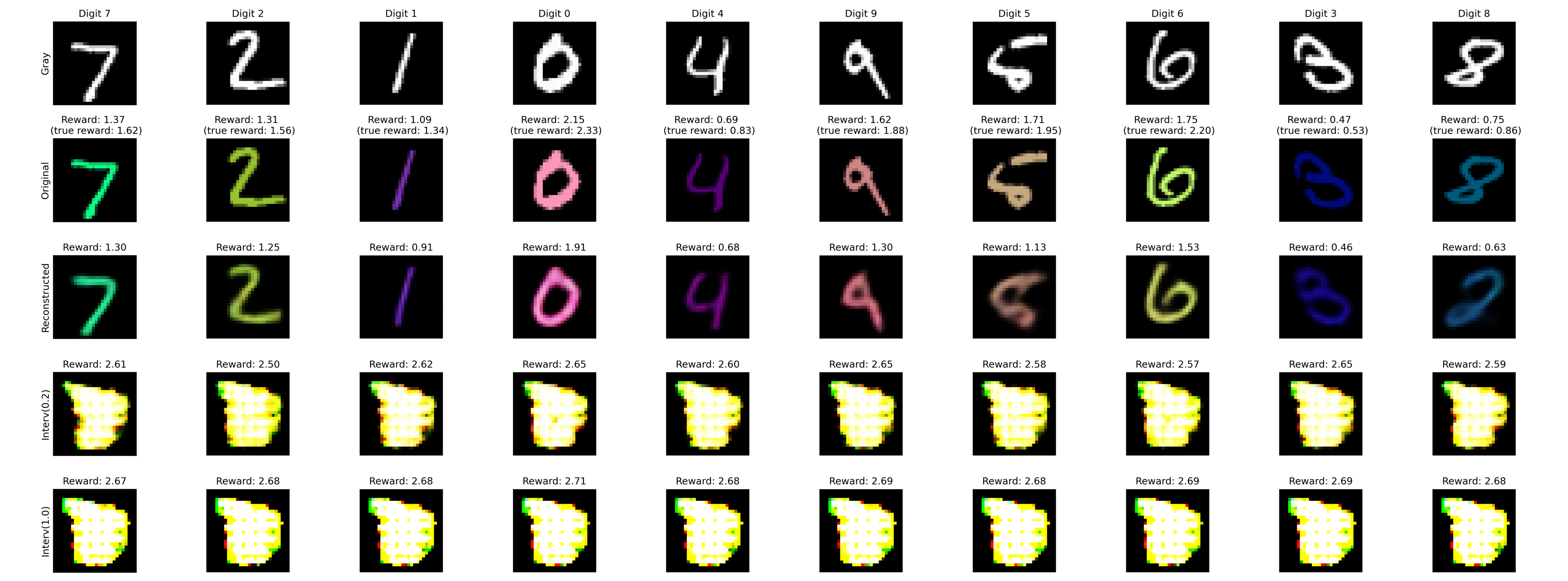}
\caption{Original gray, original color, reconstructed, treated($\alpha=0.2$) and treated($\alpha=1.0$) for the IRAE trained model (Case 1 DGP).}\label{fig:dgp1-digits1-scatter-main}
\end{figure}

\begin{figure}[H]
\centering
\includegraphics[width=0.8\textwidth]{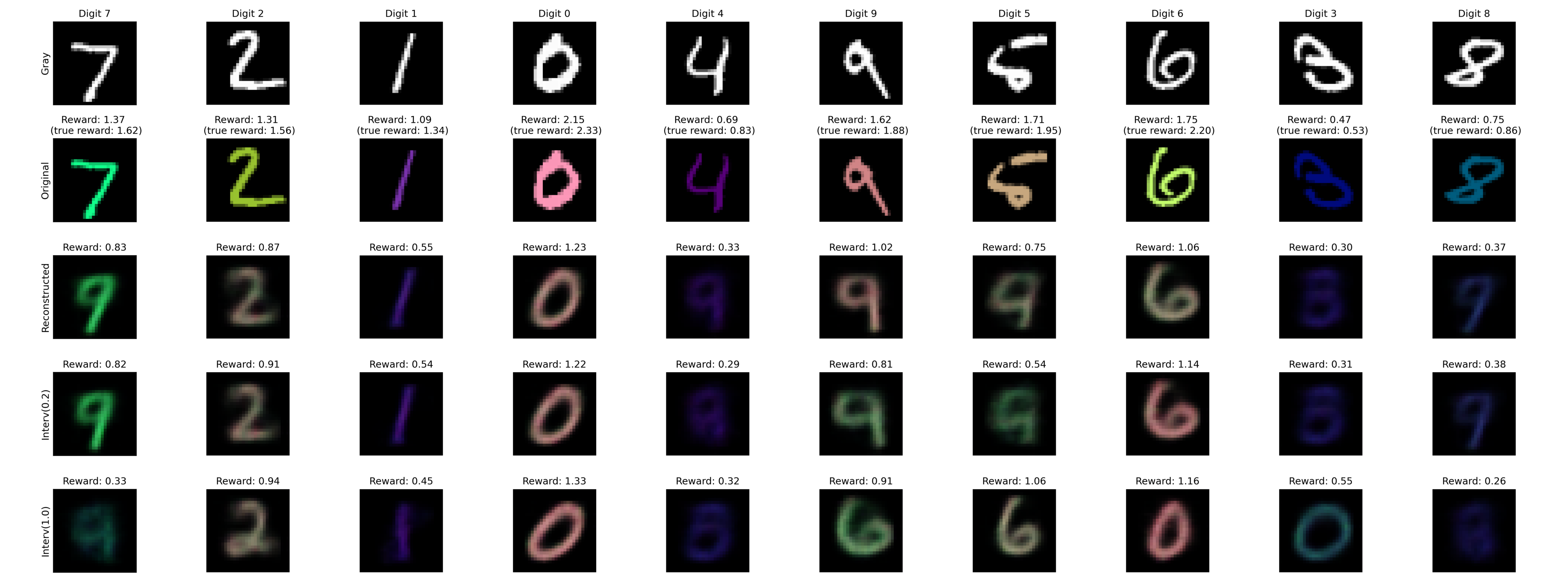}
\caption{Original gray, original color, reconstructed, treated($\alpha=0.2$) and treated($\alpha=1.0$) for the Vanilla AE trained model (Case 1 DGP).}\label{fig:dgp1-digits2-main}
\end{figure}

\begin{figure}[H]
\centering
\includegraphics[width=0.8\textwidth]{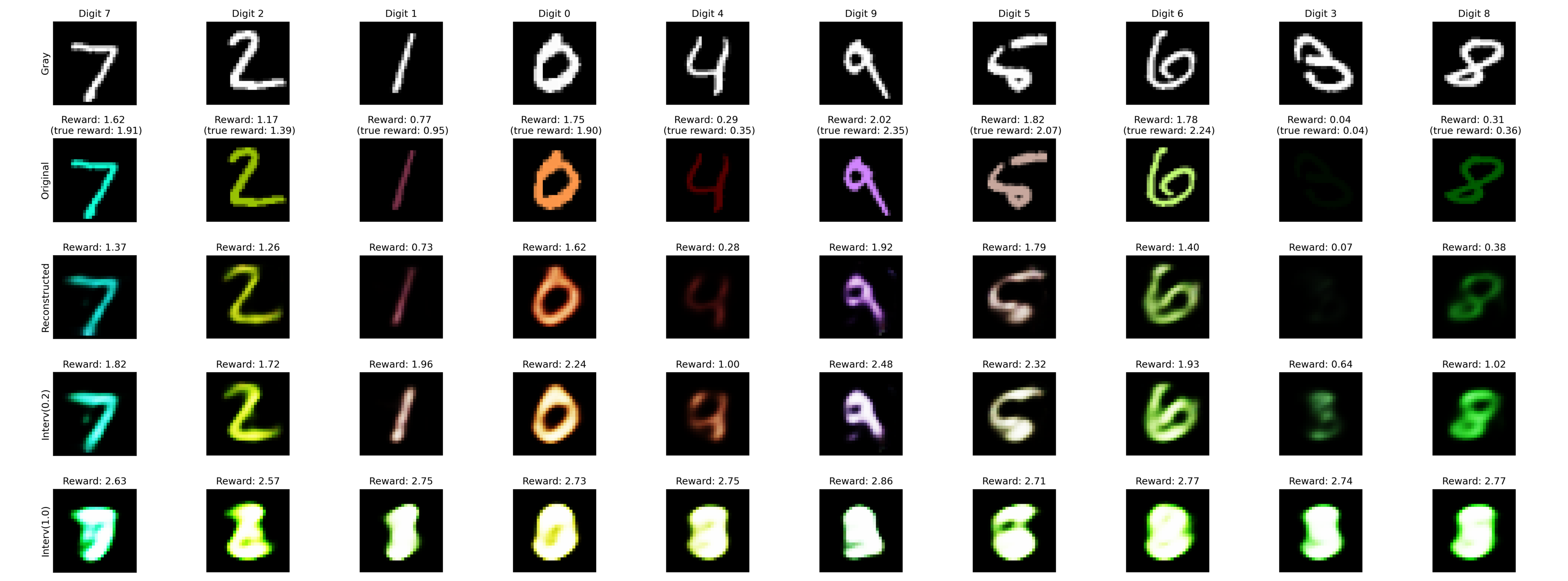}
\caption{Original gray, original color, reconstructed, treated($\alpha=0.2$) and treated($\alpha=1.0$) for the IRAE trained model \textbf{with latent dimension 32} (Case 2 DGP). We note that when we allow the latent dimension to be larger, we obtained better digit preservation. See appendix E.4 for hyperparameter tuning details. }\label{fig:dgp1-digits3-main}
\end{figure}

\bibliography{ref}

\newpage

\appendix

\section{Further Related Work}

In this section we provide a more discussion on related work that is not covered in the main text.

\textbf{Identifying Representations for Intervention
Extrapolation} Similar to our work, \citeauthor{saengkyongam2023identifying} proposed the \textit{Rep4Ex} approach which tries to solve the task of interventional outcome prediction by identifying the SCM. Importantly, although they work with a similar SCM as we do (Equation~\ref{eqn:structural}), the level of intervention differs - our work considers interventions on the latent treatment space ($D$), while \citeauthor{saengkyongam2023identifying} considers intervening on $Z$ (using notations in Equation~\ref{eqn:structural}). Moreover, our work is motivated by the presence of unobserved confounding between the latent representation of the treatment and the outcome, whereas their work is motivated by the need to extrapolate to unseen interventions, while the treatment that they consider is fully exogenous. Like our approach, they employ autoencoders to learn latent representations from potentially high-dimensional observed features, but use maximum moment restriction (MMR) regularization \cite{muandet2020kernel} to enforce the constraint $E[e_D(X) - AZ|Z] = 0$. This can be achieved when $E[e_D(X) - AZ] = 0$ and $e_D(X)-AZ \ci Z$, corresponding to our $\lambda$ and $\mu_1$ term in \cref{eq:loss function}. Additionally, while \textit{Rep4Ex} assumes a deterministic mixing function from the latent representation to the observables $X$, our method explicitly handles noisy observations of X through $e_V(X)$, which allows for broader generalization.

\textbf{Dimensionality Reduction for High Dimensional Treatments}
When learning a representation for the treatment, it is important for the learned representation to capture all causal factors so that the causal relationship is preserved for downstream estimation tasks like treatment effect estimation. \citeauthor{nabi2022semiparametric}utilize semi-parametric inference theory for structural models to provide a generalized the sufficient dimension reduction approach for learning lower-dimensional representation for treatment, while capturing the relationship between the treatment and the mean counterfactual outcome. \citeauthor{andreu2024contrastive} employed a contrastive approach to learn a representation of the high-dimensional treatments. These works studied settings that did not involve the presence of unobserved confounders of the treatment, while we focus on heavily confounded high dimensional structured treatments. Moreover, in these works, the selection of causally relevant factors are guided by the outcome, where as we take an inherently different approach that learns the latent representations using auxiliary information from instrumental variables instead of the treatment.

\textbf{Independence Conditions} In our work, we show that independence between certain variables (for more details, see Theorem \ref{thm:non-linear-id}) is desirable for identification. We enforce the independence condition by incorporating a Hilbert-Schmidt Independence Criterion (HSIC) \cite{gretton2007kernel} regularizer. This approach has also been adopted in prior research: for instance, \citeauthor{lopez2020cost} employed HSIC regularization to mitigate bias in observational datasets for applications in counterfactual policy optimization, while \citeauthor{harada2021graphite} use it to learn a representations of the treatment that is independent with the target individual in order to mitigate selection bias.

\section{Proof of Linear Identification}
Before proving the main theorem, we first present some useful lemma.

\begin{lemma}\label{lemma:column_space}
    Suppose $A$ is a $n\times k$ matrix with full row rank ($k>n$), and $B$ is a $m \times n$ matrix, with full column rank ($m>n$). Then the columns of $C =BA$ spans the same space as the columns of $B$. 
\end{lemma}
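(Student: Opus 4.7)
My plan is to establish the two column-space inclusions $\operatorname{col}(C)\subseteq \operatorname{col}(B)$ and $\operatorname{col}(B)\subseteq \operatorname{col}(C)$, where the first is immediate from the definition of matrix multiplication and the second is the substantive step that uses the full-rank hypothesis on $A$.

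First I would note that every column of $C=BA$ is of the form $B(A e_j)$ for a standard basis vector $e_j\in\mathbb{R}^k$, so each such column lies in $\operatorname{col}(B)$; taking spans yields $\operatorname{col}(C)\subseteq \operatorname{col}(B)$. This direction does not use the rank assumptions.

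For the reverse inclusion, I would reinterpret $A$ as the linear map $\mathbb{R}^k\to\mathbb{R}^n$ sending $x\mapsto Ax$. Since $A$ has full row rank $n$, its column space is all of $\mathbb{R}^n$, i.e., $A$ is surjective. Therefore, for every $y\in\mathbb{R}^n$ there exists $x\in\mathbb{R}^k$ with $Ax=y$, and consequently $By = BAx = Cx \in \operatorname{col}(C)$. In particular, each column of $B$ lies in $\operatorname{col}(C)$, giving $\operatorname{col}(B)\subseteq \operatorname{col}(C)$. Combining the two inclusions yields $\operatorname{col}(C)=\operatorname{col}(B)$.

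I do not expect a genuine obstacle here: the full column rank hypothesis on $B$ is not even needed for the set equality of column spaces (it only matters if one wants the stronger statement that the columns of $B$ form a basis). The only subtle point is keeping the roles of the row/column ranks straight, which is why I would phrase the key step operationally as ``$A$ is surjective as a linear map.''
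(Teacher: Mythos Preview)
Your proof is correct and follows essentially the same two-inclusion strategy as the paper. The only cosmetic difference is that for the direction $\operatorname{col}(B)\subseteq\operatorname{col}(C)$ the paper makes the preimage explicit via the Moore--Penrose pseudo-inverse (writing $By=BAA^{+}y=C(A^{+}y)$ using $AA^{+}=I_n$), whereas you invoke surjectivity of $A$ abstractly; your observation that the full column rank of $B$ is unnecessary for the column-space equality is also correct.
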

\begin{proof}[Proof of Lemma \ref{lemma:column_space}]
Let $\mathcal{R}(\cdot)$ denote the column space of a matrix.

For any $x \in \mathcal{R}(B)$, there exist vector $y$ such that $x = By$. Since $A$ is full row rank, we know that $AA^{+} = I_{n}$, and $x = By = BAA^{+}y = C(A^{+}y)$. Therefore $x \in \mathcal{R}(C)$, so $\mathcal{R}(B) \subseteq \mathcal{R}(C)$.

Similarly, for any $x \in \mathcal{R}(C)$, there exist vector $y$ such that $x = BAy = B(Ay)$. So $x \in \mathcal{R}(B)$, and we have $\mathcal{R}(C) \subseteq \mathcal{R}(B)$.

Together, we have $\mathcal{R}(C) = \mathcal{R}(B)$.

\end{proof}
Now we proceed to prove Theorem \ref{thm:linear-id}.
\begin{proof}[Proof of Theorem~\ref{thm:linear-id}]
From Equation \ref{eqn:linear_X}, we have that:
\begin{align*}
    X = BAZ + B_{\perp}V + BU
\end{align*}
Then taking the conditional expectation over $Z$, we have:
\begin{align*}
    \E[X|Z] =~& BAZ + \E[B_{\perp}V + BU]\\
    =~&BAZ+\E[B_{\perp}\E[V|Z]] + \E[B\E[U|Z]]\\
    =~& BAZ+ B_{\perp}\E[V] + B\E[U] \tag{Since $V \ci Z$ and $U\ci Z$}\\
    =~& BAZ
\end{align*}
Thus $C:=BA$ can be uniquely identified as the solution to the linear regression problem, regressing $X$ on $Z$. 
Consider the SVD decomposition of $C = {\cal U} \Sigma {\cal V}^\top$. Let $\hat{B} =  {\cal U}$, and $\hat{A} = \Sigma {\cal V}^\top$. Then by Lemma \ref{lemma:column_space}, we have that the columns of $\hat{B}$ spans the same space as the columns of $B$. In other words, there exist an invertible change of basis matrix $P$ such that $B = \hat{B}P$. Since $\hat{B}$ is orthonormal (by construction of SVD), we have that $\hat{B}^T\hat{B} = I_{r}$, and $P = \hat{B}^TB$. As a result, we also have: 
\begin{align*}
    D =~& B^{+}X = (B^TB)^{-1} B^T X \\
    =~& (P^T\hat{B}^T\hat{B}P)^{-1}P^{T}\hat{B}^TX\\
    =~& (P^TP)^{-1}P^{T}\hat{B}^TX\\
    =~& P^{-1}\hat{B}^{T}X = P^{-1}\tilde{D}
\end{align*}
Next, we show that $\tilde{\theta} = (P^{-1})^{T}\theta$. The LIRR algorithm solves for $\tilde{\theta}$ from the following moment equation:
\begin{align*}
    0 =~& \E[Z(Y - \tilde{\theta}^T\hat{D})]\\
    =~&\E[Z(\theta^T D  + \eta(V, U, \epsilon) - \hat{\theta}^T PD)]\\
    =~&\E[Z(\theta^T D - \tilde{\theta}^T PD)] \tag{Since $U,V,\epsilon \ci Z$ and $\E[\eta(U, v,\epsilon)] = 0$}\\
    =~&\E[ZD^T](\theta - P^T\tilde{\theta})\\
    =~&\E[Z(Z^TA^T + U^T)](\theta - P^T\tilde{\theta})\\
    =~&\E[ZZ^T]A^T(\theta - P^T\tilde{\theta})\\
\end{align*}
Since the instruments are not co-linear, we have that $\E[ZZ^T]\succ 0$, i.e. $\E[ZZ^T]$ is invertible. Thus $\E[ZZ^T]A^T(\theta - P^T\tilde{\theta}) = 0$ if and only if $A^T(\theta - P^T\tilde{\theta}) = 0$. Since $A^T$ has full column rank, then by the Rank-Nullity theorem, the null space of $A^T = {0}$. Together, this shows that $\tilde{\theta} = (P^{-1})^{T}\theta$ is the unique solution to the moment condition. 

Lastly, we show that the intervened outcome is guaranteed improvement in expectation. Consider an intervention in the direction of $u = \tilde{\theta}/\|\tilde{\theta}\|$ in the $\tilde{D}$ space, this maps to an intervention in the $D$ space as:
\begin{align*}
e_D(t(X)) =~& B^+ t(X) = D + \alpha B^+ \hat{B} \tilde{\theta} \\
=~& D + \alpha P^{-1}\frac{\tilde{\theta}}{\|\tilde{\theta}\|} = D + \alpha P^{-1} \frac{(P^{-1})^\top \theta}{\|(P^{-1})^\top \theta\|}
\end{align*}
Since, we intervene only in $D$, $e_V(t(X)) = V$.
Then, we can compute the intervened outcome:
\begin{align*}
    \E[Y_{\alpha u}] =~& \E[\theta^T e_D(t(X))+ \eta(e_V(t(X)), U,\epsilon)]\\
    =~& \E[\theta^T e_D(t(X))] \tag{$e_V(t(X)) = V$, and $\E[\eta(U, v,\epsilon)] = 0$}\\
    =~& \E\left[\theta^T \left(D + \alpha P^{-1} \frac{(P^{-1})^\top \theta}{\|(P^{-1})^\top \theta\|}\right)\right]\\
    =~& \E[\theta^T D + \alpha \|(P^{-1})^\top \theta\|] = \E[Y] + \alpha\|(P^{-1})^\top \theta\|
\end{align*}
\end{proof}

\section{Proof of Non-linear Identification}
\begin{proof}[Proof of Theorem~\ref{thm:non-linear-id}]
    By definition of $(\tilde{D}, \tilde{V})$, we have:
\begin{align*}
    (\tilde{D}, \tilde{V}) = \tilde{e}(X) = \tilde{e} \circ f (D, V) =: q(D, V)
\end{align*}
Denote with $q_1(D, V)$ the $\tilde{D}$ component of the output of $q$ and $q_2$ the $\tilde{V}$ component.

Since we have that $\tilde{D}=\tilde{A} Z + \tilde{U}$, with $\tilde{U}\ci Z$ and $\E[\tilde{U}]=0$, we can write:
\begin{align*}
    \E[\tilde{D}\mid Z=z] = \E[\tilde{A} Z  + \tilde{U}\mid Z=z] = \tilde{A} z 
\end{align*}
Moreover:
\begin{align*}
    \E[\tilde{D}\mid Z=z] =~& \E[q_1(D, V)\mid Z=z]\\
    =~& \E[q_1(Az+U, V)\mid Z=z]\\
    =~& \E[q_1(Az +U, V)] \tag{$Z \ci \{U,V\}$}\\
    =~& \E_{U}[\E_{V}[q_1(Az+U, V)]] \tag{$U\ci V$}\\
    =~& \E_{U}[\tilde{q}_1(Az + U)] \tag{$\tilde{q}_1(d)\triangleq \E_V[q_1(d, V)]$}\\
    =~& \E[\tilde{q}_1(Az + U)]
\end{align*}
Thus we can conclude that:
\begin{align*}
    \tilde{A} z = \E[\tilde{D}\mid Z=z] = \E[\tilde{q}_1(Az + U)]
\end{align*}
Since this holds for all $z \in {\cal Z}$ and since ${\cal Z}$ is an open set, we can take the derivative with respect to $z$, to derive:
\begin{align*}
    \forall z\in {\cal Z}: \tilde{A} = \partial_z \E[\tilde{q}_1(Az + U)]
\end{align*}
Since $q_1$ is continuously differentiable with bounded derivatives, the same holds for $\tilde{q}_1$ and therefore we can exchange the order of differentiation and expectation:
\begin{align*}
    \tilde{A} = \E[\partial_z \tilde{q}_1(Az + U)]
\end{align*}
Letting $\tilde{q}_1^{(1)}$ denote the Jacobian of the function $\tilde{q}_1(d)$, we can write by the chain rule:
\begin{align*}
    \tilde{A} =~& \E[\tilde{q}_1^{(1)}(Az + U) A]\\
    =~& \E[\tilde{q}_1^{(1)}(Az + U)] A\\
    =~& \E[\tilde{q}_1^{(1)}(Az + U)\mid Z=z] A \tag{$Z\ci U$}\\
    =~& \E[\tilde{q}_1^{(1)}(AZ + U)\mid Z=z] A \\
    =~& \E[\tilde{q}_1^{(1)}(D)\mid Z=z] A \\
\end{align*}

Since $A$ is full row rank, we have that $AA^+$ is invertible. Thus we can write:
\begin{align*}
    \tilde{A} A^+ =~& \E[\tilde{q}^{(1)}(D)\mid Z=z]
\end{align*}
or equivalently:
\begin{align*}
    \forall z \in {\cal Z}: \E[\tilde{q}_1^{(1)}(D) - \tilde{A}A^+\mid Z=z] = 0
\end{align*}
By the bounded completeness assumption and since both $\tilde{A}A^+$ and $\tilde{q}_1^{(1)}$ are bounded, the latter implies that:
\begin{align*}
    \forall d \in {\cal D}: \tilde{q}_1^{(1)}(d) = \tilde{A}A^+
\end{align*}
or equivalently that:
\begin{align*}
    \tilde{q}_1(d) = \tilde{A}A^+ d + \tilde{\nu}
\end{align*}
for some constant vector $\nu$. Moreover,
\begin{align*}
    \E[\tilde{D}]=~& \E[\tilde{q}_{1}(D)]\\
    =~& \tilde{A}A^+\E[D] + \tilde{\nu}\\
    =~& \tilde{A}A^+A\E[Z] + \tilde{A}A^+\E[U]+\tilde{\nu}\\
    =~&\tilde{\nu}
\end{align*}
But we also have $\E[\tilde{D}]=A\E[Z]+\E[\tilde{U}] = 0$.
Hence, we have that $\tilde{\nu}=0$. Thus:
\begin{align*}
    \forall d\in {\cal D}: \tilde{q}_1(d) = \tilde{A}A^+ d
\end{align*}

Next, we argue that $\tilde{A}A^+$ is an invertible matrix. Note that:
\begin{align*}
    \E[\tilde{D} Z^\top] =~& \E[(\tilde{A}Z + \tilde{U})Z^\top]\\
    =~& \tilde{A}\E[ZZ^\top] \tag{$\tilde{U} \ci Z$, $\E[Z]=0$, $\E[\tilde{U}=0]$}
\end{align*}
Moreover:
\begin{align*}
    \E[\tilde{D} Z^\top] =~& \E[q_1(D, V) Z^\top]\\
    =~& \E[q_1(AZ+U,V) Z^\top]\\
    =~& \E[\E[q_1(AZ+U, V)\mid Z, U] Z^\top]\\
    =~& \E[\tilde{q}_1(AZ+U) Z^\top] \tag{$Z\ci U\ci V$}\\
    =~& \E[\tilde{q}_1(D) Z^\top]\\
    =~& \E[(\tilde{A}A^+D) Z^\top] \\
    =~& \tilde{A}A^+\E[DZ^\top] \\    
    =~& \tilde{A}A^+\E[(A Z + U) Z^\top] \\    
    =~& \tilde{A}A^+A\E[Z Z^\top] \tag{$Z\ci U$, $\E[U]=0$}
\end{align*}
Thus we have concluded that:
\begin{align*}
    \tilde{A} \E[ZZ^\top] = \E[\tilde{D}Z^\top] = \tilde{A}A^+A \E[ZZ^\top]
\end{align*}
Since $\E[ZZ^\top]$ is assumed to be invertible, the latter implies that:
\begin{align*}
    \tilde{A} = \tilde{A} A^+A
\end{align*}
By Lemma~\ref{lem:rowspan} in Appendix~\ref{app:aux-lemmas}, since $\tilde{A}$ and $A$ have full row rank, the row span of $\tilde{A}$ is equal to the row span of $A$ and the matrix $\tilde{A} A^+$ is invertible.

We have thus concluded that:
\begin{align*}
    \forall d\in {\cal D}: \tilde{q}_1(d) = \tilde{A}A^+ d
\end{align*}
and $\tilde{A}A^+$ is invertible.

Consider any solution with perfect encoder-decoder pair $(\tilde{e}, \tilde{f})$, and $\tilde{A}$ that satisfies the conditions of the theorem and minimizes the objective function:
\begin{align*}
    \E[\|\tilde{e}_D(X) - \tilde{A} Z\|^2] = \E[\|\tilde{D} - \tilde{A}Z\|^2]
\end{align*}
For any feasible solution, we can decompose this objective into two components by centering around $$\mu_{\tilde{A}}(d)\triangleq \tilde{A}A^+ d$$ i.e.:
\begin{align*}
    \E[\|\tilde{D} - \tilde{A}Z\|^2] =~& \E[\|\tilde{D} -\mu_{\tilde{A}}(D) + \mu_{\tilde{A}}(D) - \tilde{A}Z\|^2]\\
    =~& \E[\|\tilde{D} -\mu_{\tilde{A}}(D)\|^2 + \|\mu_{\tilde{A}}(D) - \tilde{A}Z\|^2] + 2\E[(\tilde{D} -\mu_{\tilde{A}}(D))^\top (\mu_{\tilde{A}}(D) - \tilde{A}Z)]
\end{align*}
Consider the inner product term. Since we have that:
\begin{align*}
    \E[\tilde{D} - \mu_{\tilde{A}}(D)\mid D, Z] =~& \E[q_1(D,V) - \tilde{q}_1(D)\mid D, Z]\\
    =~& \E[q_1(D, V)\mid D, Z] - \tilde{q}_1(D)\\
    =~& \E[q_1(D, V)\mid D, Z] - \E[q_1(D, V)\mid D]
\end{align*}

Since $Z\ci U \ci V$, we have by Lemma~\ref{lem:cid} in Appendix~\ref{app:aux-lemmas} that $Z\ci V \mid \mathbbm{1}\{AZ+U = d\}$:
\begin{align*}
    \E[q_1(D, V)\mid D=d, Z] = \E[q_1(d, V)\mid D=d, Z] = \E[q_1(d, V)\mid D=d] = \E[q_1(D,V)\mid D=d]
\end{align*}
Thus:
\begin{align*}
\E[\tilde{D} - \mu_{\tilde{A}}(D)\mid D, Z] = 0
\end{align*}

From this we conclude that for any feasible solution $\tilde{e}, \tilde{f}, \tilde{A}$, we have that the objective can be decomposed as:
\begin{align*}
    \E[\|\tilde{e}_D(X) - \tilde{A} Z\|^2] =~& \E[\|\tilde{D} -\mu_{\tilde{A}}(D)\|^2] + \E[\|\mu_{\tilde{A}}(D) - \tilde{A}Z\|^2]\\
    =~&\E[\|q_1(D, V) -\mu_{\tilde{A}}(D)\|^2] + \E[\|\mu_{\tilde{A}}(D) - \tilde{A}Z\|^2]
\end{align*}
Suppose that with positive probability, we have that $q_1(D, V) \neq \mu_{\tilde{A}}(D) = \tilde{A}A^+D$. Then we have that:
\begin{align*}
    \E[\|q_1(D, V) -\mu_{\tilde{A}}(D)\|^2] > 0
\end{align*}
In this case, we will provide an alternative feasible solution, which achieves smaller objective than $\tilde{e}, \tilde{f}, \tilde{A}$. Consider the solution:
\begin{align*}
    \tilde{e}'(x) =~& (\tilde{A}A^+ e_D(x), e_V(x))\\
    \tilde{f}'(d, v) =~& f((\tilde{A}A^+)^{-1} d, v)
\end{align*}
Note that we used the fact that for any feasible solution, we have already shown that $\tilde{A}A^+$ is invertible. Moreover, note that for this solution we have that, $\tilde{e}', \tilde{f}'$ is invertible, since $e,f$ is invertible and $\tilde{A}A^+$ is invertible. Finally,
\begin{align*}
    \tilde{f}' \circ \tilde{e}'(x) =~& x\\
    \tilde{e}'(f(d,v)) =~& (\tilde{A}A^+ e_D(f(d,v)), e_V(f(d,v))) = (\tilde{A}A^+ d, v) 
\end{align*}
Thus:
\begin{align*}
    \tilde{D} = \tilde{e}_D'(X) = \tilde{e}_D'(f(D,V)) = \tilde{A}A^+ D = \tilde{A}A^+ A Z + \tilde{A}A^+ U = \tilde{A} Z + \tilde{A}A^+ U
\end{align*}
Where we used the fact that we have already shown (in the proof of Theorem \ref{thm:non-linear-id}) that $\tilde{A} A^+A=\tilde{A}$. Thus, we also have that:
\begin{align*}
    \tilde{D} = \tilde{A} Z + \tilde{U}
\end{align*}
where $\tilde{U}=\tilde{A}A^+U$ and satisfies $\tilde{U}\ci Z$ and $\E[\tilde{U}]=0$.

Therefore, this new solution is a feasible solution. Moreover, since under this solution we have that $\tilde{D} = \tilde{A}A^+D=\mu_{\tilde{A}}(D)$, the first part of the objective vanishes and the objective takes the value:
\begin{align*}
    \E[\|\mu_{\tilde{A}}(D) - \tilde{A} Z\|^2] <\E[\|q_1(D, V) -\mu_{\tilde{A}}(D)\|^2] + \E[\|\mu_{\tilde{A}}(D) - \tilde{A}Z\|^2]
\end{align*}
contradicting the optimality of the original solution.

Thus we have derived that for any optimal feasible solution, it must hold that with probability $1$:
\begin{align}
    \tilde{D} = q_1(D, V) = \mu_{\tilde{A}}(D) = \tilde{A}A^+D
\end{align}
with $\tilde{A}A^+$ an invertible matrix. Moreover, this implies that $\tilde{U}=\tilde{D}-\tilde{A}Z = \tilde{A}A^+U$.
\end{proof}
\begin{proof}[Proof of Lemma \ref{lemma:identifiability_V}]
In this proof, we argue about the properties of the second part of the function $q$. Note that since $\tilde{D}\ci \tilde{V}$ and since $\tilde{D}=P D$, for some invertible $P$, with probability $1$, we have that $\tilde{V}\ci D$. Thus:
\begin{align*}
    q_2(D, V) \equiv \tilde{V} \ci D
\end{align*}
Since, $D\ci V$, this implies that $\text{Law}(q_2(d, V))=\text{Law}(q_2(d', V))$ for all $d,d'\in {\cal D}$.

Since $\tilde{e}$ is a bijection when restricted to inputs that are outputs of $f$ and since $f$ is an injection, we have that $\tilde{e}\circ f$ is an injection. Thus there exists a well-defined inverse function $q^{-1}=e\circ \tilde{f}$, such that $D, V = q^{-1}(D,V)$. Let $q_2^{-1}$ be the $V$ component of its output. Since $D\ci V$ and $\tilde{D}=P D$, we have that:
\begin{align*}
V \equiv q_2^{-1}(\tilde{D}, \tilde{V}) \ci \tilde{D}
\end{align*}
Since $\tilde{D} \ci \tilde{V}$, this implies that $\text{Law}(q_2^{-1}(d, \tilde{V}))=\text{Law}(q_2^{-1}(d', \tilde{V}))$ for all $d,d'\in {\cal D}$
\end{proof}

\subsection{Proof of Positive Improvement}
\begin{proof}[Proof of Theorem~\ref{thm:non_linear_improvement}]
In this proof, we show that intervention in the direction of average derivatives of $\tilde{h}$ guarantees positive improvement for sufficiently small $\alpha$, assuming that $h$ is twice differentiable. If we perform the intervention $\tilde{D}+\alpha u$, then we have by Lemma~\ref{lemma:identifiability_V} that:
\begin{align*}
    D_{\alpha u}, V_{\alpha u} = (D + \alpha P^{-1} u, q_2^{-1}(\tilde{D}+\alpha u, \tilde{V}))
\end{align*}
Since ${\cal D}=\mathbb{R}^r$ and since $P$ is an invertible matrix, we have that ${\cal \tilde{D}}=\mathbb{R}^r$. Thus, for all $d\in {\cal \tilde{D}}$, we also have that $d+a u\in {\cal \tilde{D}}$. By Lemma~\ref{lemma:identifiability_V}, have that for all $d\in {\cal \tilde{D}}$:
\begin{align*}
    \text{Law}(q_2^{-1}(d, \tilde{V})) = \text{Law}(q_2^{-1}(d +\alpha u, \tilde{V}))
\end{align*}
By Theorem~\ref{thm:non-linear-id}, we have that, with probability $1$, $\tilde{D}=PD$ and $\tilde{U}=PU$. Moreover, by assumption, we have that $\tilde{V} \ci \tilde{U} \ci Z$, which implies 
$$\tilde{V} \ci \{\tilde{A} Z + \tilde{U}, P^{-1}\tilde{U}\}\implies \tilde{V} \ci \{\tilde{D}, U\},$$
we also have that:
\begin{multline*}
    \text{Law}(q_2^{-1}(d, \tilde{V})\mid \tilde{D}=d, U) = \text{Law}(q_2^{-1}(d +\alpha u, \tilde{V})\mid \tilde{D}=d, U) \\
    \implies \text{Law}(q_2^{-1}(\tilde{D}, \tilde{V})\mid \tilde{D}, U) = \text{Law}(q_2^{-1}(\tilde{D}+\alpha u, \tilde{V})\mid \tilde{D}, U)
\end{multline*}
which by the definition of $V$ and $V_{\alpha}$ is equivalent to:;
\begin{align*}
    \text{Law}(V \mid \tilde{D}, U) = \text{Law}(V_{\alpha u}\mid \tilde{D}, U) \implies \text{Law}(V\mid U) = \text{Law}(V_{\alpha u}\mid U)
\end{align*}

By the outcome structural equation
\begin{align*}
    Y = h(D) + \eta(U,V, \epsilon)
\end{align*}
we have that:
\begin{align*}
    Y_{\alpha u} = h(D + \alpha P^{-1} u) + \eta(U, V_{\alpha u}, \epsilon_Y)
\end{align*}
and that:
\begin{align*}
    \E[Y_{\alpha u} - Y] =~& \E[h(D+\alpha P^{-1}u) - h(D)] + \E[\eta(U, V_{\alpha u}, \epsilon) - \eta(U, V, \epsilon)]
\end{align*}
Since $\epsilon \ci \{Z, U, V\}$ and since $V_{\alpha u}$ is a measurable function of these random variables, we have that $\epsilon \ci \{V_{\alpha u}, V, U\}$. Letting $\tilde{\eta}(u, v)=\E_{\epsilon}[\eta(u, v, \epsilon)]$, we can write:
\begin{align*}
    \E[Y_{\alpha u} - Y] =~& \E[h(D+\alpha P^{-1}u) - h(D)] + \E[\tilde{\eta}(U, V_{\alpha u}) - \tilde{\eta}(U, V)]\\
    =~& \E[h(D+\alpha P^{-1}u) - h(D)] + \E[\E[\tilde{\eta}(U, V_{\alpha u}) - \tilde{\eta}(U, V)\mid U]]\\
    =~& \E[h(D+\alpha P^{-1}u) - h(D)] \tag{$\text{Law}(V\mid U) = \text{Law}(V_{\alpha u}\mid U)$}
\end{align*}

By a first-order Taylor expansion and since $h$ is twice differentiable with bounded first and second derivatives:
\begin{align*}
    \E[Y_{\alpha u} - Y] = \E[\alpha \nabla_D h(D)^\top P^{-1} u] + O(\alpha^2) = \alpha \|(P^{-1})^\top\E[\nabla_D h(D)]\| + O(\alpha^2)
\end{align*}
\end{proof}

\subsection{Auxiliary Lemmas}\label{app:aux-lemmas}
\begin{lemma}\label{lem:rowspan}
Suppose ${A}$ and $B$ are $r\times k$ matrices with full row rank. If $A=A B^+B$, then $\text{rowspan}({A})=\text{rowspan}(B)$ and ${A}B^+$ is invertible.
\end{lemma}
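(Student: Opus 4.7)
The plan is to translate the matrix equation $A = AB^+B$ into a geometric statement about row spaces, use the full-row-rank hypothesis on both $A$ and $B$ to upgrade an inclusion of row spans to an equality, and then read off the invertibility of $AB^+$ by exhibiting it as a change-of-basis matrix between the two row spaces.

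First I would pin down the pseudoinverse identities in the full-row-rank regime. Because $B$ is $r\times k$ with $\text{rank}(B)=r$, the Moore--Penrose pseudoinverse admits the explicit form $B^+ = B^\top(BB^\top)^{-1}$, so $BB^+ = I_r$ and $B^+B = B^\top(BB^\top)^{-1}B$ is the orthogonal projector of $\mathbb{R}^k$ onto $\text{rowspan}(B)$. The hypothesis $A = AB^+B$ then says that each row of $A$ is fixed by this projector, which is equivalent to $\text{rowspan}(A) \subseteq \text{rowspan}(B)$. Since both $A$ and $B$ have row rank $r$, their row spaces are $r$-dimensional subspaces of $\mathbb{R}^k$, so the inclusion must in fact be an equality, establishing the first claim.

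Next, since every row of $A$ is a linear combination of rows of $B$, there exists an $r\times r$ matrix $T$ with $A = TB$. A rank comparison, $r = \text{rank}(A) = \text{rank}(TB) \le \text{rank}(T) \le r$, forces $T$ to be invertible. Substituting into $AB^+$ yields $AB^+ = (TB)B^+ = T(BB^+) = T I_r = T$, which is invertible, giving the second claim.

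The argument is essentially routine once the projection interpretation of $B^+B$ is in place; the only step requiring real care is using the correct specialization of the pseudoinverse. Here $BB^+ = I_r$ and $B^+B$ is a nontrivial projector, which is the opposite of the full-column-rank regime invoked elsewhere in the paper (where $B^+B = I$ and $BB^+$ is the projector). Conflating the two would invalidate the geometric reading of the hypothesis, so that is the one spot where I would slow down and verify the identities explicitly.
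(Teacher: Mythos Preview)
Your proof is correct and takes a cleaner, more elementary route than the paper's. The paper establishes the same facts via thin SVDs: writing $B = U_B\Sigma_B V_B^\top$ and $A = U_A\Sigma_A V_A^\top$, it identifies $B^+B = V_BV_B^\top$ as the projector onto $\text{rowspan}(B)$, proves the rowspan inclusion by contradiction (assuming some $x\in\text{rowspan}(A)$ lies outside $\text{rowspan}(B)$ and deriving a norm inequality that conflicts with $A = AB^+B$), and then argues invertibility of $AB^+ = U_A\Sigma_A (V_A^\top V_B)\Sigma_B^{-1}U_B^\top$ by exhibiting $V_A^\top V_B$ as a change-of-basis matrix between the two equal rowspans. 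Your approach sidesteps the SVD machinery entirely: the explicit formula $B^+ = B^\top(BB^\top)^{-1}$ gives the projector directly, the dimension count upgrades the inclusion to an equality in one line, and the factorization $A = TB$ with $T = AB^+$ makes invertibility immediate via a rank comparison. Both arguments rest on the same geometric fact (that $B^+B$ projects onto $\text{rowspan}(B)$), but yours is shorter and requires no auxiliary decomposition.
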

\begin{proof}
Consider the thin SVDs of $B = U_B \Sigma_B V_B^{\top}$ and $A = U_A \Sigma_A V_A^{\top}$. Then
\begin{align*}
B^{+}B &= V_B \Sigma_B^{-1} U_B^{\top} U_B \Sigma_B V_B^{\top} = V_B V_B^{\top}
\end{align*}
is the projection onto the row space of $B$. Then, we have:
\begin{align}
A=A B^+B \quad \Leftrightarrow \quad A V_B V_B^{\top} = A
\end{align}

First, we prove by contradiction that $\text{rowspan}(A)=\text{rowspan}(B)$. Suppose $x \in \text{rowspan}(A)=\text{span}(V_A)$, but $x \notin \text{rowspan}(B) = \text{span}(V_B)$. Let $V_{B}^{\perp}$ denote an orthogonal completion of $V_B$, then 
\begin{align*}
    x \notin \text{span}(V_B) &\Rightarrow x = V_B V_B^{\top} x + V_{B}^{\perp}( V_{B}^{\perp})^{\top} x
\end{align*}
 where $V_{B}^{\perp}( V_{B}^{\perp})^{\top} x \neq 0$, which implies $(V_{B}^{\perp})^{\top} x \neq 0$ as $V_{B}^{\perp}$ is orthogonal. Hence, we have the following:
\begin{align*}
\| x \|^2 = x^{\top} x 
= x^{\top} V_B V_B^{\top} x + x^{\top} V_{B}^{\perp} (V_{B}^{\perp})^{\top} x 
= \| V_B^{\top} x \|^2 + \| (V_{B}^{\perp})^{\top} x \|^2 > \| V_B^{\top} x \|^2
\end{align*}
However, we also have that $A V_B V_B^{\top} x - A x = 0$, which implies $u\triangleq V_B V_B^{\top} x - x \in \text{null-space}(A) = \text{span}(V_A^{\perp})$. Thus, it should be orthogonal with $x \in \text{span}(V_A)$. 
\begin{align*}
    0 = x^\top u = x^{\top}(V_B V_B^{\top} x - x) = \|V_B^{\top} x\|^2 - \|x\|^2 \neq 0
\end{align*}
This yields a contradiction! Thus, $\text{rowspan}(A) \subseteq \text{rowspan}(B)$. Since, both matrices have full row rank, then $A$ and $B$ have the same row space.

Now we show that $AB^{+}$ is invertible:
\begin{align*}
AB^{+} &= U_A \Sigma_A V_A^{\top} V_B \Sigma_B^{-1} U_B^{\top}
\end{align*}
Since $A, B$ are full row rank, $U_A, U_B, \Sigma_A, \Sigma_B$ are $r \times r$ invertible matrices. So it suffices to show that $V_A^{\top} V_B$ is invertible.
Since $\text{span}(V_A) = \text{span}(V_B)$, there exists an invertible change-of-basis matrix $P$ such that
\begin{align*}
V_B = V_A P
\Rightarrow V_A^{\top} V_B = V_A^{\top} V_A P = I_r P = P \Rightarrow AB^{+} \text{ is invertible.}
\end{align*}

\end{proof}

\begin{lemma}\label{lem:cid}
    If $U\ci V \ci Z$ (jointly independent), then $V \ci Z \mid f(Z, U)$, for any measurable function $U$.
\end{lemma}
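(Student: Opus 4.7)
The plan is to reduce the conditional independence claim to an unconditional one via the elementary fact that joint independence of $(U,V,Z)$ implies $V$ is independent of the pair $(Z,U)$, and hence of any measurable function of that pair together with $Z$ itself.

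First I would unpack joint independence as $V \ci (Z, U)$. Since $f(Z,U)$ is a measurable function of $(Z,U)$, the pair $(Z, f(Z,U))$ is also a measurable function of $(Z,U)$, so applying the standard fact that independence is preserved under measurable transformations on one side gives
\begin{align*}
V \ci (Z, f(Z,U)).
\end{align*}
Write $W \defeq f(Z,U)$ for brevity. The display above is equivalent to saying that the regular conditional distribution satisfies $\Pr(V \in \cdot \mid Z, W) = \Pr(V \in \cdot)$ almost surely, i.e.\ the conditional law does not depend on $(Z,W)$ at all.

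From here, conditional independence of $V$ and $Z$ given $W$ is immediate. Indeed, for any measurable set $A$,
\begin{align*}
\Pr(V \in A \mid Z, W) = \Pr(V \in A) \quad \text{a.s.},
\end{align*}
and taking conditional expectation with respect to $W$ on both sides gives $\Pr(V \in A \mid W) = \Pr(V \in A)$. Therefore $\Pr(V \in A \mid Z, W) = \Pr(V \in A \mid W)$ a.s., which is precisely $V \ci Z \mid W = f(Z,U)$.

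There is no real obstacle here; the only subtlety is being careful to invoke joint (rather than pairwise) independence of $(U,V,Z)$ when deducing $V \ci (Z,U)$, since pairwise independence alone would not suffice. Everything else is a one-line application of the tower property.
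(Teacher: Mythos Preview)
Your proof is correct and follows essentially the same route as the paper: both arguments establish the stronger fact $V \ci (Z,W)$ with $W=f(Z,U)$, from which $V \ci Z \mid W$ is immediate. The only cosmetic difference is that the paper arrives at $p(v\mid z,w)=p(v)$ by explicitly integrating out $u$ through conditional densities, whereas you invoke the cleaner abstract principle that independence is preserved under measurable maps of $(Z,U)$; your version has the minor advantage of not implicitly assuming densities exist.
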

Let $W=f(Z,U)$. Then:
\begin{align*}
    p(v\mid w, z) =~& \int_u p(v\mid w, z, u) p(u\mid w, z) du \\
    =~& \int_u p(v\mid z, u) p(u\mid w, z) du\\
    =~& \int_u p(v) p(u\mid w, z) du \tag{$V\ci Z \ci U$}\\
    =~& p(v) \int_u p(u\mid w, z) du\\
    =~& p(v)
\end{align*}

\begin{lemma}[Sufficient Conditions for Bounded Completeness]\label{lemma:conditions_completeness}
    Consider $D = A\cdot Z + U,\quad  U \ci Z $. $D$ is bounded complete for $Z$ if the following holds:
    \begin{itemize}
        \item The measure of $AZ$ is continuous and is supported on $\R^r$. 
        \item The density of $U$ is continuous.
        \item The characteristic function of the distribution of $U$ is infinitely often differentiable and does not vanish on the real line.
    \end{itemize}
\end{lemma}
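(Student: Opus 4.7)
The plan is to convert the conditional-moment identity into a pointwise convolution equation on $\mathbb{R}^r$ and then invoke Wiener's Tauberian theorem, whose essential hypothesis is precisely the nowhere-vanishing assumption on $\phi_U$.

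First, since $U\ci Z$, I would expand
\begin{align*}
\mathbb{E}[h(D)\mid Z=z] = \int h(Az+u)\,p_U(u)\,du =: g(Az),\qquad g(y)\triangleq \mathbb{E}[h(y+U)],
\end{align*}
so that the hypothesis $\mathbb{E}[h(D)\mid Z]=0$ a.s.\ becomes $g(AZ)=0$ almost surely. Because $h$ is bounded and $p_U$ is continuous with $p_U\in L^1$, dominated convergence shows $g$ is continuous on $\mathbb{R}^r$. Since $AZ$ has a density with full support $\mathbb{R}^r$, its law assigns positive mass to every open set, so the continuous function $g$ must vanish identically on $\mathbb{R}^r$. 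Rewriting $g(y)=\int h(v)\,p_U(v-y)\,dv$ via the substitution $v=y+u$, we see that the bounded linear functional $f\mapsto \int hf$ on $L^1(\mathbb{R}^r)$ annihilates every translate $p_U(\cdot-y)$.

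At this stage I would invoke Wiener's Tauberian theorem: since $p_U\in L^1(\mathbb{R}^r)$ and its Fourier transform (which coincides up to convention with $\phi_U$) never vanishes, the closed $L^1$-span of the translates $\{p_U(\cdot-y):y\in\mathbb{R}^r\}$ is all of $L^1(\mathbb{R}^r)$. Thus the bounded functional induced by $h$ vanishes on a dense subspace of $L^1$, hence on all of $L^1$, and by the duality $(L^1)^\ast = L^\infty$ we conclude $h=0$ Lebesgue-almost everywhere on $\mathbb{R}^r$. To finish, $U$ and $AZ$ each have densities and are independent, so $D=AZ+U$ is absolutely continuous with respect to Lebesgue measure; hence the Lebesgue-a.e.\ vanishing of $h$ upgrades to $h(D)=0$ almost surely, which is bounded completeness.

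The main obstacle is conceptual: recognising that the correct deconvolution tool here is Wiener's theorem rather than naive Fourier inversion. Since $h$ is only bounded, not integrable, $\hat{h}$ lives only in $\mathcal{S}'$, and formally dividing $\hat{h}$ by $\phi_U$ is illegal because characteristic functions typically decay at infinity, making $1/\phi_U$ not a multiplier on $\mathcal{S}'$. Wiener's theorem sidesteps this entirely by working with $L^1$-translates, and the nowhere-vanishing hypothesis on $\phi_U$ is its exact input. I note in passing that the infinite-differentiability hypothesis on $\phi_U$ stated in the lemma is not needed for this Wiener-based route; it would only become necessary in a direct Paley--Wiener argument that divides by $\phi_U$ on compact Fourier support.
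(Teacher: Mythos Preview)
Your argument is correct. The paper's own ``proof'' is a one-line citation: it invokes Theorem~2.1 of d'Haultf\oe uille (2011) on completeness in nonparametric IV models and observes that the present linear-in-$Z$ setting is a special case. Your route is a self-contained deconvolution argument---reduce $\E[h(D)\mid Z]=0$ to $g\equiv 0$ on $\mathbb{R}^r$ by continuity and full support, then kill $h$ via Wiener's Tauberian theorem using the nonvanishing of $\phi_U$---which is essentially the mechanism underlying the cited result, made explicit. Two small remarks: (i) continuity of $g(y)=\int h(v)p_U(v-y)\,dv$ follows already from $L^1$-continuity of translation and boundedness of $h$, so the dominated-convergence justification and the continuity of $p_U$ are not actually needed at that step; (ii) your observation that the infinite-differentiability hypothesis on $\phi_U$ is superfluous for the Wiener-based route is correct---that assumption is an artifact of the more general framework in the cited paper and plays no role in your direct argument.
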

\begin{proof}[Proof of Lemma \ref{lemma:conditions_completeness}]
    This result follows as a Corollary of Theorem 2.1 in \citeauthor{d2011completeness}, where we consider the special case of linear mappings from $Z$ to $D$.
\end{proof}

\section{Further Details on Experimental Evaluation}

\subsection{Linear}\label{app:linear_results}

This section provides details of the linear experiments briefly described in Section 5 of the main paper. While the main paper presents summary statistics of average improvements and key findings, here we included detailed data generating equations and histograms of the improvements across runs.

The data are generated using the three following cases.

\begin{tcolorbox}[title={\normalsize\bfseries Linear DGP 1 Independent Gaussian U and V}, colback=white, colframe=black!75!white]
Draw DGP parameters 
\begin{align*}
A \sim~& \{ N(0, 0.1^2) \}^{r \times k} 
&
B \sim~& \{ N(0, 1) \}^{m \times r} 
&
\theta \sim~& \{ N(0, 1) \}^{r \times 1}
\end{align*}
Then generate $n$ samples as:
\begin{align*}
Z_i &\sim \mathcal{N}(0, I_k) &&\text{(instrument)}\\[4pt]
U_i &\sim \mathcal{N}(0, 20^2 \cdot I_r) &&\text{(confounder 1)}\\[4pt]
V_i &\sim \mathcal{N}(0, 10^2 \cdot I_m) &&\text{(confounder 2)}\\[4pt]
\eta_i(U_i, V_i) &= \sum_{j=1}^{r}U_{ij} + 0.2 \cdot \varepsilon_i, \quad \varepsilon_i \sim \mathcal{N}(0, 1) &&\text{(confounder 3)} \\[4pt]
D_i &= A Z_i + U_i &&\text{(latent representation)}\\[4pt]
X_i &= B D_i + V_i &&\text{(observed representation)}\\[4pt]
Y_i &= \theta^\top D + \eta_i(U_i, V_i) \\[4pt]
\end{align*}
With dimensions $n=10000$, $r=k=4$, where $i \in \{1, 2, \ldots, n\}$ indexes the samples.
\label{box:linear-dgp1}
\end{tcolorbox}

\begin{tcolorbox}[title={\normalsize\bfseries Linear DGP 2 Correlated Uniform U and Independent Gaussian V}, colback=white, colframe=black!75!white]
Draw DGP parameters 
\begin{align*}
A \sim~& \{ N(0, 0.1^2) \}^{r \times k} 
&
B \sim~& \{ N(0, 1) \}^{m \times r} 
&
\theta \sim~& \{ N(0, 1) \}^{r \times 1}
\\
E \sim~& \{ N(0, 1) \}^{h \times r}
\end{align*}
Then generate $n$ samples as:
\begin{align*}
Z_i &\sim \mathcal{N}(0, I_k) &&\text{(instrument)}\\[4pt]
U_i &\sim E \cdot \{\text{Unif}(-1, -1)\}^{h} &&\text{(correlated Uniform confounder 1)}\\[4pt]
V_i &\sim \mathcal{N}(0, 10^2 \cdot I_m) &&\text{(confounder 2)}\\[4pt]
\eta_i(U_i, V_i) &= \sum_{j=1}^{r}U_{ij} + 0.2 \cdot \varepsilon_i, \quad \varepsilon_i \sim \mathcal{N}(0, 1) &&\text{(confounder 3)} \\[4pt]
D_i &= A Z_i + U_i &&\text{(latent representation)}\\[4pt]
X_i &= B D_i + V_i &&\text{(observed representation)}\\[4pt]
Y_i &= \theta^\top D_i + \eta_i(U_i, V_i)\\[4pt]
\end{align*}
With dimensions $n=10000$, $r=k=4, h=3$, where $i \in \{1, 2, \ldots, n\}$ indexes the samples.
\label{box:linear-dgp2}
\end{tcolorbox}

\begin{tcolorbox}[title={\normalsize\bfseries Linear DGP 3 Correlated Uniform U and Correlated Gaussian V}, colback=white, colframe=black!75!white]
Draw DGP parameters 
\begin{align*}
A \sim~& \{ N(0, 0.1^2) \}^{r \times k} 
&
B \sim~& \{ N(0, 1) \}^{m \times r} 
&
\theta \sim~& \{ N(0, 1) \}^{r \times 1}\\
E \sim~& \{ N(0, 1) \}^{h_1 \times r}
&
F \sim~& \{ N(0, 1) \}^{h_2 \times r}
\end{align*}
Then generate $n$ samples as:
\begin{align*}
Z_i &\sim \mathcal{N}(0, I_k) &&\text{(instrument)}\\[4pt]
U_i &\sim E \cdot \{\text{Unif}(-1, -1)\}^{h} &&\text{(correlated Uniform confounder 1)}\\[4pt]
V_i &\sim F \cdot \mathcal{N}(0, 5^2 \cdot I_{h_2}) &&\text{(correlated Gaussian confounder 2)}\\[4pt]
\eta_i(U_i, V_i) &= \sum_{j=1}^{r}U_{ij} + 0.2 \cdot \varepsilon_i, \quad \varepsilon_i \sim \mathcal{N}(0, 1) &&\text{(confounder 3)} \\[4pt]
D_i &= A Z_i + U_i &&\text{(latent representation)}\\[4pt]
X_i &= B D_i + V_i &&\text{(observed representation)}\\[4pt]
Y_i &= \theta^\top D_i + \eta_i(U_i, V_i) \\[4pt]
\end{align*}
With dimensions $n=10000$, $r=k=4, h_1 = 3, h_2=5$, where $i \in \{1, 2, \ldots, n\}$ indexes the samples.
\label{box:linear-dgp3}
\end{tcolorbox}

To determine the true outcome after perturbation, We used the formula $$Y_{\alpha u} = \theta^T (B^\dagger X_{\alpha u}).$$

In addition to the summary statistics included in the main paper, we also plotted the distribution of average test improvements across seeds in \cref{fig:linear-plots}. We can observe that the test improvements of LIRR are shifted more to the right compared to the baseline PCA method.

\begin{figure}
    \centering
    
    \begin{subfigure}{\textwidth}
        \centering
        \includegraphics[trim={0 0 0 1cm},clip, width=0.9\textwidth]{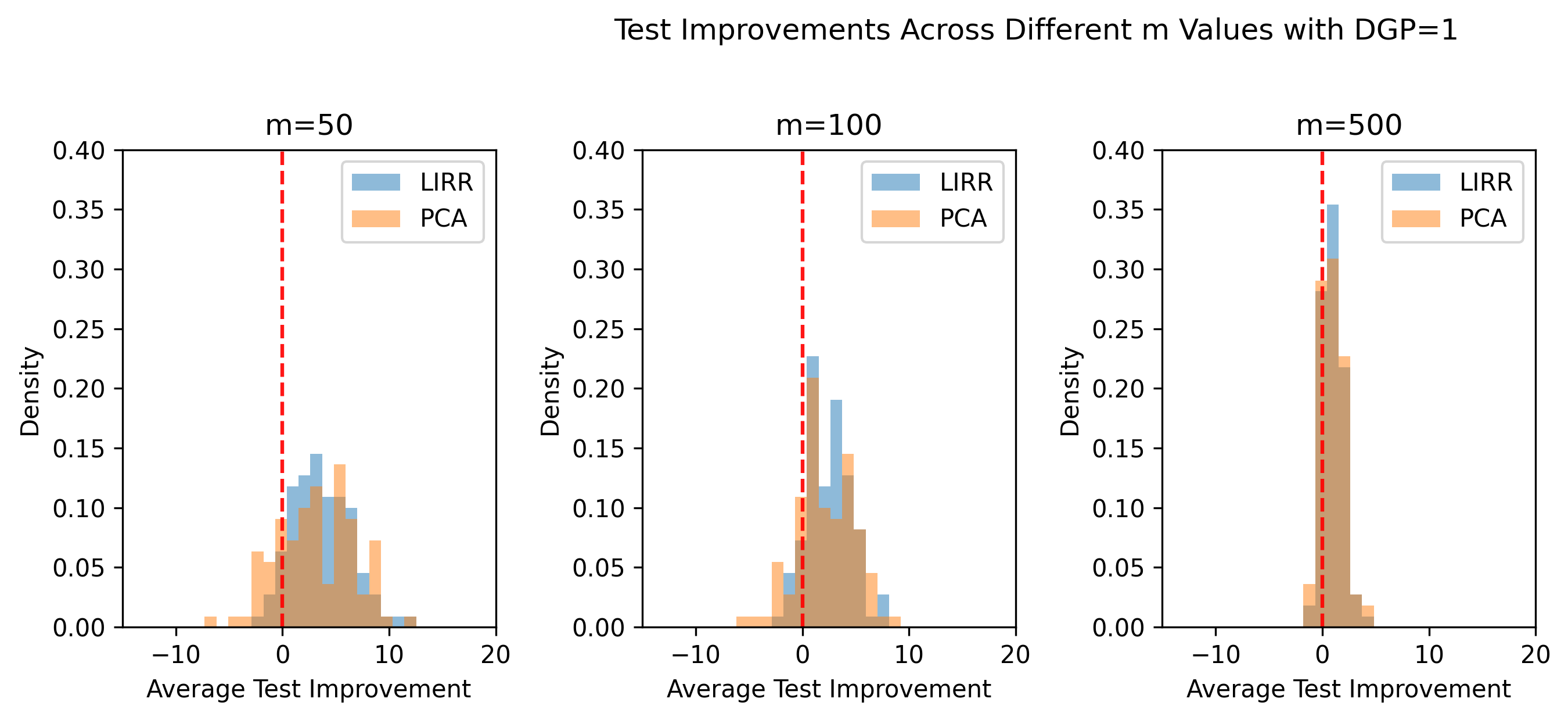}
        \caption{Linear DGP 1: Independent U and V}
        \label{fig:case1}
    \end{subfigure}
    
    \vspace{0.5cm}  %
    
    \begin{subfigure}{\textwidth}
        \centering
        \includegraphics[trim={0 0 0 1cm},clip, width=0.9\textwidth]{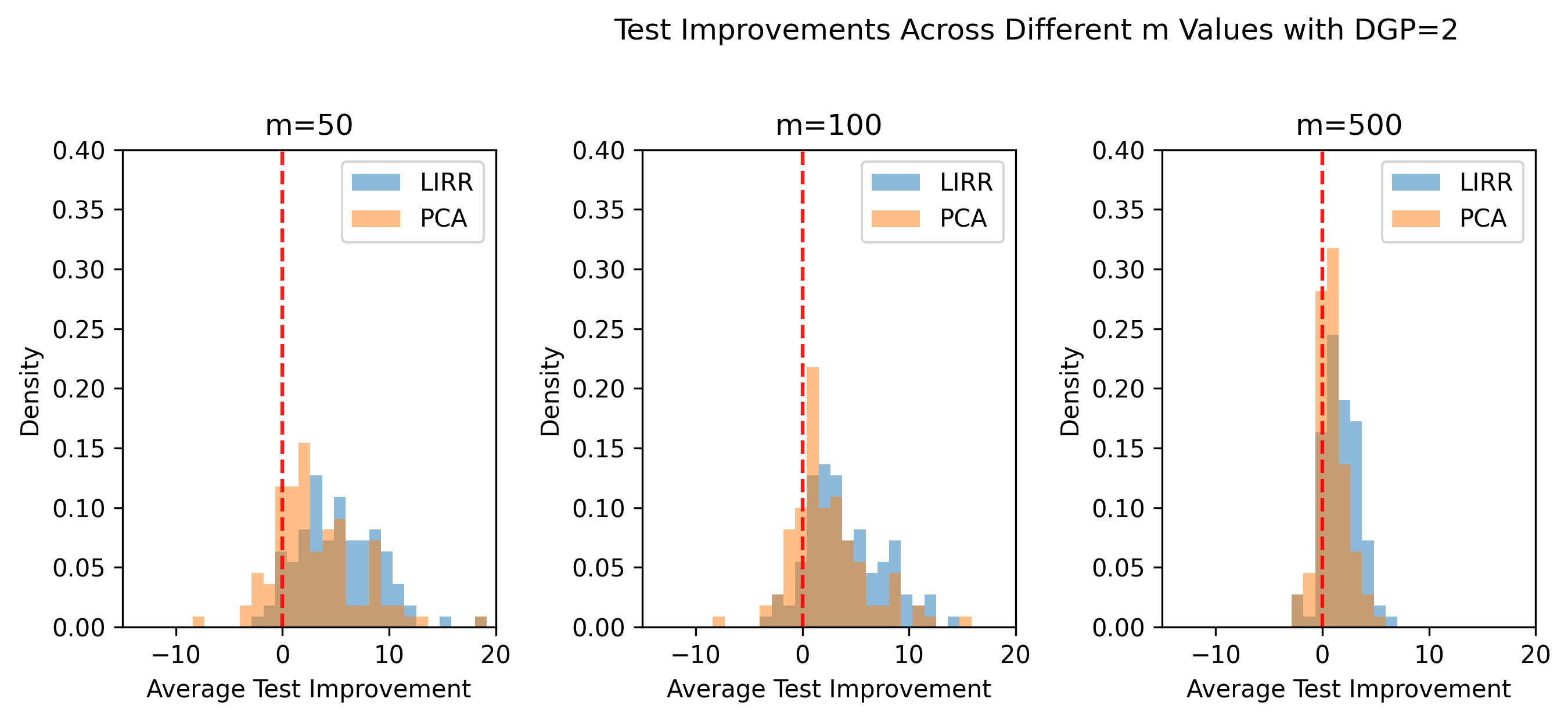}
        \caption{Linear DGP 2: Correlated U and Independent V}
        \label{fig:case2}
    \end{subfigure}
    
    \vspace{0.5cm}  %
    
    \begin{subfigure}{\textwidth}
        \centering
        \includegraphics[trim={0 0 0 1cm},clip, width=0.9\textwidth]{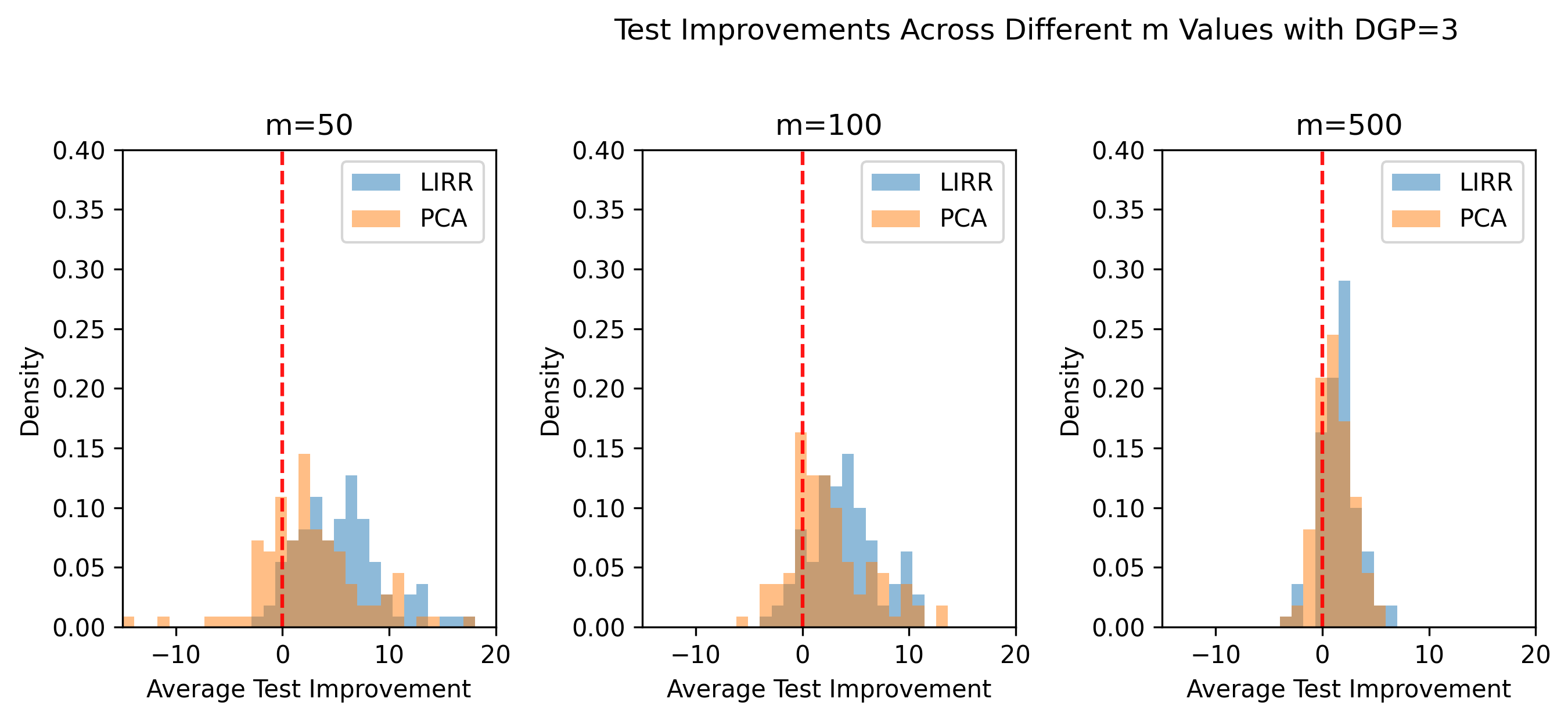}
        \caption{Linear DGP 3: Correlated U and V}
        \label{fig:case3}
    \end{subfigure}
    \caption{Distribution of Average Improvement for Linear Experiment}
    \label{fig:linear-plots}
\end{figure}

\subsection{Quadratic}\label{app:quadratic_results}

This section provides details of the nonlinear experiments briefly described in Section 5 of the main paper. While the main paper presents summary statistics of average improvements and key findings, here we included detailed data generating equations, model hyperparameter, and histograms of the improvements across runs.

The data are generated using the following 3 cases.

\begin{tcolorbox}[title={\normalsize\bfseries Quadratic DGP 1 Independent Gaussian U, V}, colback=white, colframe=black!75!white]
Draw DGP parameters 
\begin{align*}
A \sim~& \{ N(0, 1) \}^{r \times k} & 
B \sim~& \{ N(0, 1) \}^{m \times (2*r + r*(r-1)/2)} &
\theta \sim~& \{ N(0, 1) \}^{r \times 1}
\end{align*}
Then generate samples as:
\begin{align*}
Z_i &\sim \mathcal{N}(0, I_k) &&\text{(instrument)}\\[4pt]
U_i &\sim \mathcal{N}(0, 0.2^2 \cdot I_r) &&\text{(confounder 1)}\\[4pt]
V_i &\sim \mathcal{N}(0, 0.2^2 \cdot I_m) &&\text{(confounder 2)}\\[4pt]
\eta_i(U_i, V_i) &= \sum_{j=1}^{r}U_{ij} + 0.2 \cdot \varepsilon_i, \quad \varepsilon_i \sim \mathcal{N}(0, 1) &&\text{(confounder 3)} \\[4pt]
D_i &= A Z_i + U_i &&\text{(latent representation)}\\[4pt]
X_i &= B \cdot [D_{i1}, D_{i2}, ...,D_{i1}D_{i2},... D_{ir}^2] + V_i &&\text{(observed representation)}\\[4pt]
Y_i &= \theta^\top D + \eta_i(U_i, V_i) \\[4pt]
\end{align*}
With dimensions $n=10000$, $r=k=4$, where $i \in \{1, 2, \ldots, n\}$ indexes the samples.
\label{box:quadratic-dgp1}
\end{tcolorbox}

\begin{tcolorbox}[title={\normalsize\bfseries Quadratic DGP 2 Correlated Uniform U and Independent Gaussian V}, colback=white, colframe=black!75!white]
Draw DGP parameters 
\begin{align*}
A \sim~& \{ N(0, 1) \}^{r \times k} &
B \sim~& \{ N(0, 1) \}^{m \times (2*r + r*(r-1)/2)} &
\theta \sim~& \{ N(0, 1) \}^{r \times 1}\\
E \sim~& \{ N(0, 1) \}^{h \times r}
\end{align*}
Then generate samples as:
\begin{align*}
Z_i &\sim \mathcal{N}(0, I_k) &&\text{(instrument)}\\[4pt]
U_i &\sim E \cdot \{\text{Unif}(-0.2, -0.2)\}^{h} &&\text{(correlated Uniform confounder 1)}\\[4pt]
V_i &\sim \mathcal{N}(0, 0.2^2 \cdot I_m) &&\text{(confounder 2)}\\[4pt]
\eta_i(U_i, V_i) &= \sum_{j=1}^{r}U_{ij} + 0.2 \cdot \varepsilon_i, \quad \varepsilon_i \sim \mathcal{N}(0, 1) &&\text{(confounder 3)} \\[4pt]
D_i &= A Z_i + U_i &&\text{(latent representation)}\\[4pt]
X_i &= B \cdot [D_{i1}, D_{i2}, ...,D_{i1}D_{i2},... D_{ir}^2] + V_i &&\text{(observed representation)}\\[4pt]
Y_i &= \theta^\top D_i + \eta_i(U_i, V_i)\\[4pt]
\end{align*}
With dimensions $n=10000$, $r=k=4, h=3$, where $i \in \{1, 2, \ldots, n\}$ indexes the samples.
\label{box:quadratic-dgp2}
\end{tcolorbox}

\begin{tcolorbox}[title={\normalsize\bfseries Quadratic DGP 3 Correlated Uniform U and Correlated Gaussian V}, colback=white, colframe=black!75!white]
Draw DGP parameters
\begin{align*}
A \sim~& \{ N(0, 1) \}^{r \times k} &
B \sim~& \{ N(0, 1) \}^{m \times (2*r + r*(r-1)/2)} &
\theta \sim~& \{ N(0, 1) \}^{r \times 1}\\
E \sim~& \{ N(0, 1) \}^{h_1 \times r} &
F \sim~& \{ N(0, 1) \}^{h_2 \times r}
\end{align*}
Then generate samples as:
\begin{align*}
Z_i &\sim \mathcal{N}(0, I_k) &&\text{(instrument)}\\[4pt]
U_i &\sim E \cdot \{\text{Unif}(-0.2, -0.2)\}^{h} &&\text{(correlated Uniform confounder 1)}\\[4pt]
V_i &\sim F \cdot \mathcal{N}(0, 0.05^2 \cdot I_{h_2}) &&\text{(correlated Gaussian confounder 2)}\\[4pt]
\eta_i(U_i, V_i) &= \sum_{j=1}^{r}U_{ij} + 0.2 \cdot \varepsilon_i, \quad \varepsilon_i \sim \mathcal{N}(0, 1) &&\text{(confounder 3)} \\[4pt]
D_i &= A Z_i + U_i &&\text{(latent representation)}\\[4pt]
X_i &= B \cdot [D_{i1}, D_{i2}, ...,D_{i1}D_{i2},... D_{ir}^2] + V_i &&\text{(observed representation)}\\[4pt]
Y_i &= \theta^\top D_i + \eta_i(U_i, V_i) \\[4pt]
\end{align*}
With dimensions $n=10000$, $r=k=4, h_1 = 3, h_2=5$, where $i \in \{1, 2, \ldots, n\}$ indexes the samples.
\label{box:quadratic-dgp3}
\end{tcolorbox}

All encoder architectures incorporate a Random Fourier Feature layer, followed by three feedforward layers and a final linear projection. Decoders consist of three feedforward layers and a final linear projection layer. For our IRAE[2] and IRAE models, we set the bottleneck dimension to 10, larger than the instrumental variable dimension $r=k=4$. By construction, Vanilla and IRAE[1] has bottleneck equal to $k=4$. To determine the true outcome after perturbation, we used the formula
$$Y_{\alpha u} = \theta^T ((B^\dagger X_{\alpha u})[:r]),$$
where $[:r]$ index into the first order terms (excluding the quadratic and cross terms) of $D$.

The hyperparameters used in the training procedure are described in \cref{tab:quadratic_training_params}. 

\begin{table}
\centering
\small
\setlength{\tabcolsep}{2pt}
\caption{Training Parameters for Quadratic Simulations}
\label{tab:quadratic_training_params}
\begin{tabular}{l|ccccccc}
\toprule
\textbf{} & \textbf{Vanilla AE} & \textbf{IRAE[0]} & \textbf{IRAE[1]} & \textbf{IRAE[2]} & \textbf{IRAE} & \textbf{VAE} & \textbf{iVAE}\\
\midrule
\multicolumn{6}{l}{\textbf{Architecture}} \\
\midrule
Encoder dimensions & \multicolumn{7}{c}{$100 \rightarrow 50 \rightarrow 20$} \\
Decoder dimensions & \multicolumn{7}{c}{$20 \rightarrow 50 \rightarrow 100$}  \\
RFF bandwidth $\sigma$ & \multicolumn{7}{c}{$20$} \\
Bottleneck dimension & 4 & 4 & 4 & 10 & 10 & 4 & 4\\
\midrule
\multicolumn{5}{l}{\textbf{Optimization}} \\
\midrule
Optimizer & \multicolumn{7}{c}{RMSprop} \\
Learning rate & \multicolumn{5}{c}{$5 \times 10^{-4}$} & $1 \times 10^{-4}$ & $5 \times 10^{-4}$\\
Alpha & \multicolumn{7}{c}{$0.9$} \\
Epsilon & \multicolumn{7}{c}{$1 \times 10^{-8}$} \\
Weight decay & \multicolumn{7}{c}{$1 \times 10^{-6}$} \\
Momentum & \multicolumn{7}{c}{None} \\
\midrule
\multicolumn{5}{l}{\textbf{Regularization Parameters}} \\
\midrule
$\lambda$ & $0$ & $1$ & $1$ & $1$ & $1$ & NA  & NA \\
$\mu_1$ & $0$ & $0$ & $1$ & $1$ & $1$ & NA  & NA \\
$\mu_2$ & $0$ & $0$ & $0$ & $1$ & $1$ & NA  & NA \\
$\mu_3$ & $0$ & $0$ & $0$ & $0$ & $1$ & NA  & NA \\
$\text{weight for kl term}$ & NA & NA & NA & NA & NA & 3 & 3 \\
& \multicolumn{7}{c}{$c_1=1.0$, $c_2=c_3=0.0$} \\
\midrule
\multicolumn{6}{l}{\textbf{Training Protocol (with early stopping of patience 20)}} \\
\midrule
& \multicolumn{7}{c}{1000 epochs} \\
\bottomrule
\end{tabular}
\end{table}

Additional plots corresponding to \cref{tab:quadratic_training_params} are included in \cref{fig:quadratic-plots}.

\begin{figure}
    \centering
    
    \begin{subfigure}{\textwidth}
        \centering
        \includegraphics[trim={0 0 0 0cm},clip, width=0.9\textwidth]{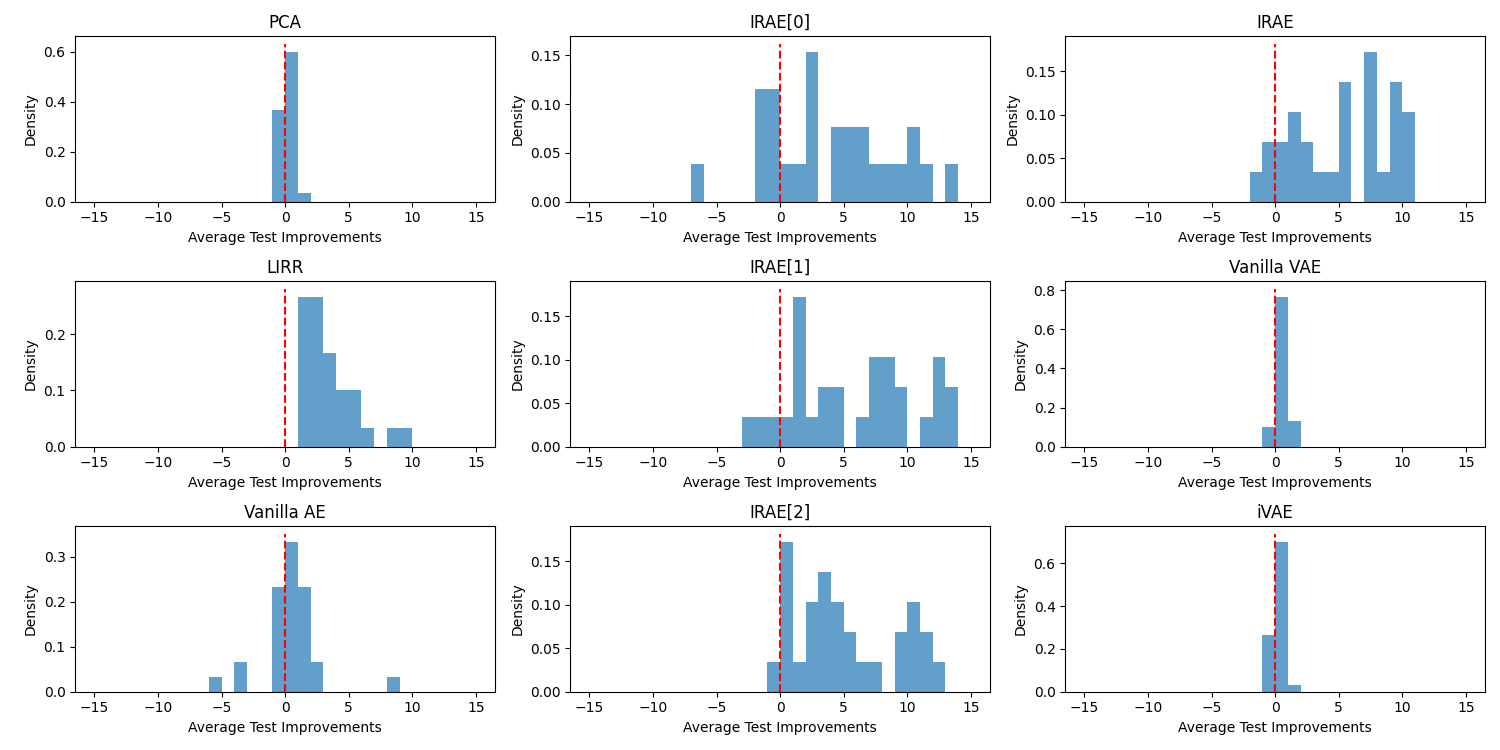}
        \caption{Quadratic DGP 1: Independent U and V}
        \label{fig:case1}
    \end{subfigure}
    
    \vspace{0.5cm}  %
    
    \begin{subfigure}{\textwidth}
        \centering
        \includegraphics[trim={0 0 0 0cm},clip, width=0.9\textwidth]{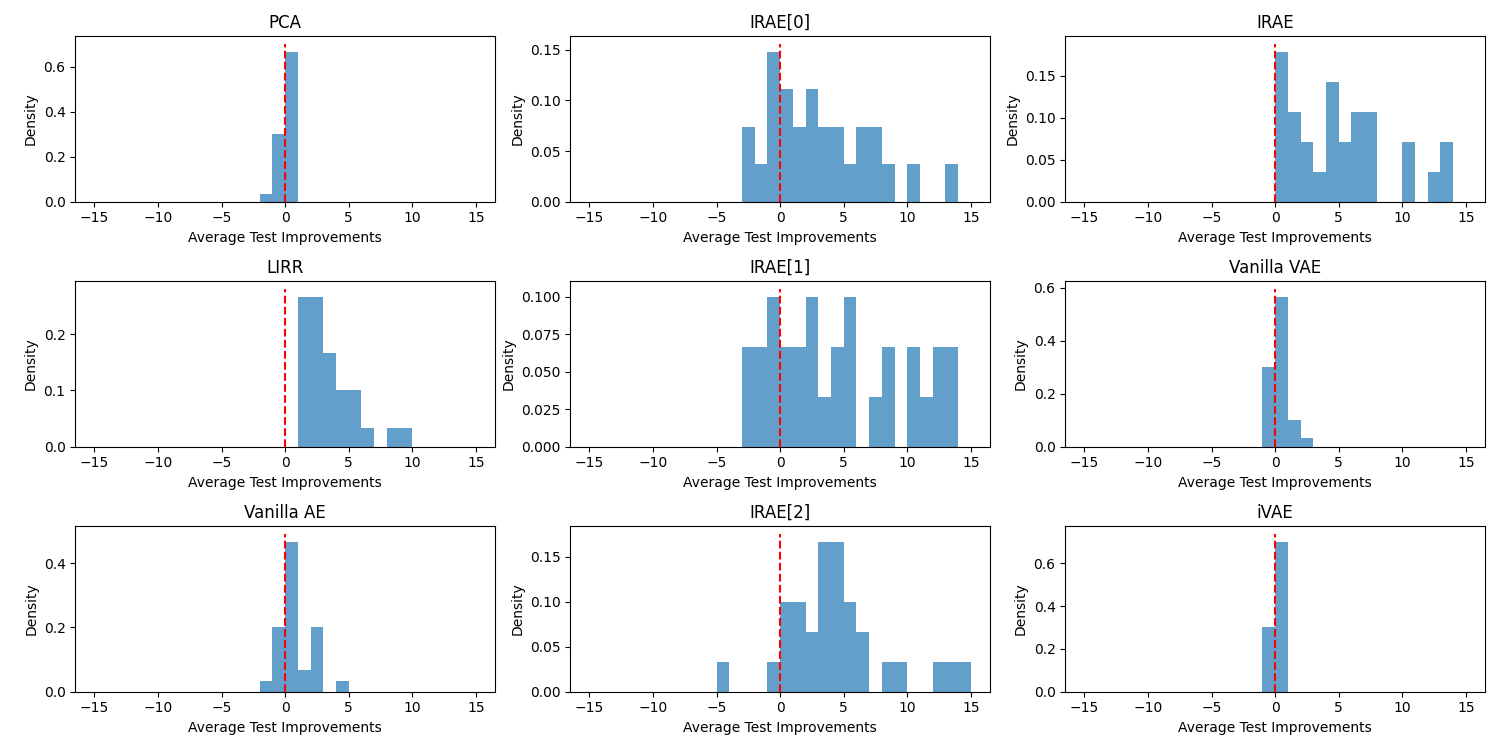}
        \caption{Quadratic DGP 2: Correlated U and Independent V}
        \label{fig:case2}
    \end{subfigure}
    
    \vspace{0.5cm}  %
    
    \begin{subfigure}{\textwidth}
        \centering
        \includegraphics[trim={0 0 0 0cm},clip, width=0.9\textwidth]{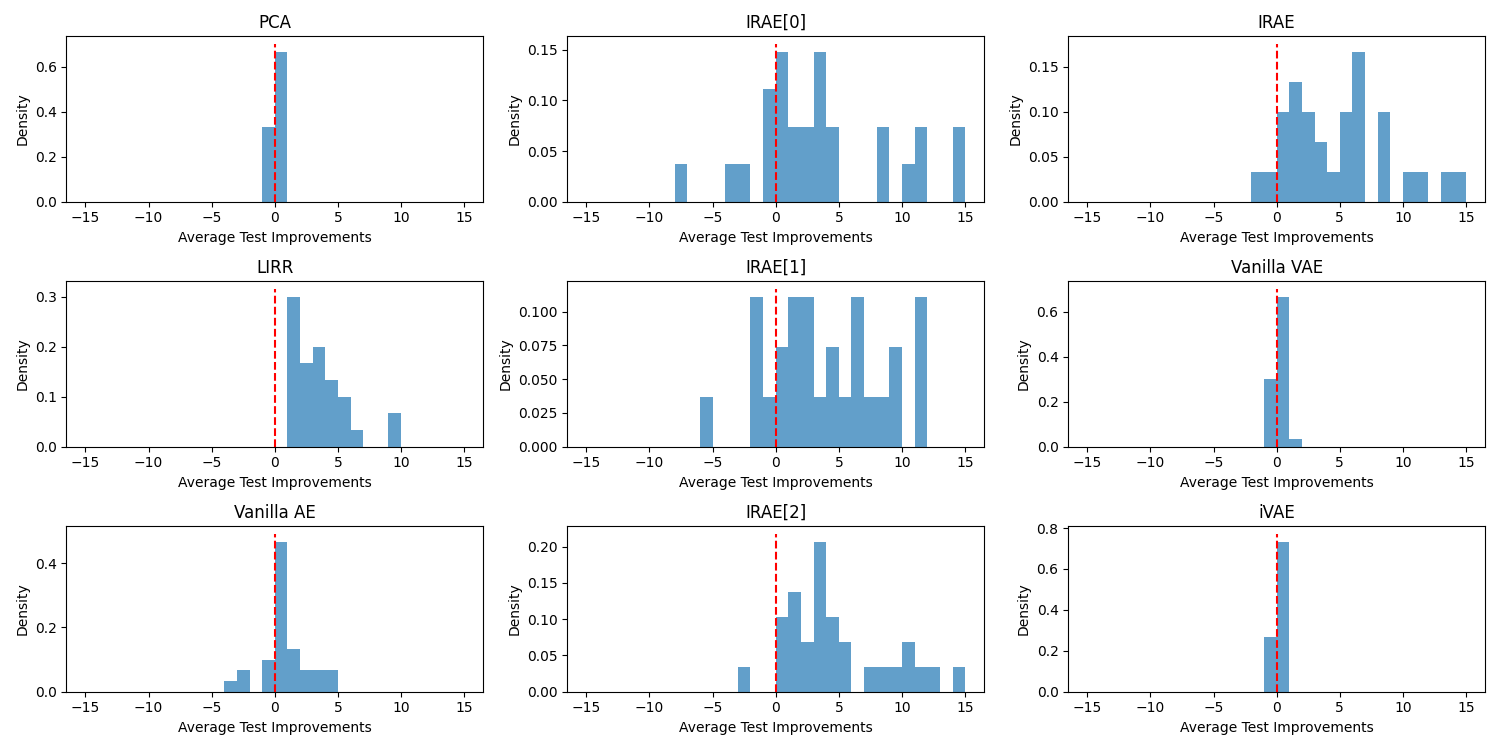}
        \caption{Quadratic DGP 3: Correlated U and V}
        \label{fig:case3}
    \end{subfigure}
    \caption{Distribution of Average Improvement for Quadratic Experiment}
    \label{fig:quadratic-plots}
\end{figure}

\subsection{MNIST Experiment 1}
This section provides details of the MNIST experiments briefly described in Section 5 of the main paper. Here we included detailed data generating equations, model hyperparameter, and plots for IRAE[0], IRAE[1], IRAE[2] that were not included in the main paper.

The data for MNIST experiment is generated using \emph{Case 1 DGP}.

\begin{tcolorbox}[title={\normalsize\bfseries Case 1 DGP}, colback=white, colframe=black!75!white]
Draw DGP parameters $\alpha,\beta \sim \text{Unif}(0.1, 0.7)$. Then generate samples as:
\begin{align*}
G_i &\in [0,1]^{28\times 28} &&\text{(grayscale MNIST image)}\\[4pt]
Z_i,\;U_i &\sim \mathcal N\!\bigl(0,I_2\bigr), 
\qquad Z_i\;\indep\;U_i &&\text{(instrument \& confounder)}\\[4pt]
r_i &= \operatorname{clip}\!\bigl(0.5 + \alpha\,Z_{i1} + \beta\,U_{i1},\,0,\,1\bigr) &&\text{(red channel)}\\[2pt]
g_i &= \operatorname{clip}\!\bigl(0.5 + \alpha\,Z_{i2} + \beta\,U_{i2},\,0,\,1\bigr) &&\text{(green channel)}\\[2pt]
b_i &= \operatorname{clip}\!\bigl(0.5 + \alpha\,\frac{Z_{i1} + Z_{i2}}{2},\,0,\,1\bigr) &&\text{(blue channel)}\\[6pt]
X_i(k,\ell,c) &= G_i(k,\ell)\,\cdot\,(r_i,g_i,b_i)_c, \qquad
\begin{aligned}
&c\in\{R,G,B\},\\
&(k,\ell)\in\{1,\dots,28\}^2
\end{aligned}
&&\text{(colour image)}\\[6pt]
Y_i &= r_i + g_i + b_i. &&\text{(outcome, details below)}
\end{align*}
Returns the tuples $\bigl(Z_i, X_i, Y_i\bigr)$.
\label{box:dgp1}
\end{tcolorbox}

All encoders consist of three Conv2D layers, followed by additional feedforward layers, and conclude with a linear projection. Decoders mirror this architecture in reverse order. For our IRAE[2] and IRAE models, we set the bottleneck dimension to 10 which is larger than $k=2$. For vanilla and IRAE[0], IRAE[1], the bottle neck is 2. The autoencoder with multiple HSIC regularization terms presents greater training challenges due to the complexity of term. To address this, we initialized IRAE[2] and IRAE with weights from the simpler IRAE[1] model. All of models are trained with 60k training samples and evaluated on 10k test set. More training details can be found in \cref{tab:mnist_model_params}.

\begin{table}[H]
\centering
\small
\begin{threeparttable}
\caption{Training Parameters for MNIST Simulations}
\label{tab:mnist_model_params}

\begin{tabular}{l|ccccc}
\toprule
\textbf{Parameter} & \textbf{Vanilla AE} & \textbf{IRAE[0]} & \textbf{IRAE[1]} & \textbf{IRAE[2]} & \textbf{IRAE} \\
\midrule
\multicolumn{6}{l}{\textbf{Architecture}} \\
\midrule
Kernel Size & \multicolumn{5}{c}{3} \\
Encoder channels & \multicolumn{5}{c}{16 $\rightarrow$ 32 $\rightarrow$ 64} \\
Decoder channels & \multicolumn{5}{c}{64 $\rightarrow$ 32 $\rightarrow$ 16} \\
Bottleneck dimension & 2 & 2 & 2 & 10 & 10 \\
\midrule
\multicolumn{5}{l}{\textbf{Optimization}} \\
\midrule
Optimizer & \multicolumn{5}{c}{Adam (default parameters in torch)} \\
Learning rate & \multicolumn{5}{c}{$1 \times 10^{-3}$} \\
Weight initialization & None & None & None & From IRAE[1] & From IRAE[1] \\
\midrule
\multicolumn{5}{l}{\textbf{Loss Weights}} \\
\midrule
$\lambda$ & 0 & 10 & 10 & 10 & 10 \\
$\mu_1$ & 0 & 0 & 10 & 10 & 10 \\
$\mu_2$ & 0 & 0 & 0 & 10 & 10 \\
$\mu_3$ & 0 & 0 & 0 & 0 & 10 \\
& \multicolumn{5}{c}{$c_1=0.8$, $c_2=0.15$, $c_3=0.05$} \\
\multicolumn{6}{c}{
Weights $\lambda, \mu_2, \mu_3$ took value $1$ whenever non-zero, instead of $10$ for experiments in the Case 2-4 DGPs.}\\
\midrule
\multicolumn{5}{l}{\textbf{Training Epochs (with early stopping of patience 5)}} \\
\midrule
 & 50 & 50 & 50 & 50* & 50* \\
\bottomrule
\end{tabular}
\begin{tablenotes}
\small
\item[*] Additional epochs after initializing with weights from IRAE[1]
\end{tablenotes}
\end{threeparttable}
\end{table}

\begin{remark}[Calculation of Outcome from Image] To calculate expected $Y_{\alpha u}$, we first perform 2-mean clustering on the image pixels and extract the red, green, blue values from the center of the colored cluster. Then, we take the sum of these values as $Y$. Note that is this similar to taking the average colors over the gray scale mask so the colors would be slightly smaller than the original colors. We tested the methods on the original image and the result is ~0.2 smaller on average.
\end{remark}

\begin{remark}[Calculation of Outcome Improvement]
When calculating the outcome improvement of the intervention, take the difference between the kmeans calculation described in the previous paragraph applied to the image produced by the intervention and we subtract the outcome of the kmeans calculation when applied to the original image.
\end{remark}

\begin{remark} We use a linear kernel for HSIC in order to perform benchmarking at a large scale in fast speed, which may not capture all nonlinear dependencies in this complex image representation setting. More complex independence statistics based on domain knowledge, could perhaps lead to more disentanglement, albeit they might also be harder to train. In subsequent section experiments we also examine a pairwise RBF Kernel based HSIC and we find that it does not lead to improved performance as compared to the linear kernel. 
\end{remark}

\begin{remark} We observe that this example does not perfectly align with the formulation in \cref{eqn:structural}. Here, the number of instruments is 2, which is fewer than the natural representation of D of 3 colors. We may be able to interpret the learned representation as a 2-dimensional subspace of the 3-dimensional color representation, but the mapping from Z to D is still not immediately invertible as assumed in the theory. Additionally, while our theoretical analysis assumes a mapping from color $D$ to outcome directly, our calculation employs k-means clustering on $X$ instead. Nevertheless, this example demonstrates that our method performs robustly even in settings beyond those covered by our theoretical guarantees, and offers potential future directions of theoretical investigation.
\end{remark}

\begin{figure}
\centering
\includegraphics[width=\textwidth]{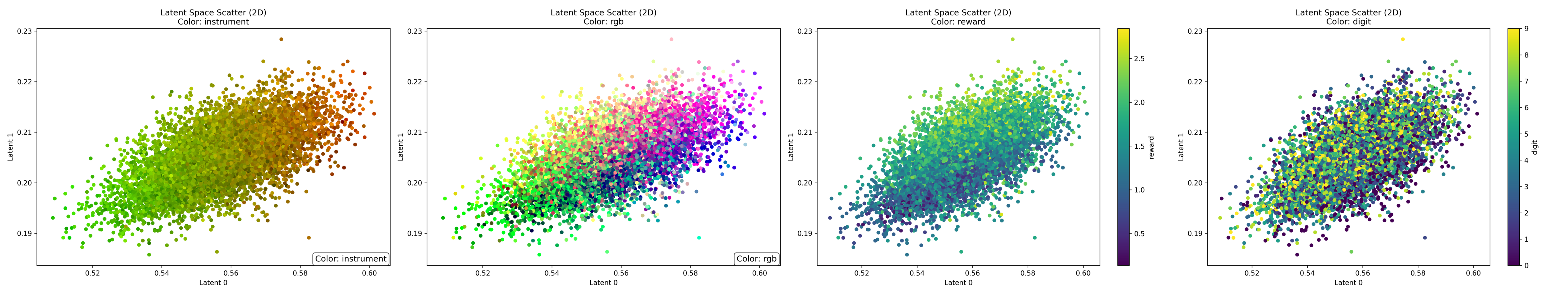}
\caption{Original gray, original color, reconstructed, treated($\alpha=0.2$) and treated($\alpha=1.0$) for the IRAE[2] trained model (Case 1 DGP).}\label{fig:dgp1-digits3-scatter}
\end{figure}

\begin{figure}
\centering
\includegraphics[width=\textwidth]{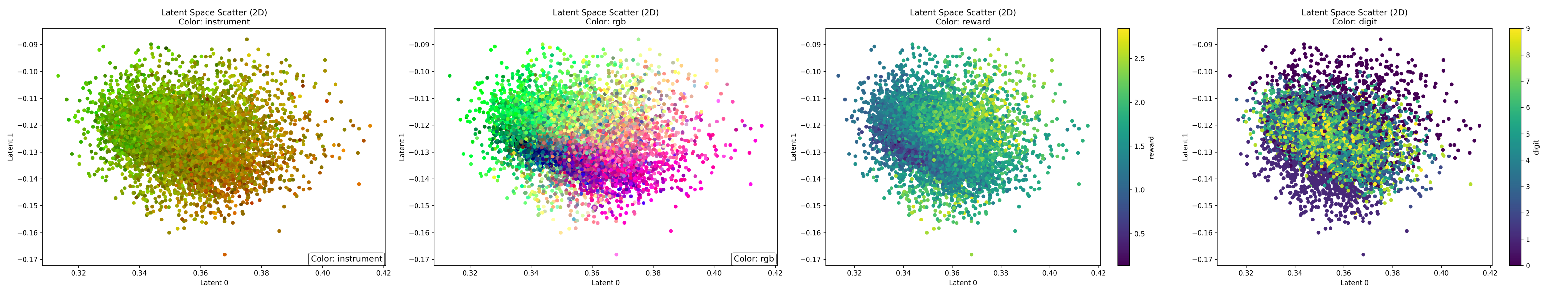}
\caption{Original gray, original color, reconstructed, treated($\alpha=0.2$) and treated($\alpha=1.0$) for the IRAE[1] trained model (Case 1 DGP).}\label{fig:dgp1-digits4-scatter}
\end{figure}

\begin{figure}
\centering
\includegraphics[width=\textwidth]{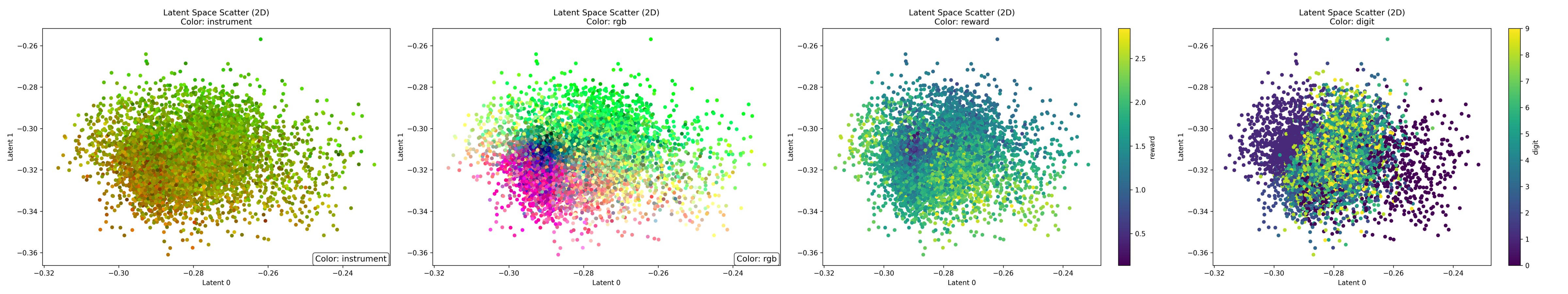}
\caption{Original gray, original color, reconstructed, treated($\alpha=0.2$) and treated($\alpha=1.0$) for the IRAE[0] trained model (Case 1 DGP).}\label{fig:dgp1-digits5-scatter}
\end{figure}

\begin{figure}
\includegraphics[width=\textwidth]{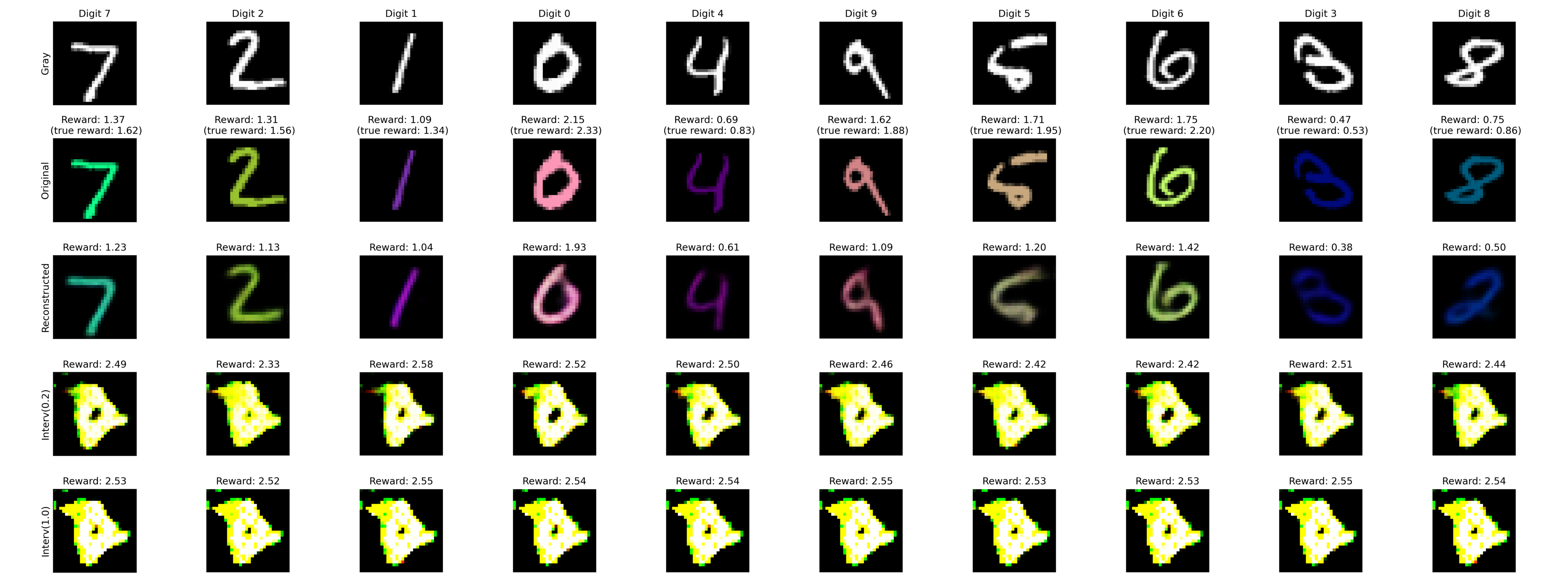}
\caption{{\bf IRAE[2]} on Case 1 DGP for one random seed (random seed 22), with a Conv AutoEncoder, linear HSIC as independence criterion, {\bf latent dimension 10}, regularization weights $\lambda=\mu_1=\mu_2=10$ and training for 50 epochs with early stopping (patience 5 epochs) warm start from IRAE1.}\label{fig:dgp1-digits3}
\end{figure}

\begin{figure}
\includegraphics[width=\textwidth]{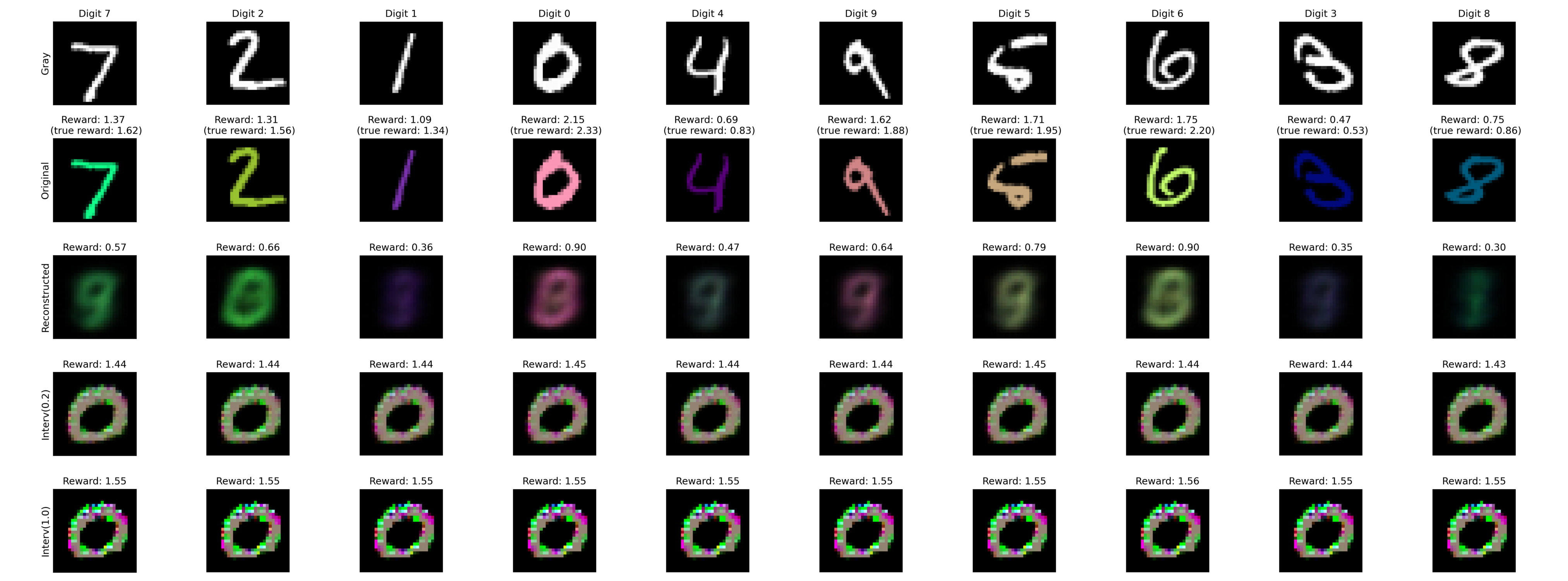}
\caption{{\bf IRAE[1]} on Case 1 DGP for one random seed (random seed 22), with a Conv AutoEncoder, linear HSIC as independence criterion, {\bf latent dimension 2}, regularization weights $\lambda=\mu_1=10$ and training for 50 epochs with early stopping (patience 5 epochs) from scratch}\label{fig:dgp1-digits4}
\end{figure}

\begin{figure}
\includegraphics[width=\textwidth]{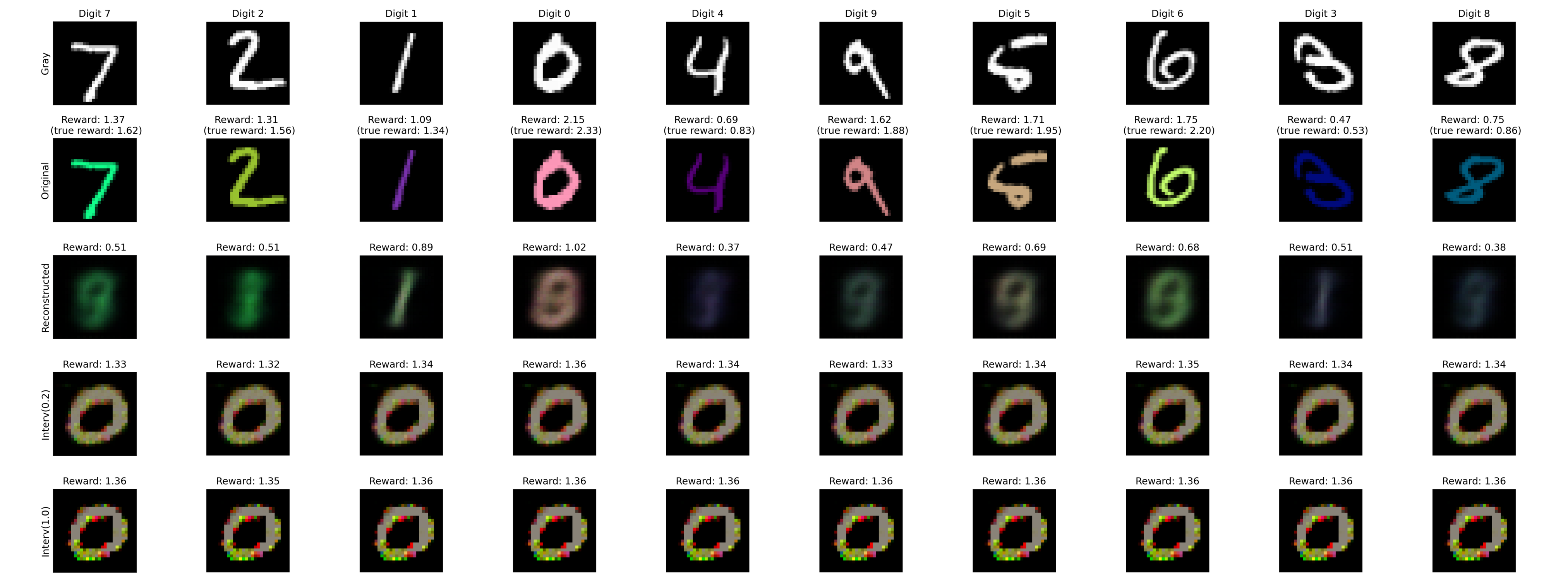}
\caption{{\bf IRAE[0]} on Case 1 DGP for one random seed (random seed 22), with a Conv AutoEncoder, linear HSIC as independence criterion, {\bf latent dimension 2}, regularization weights $\lambda=\mu_1=10$ and training for 50 epochs with early stopping (patience 5 epochs) from scratch}\label{fig:dgp1-digits4}
\end{figure}

\subsection{MNIST Experiment 2}
Building on the results from our MNIST experiments in Section 5, we conducted a more comprehensive evaluation by exploring additional hyperparameter configurations and data generating processes. Given that independence test statistics are often complex and challenging to train, we systematically investigated various model architectures, independence test statistics calculation, and initialization strategies to identify optimal configurations. To align with our theoretical requirements outlined in \cref{ass:fr_a}, we evaluated our approach on a supplementary dataset with three instruments, denoted as \emph{Case 2 DGP}. 

Our findings reveal that simpler dense architectures perform at least as well as, and often better than, more complex convolutional neural networks for this task. Furthermore, we observed that larger bottleneck dimensions in IRAE[2] and IRAE models better preserve the original digit morphology in treated images — a potentially valuable property when morphological features is confounded the outcome variable.

The full set of hyperparameters explored are included in \cref{tab:mnist2-summary}. All of models are trained with 60k training samples and evaluated on 10k test set, for 40 random seeds. Regularization weights are 0 or 1. All models are trained with 50 epochs after initialization with early stopping of patience 5.

\begin{table}
\centering
\small
\caption{Summary of parameters explored in MNIST Experiment 2}
\label{tab:mnist2-summary}
\begin{tabular}{p{3.2cm}p{2.6cm}p{7cm}}
\toprule
\makecell[l]{\textbf{Setting}\\\textbf{Category}} & \textbf{Options} & \textbf{Description} \\
\midrule
\multirow{2}{*}{\makecell[l]{\textbf{Data}\\\textbf{Generating Process}}} & DGP2 & Three Instruments \\
&&\\
\midrule
\multirow{4}{*}{\makecell[l]{\textbf{Autoencoder}\\\textbf{Architecture}}} & \multirow{2}{*}{Dense} & \textbf{Encoder:} Dense layer $3 \times 28 \times 28 \rightarrow 512$, followed by linear projection to latent dimension \\
& & \textbf{Decoder:} Linear layer from latent dimension to 512, followed by dense layer $512 \rightarrow 3 \times 28 \times 28$ \\
\cmidrule{2-3}
& \multirow{2}{*}{Convolution} & \textbf{Encoder:} Three Conv2D layers with channel $16 \rightarrow 32 \rightarrow 64$ of kernel size 3, followed by a dense layer of size 256 and linear projection to latent dimension \\
& & \textbf{Decoder:} Linear layer from latent dimension to size 256, followed by dense layer and three Conv2D layers with channel $64 \rightarrow 32 \rightarrow 16$ of kernel size 3 \\
\midrule
\multirow{3}{*}{\makecell[l]{\textbf{Latent}\\\textbf{Dimension}\\\textbf{IRAE[2] and IRAE}}} 
& 10 & Used for IRAE[2] and IRAE models\\
& 32 & Used for IRAE[2] and IRAE models\\
&& \\
\midrule
\multirow{2}{*}{\makecell[l]{\textbf{Regularization}\\\textbf{Type}}} & Linear HSIC & Applied as independence measure on the entire vector \\
& Pairwise HSIC & Applied between pairwise coordinates \\

\midrule
\multirow{2}{*}{\makecell[l]{\textbf{Weight}\\\textbf{Initialization}\\\textbf{IRAE[2] and IRAE}}} & Without warmstart & Training from randomly initialized weights for 50 epochs \\
& With warmstart & Initializing with weights transferred from a pre-trained IRAE[1] model, and training for additional 50 epochs \\
\bottomrule
\end{tabular}
\end{table}

\begin{tcolorbox}[title={\normalsize\bfseries Case 2 DGP}, colback=white, colframe=black!75!white]
Draw DGP parameters $\alpha,\beta \sim \text{Unif}(0.1, 0.7)$. Then generate samples as:
\begin{align*}
G_i &\in [0,1]^{28\times 28} &&\text{(grayscale MNIST image)}\\[4pt]
Z_i,\;U_i &\sim \mathcal N\!\bigl(0,I_3\bigr), 
\qquad Z_i\;\indep\;U_i &&\text{(instrument \& confounder)}\\[4pt]
r_i &= \operatorname{clip}\!\bigl(0.5 + \alpha\,Z_{i1} + \beta\,U_{i1},\,0,\,1\bigr) &&\text{(red channel)}\\[2pt]
g_i &= \operatorname{clip}\!\bigl(0.5 + \alpha\,Z_{i2} + \beta\,U_{i2},\,0,\,1\bigr) &&\text{(green channel)}\\[2pt]
b_i &= \operatorname{clip}\!\bigl(0.5 + \alpha\,Z_{i3} + \beta\,U_{i3},\,0,\,1\bigr) &&\text{(blue channel)}\\[6pt]
X_i(k,\ell,c) &= G_i(k,\ell)\,\cdot\,(r_i,g_i,b_i)_c, \qquad
\begin{aligned}
&c\in\{R,G,B\},\\
&(k,\ell)\in\{1,\dots,28\}^2
\end{aligned}
&&\text{(colour image)}\\[6pt]
Y_i &= r_i + g_i + b_i. &&\text{(outcome)}
\end{align*}
Returns the tuples $\bigl(Z_i, X_i, Y_i\bigr)$.
\label{box:dgp2}
\end{tcolorbox}

We highlight some findings from our exploration of the performance of our proposed methods across various hyperparameter dimensions:

\textbf{Architecture:} We found that simple dense layers can achieve better performance than convolutional architectures for this task, suggesting that Conv2D layers may be unnecessarily complex for this particular example.

\textbf{Data Generating Process:} Our experimental results demonstrate that the relative performance of our methods remains consistent across both DGP1 and DGP2.

\textbf{Latent Dimension:} When using larger latent dimensions (32), both the reconstructed and treated images preserved more of the original digit morphology although the improvement is smaller (c.f. \cref{fig:dgp2-digits1-2,fig:dgp2-digits2-2,fig:dgp2-digits3-2,fig:dgp2-digits4-2,fig:dgp2-digits5-2,fig:dgp2-digits6-2}). This may be a desired property in some cases, especially in the case that the digit morphology is a confounder (not tested in our experiment) and has a direct effect on the outcome.

\textbf{Regularization Type:} While pairwise HSIC may theoretically capture more nonlinear dependencies, we found that it was often more difficult to train in practice. Linear HSIC consistently yielded better performance with greater training stability.

\textbf{Weight Initialization:} Dense architectures performed well without warm start initialization, while convolutional architectures benefited significantly from weight transfer. This difference likely stems from the higher complexity and larger parameter space of convolutional networks.

Overall, the best improvement model stems from the IRAE method with all regularizers, a Dense architecture, latent = 10, linear HSIC with no warm start.

\begin{figure}
\centering
\tiny
\begin{tabular}{llllllllll}
\toprule
 &  &  &  &  & Vanilla AE & IRAE[0] & IRAE[1] & IRAE[2] & IRAE \\
Arch. & Latent Dim & Reg Type & Warm Start & image &  &  &  &  &  \\
\midrule
\multirow[t]{24}{*}{dense} & \multirow[t]{12}{*}{10} & \multirow[t]{6}{*}{linear} & \multirow[t]{3}{*}{False} & reconstructed & -0.46 (0.02) & -0.67 (0.02) & -0.67 (0.02) & \textbf{-0.27 (0.01)} & -0.28 (0.02) \\
 &  &  &  & intervened(0.2) & -0.45 (0.02) & \textbf{1.4 (0.12)} & \textbf{1.4 (0.1)} & 1.39 (0.15) & 1.39 (0.2) \\
 &  &  &  & intervened(1.0) & -0.37 (0.02) & 1.54 (0.11) & 1.54 (0.09) & \textbf{1.57 (0.12)} & 1.54 (0.18) \\
\cline{4-10}
 &  &  & \multirow[t]{3}{*}{True} & reconstructed & -0.46 (0.02) & -0.67 (0.02) & -0.67 (0.02) & \textbf{-0.36 (0.14)} & -0.39 (0.16) \\
 &  &  &  & intervened(0.2) & -0.45 (0.02) & \textbf{1.4 (0.12)} & \textbf{1.4 (0.1)} & 1.17 (0.53) & 1.12 (0.58) \\
 &  &  &  & intervened(1.0) & -0.37 (0.02) & \textbf{1.54 (0.11)} & \textbf{1.54 (0.09)} & 1.32 (0.5) & 1.18 (0.6) \\
\cline{3-10} \cline{4-10}
 &  & \multirow[t]{6}{*}{pairwise} & \multirow[t]{3}{*}{False} & reconstructed & -0.46 (0.02) & -0.67 (0.02) & -0.68 (0.01) & \textbf{-0.3 (0.03)} & -0.35 (0.03) \\
 &  &  &  & intervened(0.2) & -0.45 (0.02) & \textbf{1.4 (0.12)} & \textbf{1.4 (0.14)} & -0.09 (0.37) & 0.12 (0.53) \\
 &  &  &  & intervened(1.0) & -0.37 (0.02) & \textbf{1.54 (0.11)} & 1.53 (0.13) & 0.09 (0.57) & 0.35 (0.58) \\
\cline{4-10}
 &  &  & \multirow[t]{3}{*}{True} & reconstructed & -0.46 (0.02) & -0.67 (0.02) & -0.68 (0.01) & \textbf{-0.33 (0.1)} & -0.57 (0.22) \\
 &  &  &  & intervened(0.2) & -0.45 (0.02) & \textbf{1.4 (0.12)} & \textbf{1.4 (0.14)} & 1.31 (0.24) & 0.87 (0.78) \\
 &  &  &  & intervened(1.0) & -0.37 (0.02) & \textbf{1.54 (0.11)} & 1.53 (0.13) & 1.49 (0.15) & 1.12 (0.59) \\
\cline{2-10} \cline{3-10} \cline{4-10}
 & \multirow[t]{12}{*}{32} & \multirow[t]{6}{*}{linear} & \multirow[t]{3}{*}{False} & reconstructed & -0.46 (0.02) & -0.67 (0.02) & -0.67 (0.02) & \textbf{-0.14 (0.02)} & \textbf{-0.14 (0.01)} \\
 &  &  &  & intervened(0.2) & -0.45 (0.02) & \textbf{1.4 (0.12)} & \textbf{1.4 (0.1)} & 0.74 (0.34) & 0.66 (0.38) \\
 &  &  &  & intervened(1.0) & -0.37 (0.02) & \textbf{1.54 (0.11)} & \textbf{1.54 (0.09)} & 1.43 (0.31) & 1.35 (0.38) \\
\cline{4-10}
 &  &  & \multirow[t]{3}{*}{True} & reconstructed & -0.46 (0.02) & -0.67 (0.02) & -0.67 (0.02) & \textbf{-0.26 (0.12)} & \textbf{-0.26 (0.15)} \\
 &  &  &  & intervened(0.2) & -0.45 (0.02) & \textbf{1.4 (0.12)} & \textbf{1.4 (0.1)} & 1.08 (0.36) & 1.03 (0.5) \\
 &  &  &  & intervened(1.0) & -0.37 (0.02) & \textbf{1.54 (0.11)} & \textbf{1.54 (0.09)} & 1.29 (0.42) & 1.19 (0.49) \\
\cline{3-10} \cline{4-10}
 &  & \multirow[t]{6}{*}{pairwise} & \multirow[t]{3}{*}{False} & reconstructed & -0.46 (0.02) & -0.67 (0.02) & -0.68 (0.01) & \textbf{-0.13 (0.01)} & -0.19 (0.02) \\
 &  &  &  & intervened(0.2) & -0.45 (0.02) & \textbf{1.4 (0.12)} & \textbf{1.4 (0.14)} & -0.15 (0.05) & -0.23 (0.09) \\
 &  &  &  & intervened(1.0) & -0.37 (0.02) & \textbf{1.54 (0.11)} & 1.53 (0.13) & -0.2 (0.18) & -0.26 (0.3) \\
\cline{4-10}
 &  &  & \multirow[t]{3}{*}{True} & reconstructed & -0.46 (0.02) & -0.67 (0.02) & -0.68 (0.01) & \textbf{-0.19 (0.05)} & -0.31 (0.15) \\
 &  &  &  & intervened(0.2) & -0.45 (0.02) & \textbf{1.4 (0.12)} & \textbf{1.4 (0.14)} & 0.07 (0.41) & 0.02 (0.47) \\
 &  &  &  & intervened(1.0) & -0.37 (0.02) & \textbf{1.54 (0.11)} & 1.53 (0.13) & 0.42 (0.65) & 0.4 (0.68) \\
\cline{1-10} \cline{2-10} \cline{3-10} \cline{4-10}
\multirow[t]{24}{*}{conv} & \multirow[t]{12}{*}{10} & \multirow[t]{6}{*}{linear} & \multirow[t]{3}{*}{False} & reconstructed & -0.37 (0.02) & -0.6 (0.06) & -0.6 (0.05) & \textbf{-0.21 (0.03)} & -0.22 (0.03) \\
 &  &  &  & intervened(0.2) & -0.36 (0.03) & 0.21 (0.34) & 0.4 (0.4) & \textbf{0.98 (0.23)} & 0.91 (0.32) \\
 &  &  &  & intervened(1.0) & -0.31 (0.07) & 0.4 (0.56) & 0.69 (0.58) & \textbf{1.25 (0.55)} & 1.18 (0.57) \\
\cline{4-10}
 &  &  & \multirow[t]{3}{*}{True} & reconstructed & -0.37 (0.02) & -0.6 (0.06) & -0.6 (0.05) & \textbf{-0.2 (0.04)} & \textbf{-0.2 (0.03)} \\
 &  &  &  & intervened(0.2) & -0.36 (0.03) & 0.21 (0.34) & 0.4 (0.4) & \textbf{1.0 (0.45)} & 0.9 (0.51) \\
 &  &  &  & intervened(1.0) & -0.31 (0.07) & 0.4 (0.56) & 0.69 (0.58) & \textbf{0.89 (0.75)} & 0.64 (0.75) \\
\cline{3-10} \cline{4-10}
 &  & \multirow[t]{6}{*}{pairwise} & \multirow[t]{3}{*}{False} & reconstructed & -0.37 (0.02) & -0.6 (0.06) & -0.6 (0.05) & \textbf{-0.22 (0.05)} & -0.27 (0.08) \\
 &  &  &  & intervened(0.2) & -0.36 (0.03) & 0.21 (0.34) & 0.04 (0.45) & 0.47 (0.42) & \textbf{0.49 (0.47)} \\
 &  &  &  & intervened(1.0) & -0.31 (0.07) & 0.4 (0.56) & 0.12 (0.57) & 0.86 (0.47) & \textbf{0.89 (0.55)} \\
\cline{4-10}
 &  &  & \multirow[t]{3}{*}{True} & reconstructed & -0.37 (0.02) & -0.6 (0.06) & -0.6 (0.05) & \textbf{-0.26 (0.07)} & -0.3 (0.08) \\
 &  &  &  & intervened(0.2) & -0.36 (0.03) & 0.21 (0.34) & 0.04 (0.45) & \textbf{0.82 (0.47)} & 0.54 (0.51) \\
 &  &  &  & intervened(1.0) & -0.31 (0.07) & 0.4 (0.56) & 0.12 (0.57) & \textbf{1.08 (0.55)} & 0.75 (0.64) \\
\cline{2-10} \cline{3-10} \cline{4-10}
 & \multirow[t]{12}{*}{32} & \multirow[t]{6}{*}{linear} & \multirow[t]{3}{*}{False} & reconstructed & -0.37 (0.02) & -0.6 (0.06) & -0.6 (0.05) & \textbf{-0.1 (0.03)} & -0.11 (0.04) \\
 &  &  &  & intervened(0.2) & -0.36 (0.03) & 0.21 (0.34) & 0.4 (0.4) & \textbf{0.7 (0.33)} & 0.56 (0.38) \\
 &  &  &  & intervened(1.0) & -0.31 (0.07) & 0.4 (0.56) & 0.69 (0.58) & \textbf{1.26 (0.39)} & 1.11 (0.5) \\
\cline{4-10}
 &  &  & \multirow[t]{3}{*}{True} & reconstructed & -0.37 (0.02) & -0.6 (0.06) & -0.6 (0.05) & \textbf{-0.1 (0.03)} & \textbf{-0.1 (0.04)} \\
 &  &  &  & intervened(0.2) & -0.36 (0.03) & 0.21 (0.34) & 0.4 (0.4) & 1.05 (0.49) & \textbf{1.13 (0.38)} \\
 &  &  &  & intervened(1.0) & -0.31 (0.07) & 0.4 (0.56) & 0.69 (0.58) & \textbf{1.11 (0.57)} & 1.09 (0.64) \\
\cline{3-10} \cline{4-10}
 &  & \multirow[t]{6}{*}{pairwise} & \multirow[t]{3}{*}{False} & reconstructed & -0.37 (0.02) & -0.6 (0.06) & -0.6 (0.05) & \textbf{-0.11 (0.03)} & -0.14 (0.06) \\
 &  &  &  & intervened(0.2) & -0.36 (0.03) & \textbf{0.21 (0.34)} & 0.04 (0.45) & 0.02 (0.26) & -0.02 (0.31) \\
 &  &  &  & intervened(1.0) & -0.31 (0.07) & \textbf{0.4 (0.56)} & 0.12 (0.57) & 0.21 (0.49) & 0.1 (0.55) \\
\cline{4-10}
 &  &  & \multirow[t]{3}{*}{True} & reconstructed & -0.37 (0.02) & -0.6 (0.06) & -0.6 (0.05) & \textbf{-0.14 (0.08)} & -0.19 (0.07) \\
 &  &  &  & intervened(0.2) & -0.36 (0.03) & 0.21 (0.34) & 0.04 (0.45) & 0.4 (0.56) & \textbf{0.5 (0.45)} \\
 &  &  &  & intervened(1.0) & -0.31 (0.07) & 0.4 (0.56) & 0.12 (0.57) & 0.68 (0.68) & \textbf{0.82 (0.59)} \\
\cline{1-10} \cline{2-10} \cline{3-10} \cline{4-10}
\bottomrule
\end{tabular}

\caption{Experimental results for the {\bf Case 2} data generating process. Mean improvement and standard deviation of improvement is reported. \emph{reconstructed} refers to the mean outcome improvement of the reconstructed image from the autoencoder with no intervention in the latents, as compared to the original image. \emph{intervened($\alpha$)} refers to the mean outcome improvement of the image produced by intervening on the latents in direction $\alpha \cdot u$, where $u=\theta/\|\theta\|$ and $\theta$ is estimated by 2SLS in latent space.}
\end{figure}

\begin{figure}
\includegraphics[width=\textwidth]{paper_images/irae3_22_config_dgp2_dense_32_1.0_linear_hsic_50_256_5_False_50_examples.png}
\caption{{\bf IRAE} on Case 2 DGP for one random seed (random seed 22), with a Dense AutoEncoder, linear HSIC as independence criterion, {\bf latent dimension 32}, regularization weights $\lambda=\mu_1=\mu_2=\mu_3=1$ and training for 50 epochs with early stopping (patience 5 epochs) from scratch (no warm start from IRAE1).}\label{fig:dgp2-digits1-2}
\end{figure}

\begin{figure}[H]
\includegraphics[width=\textwidth]{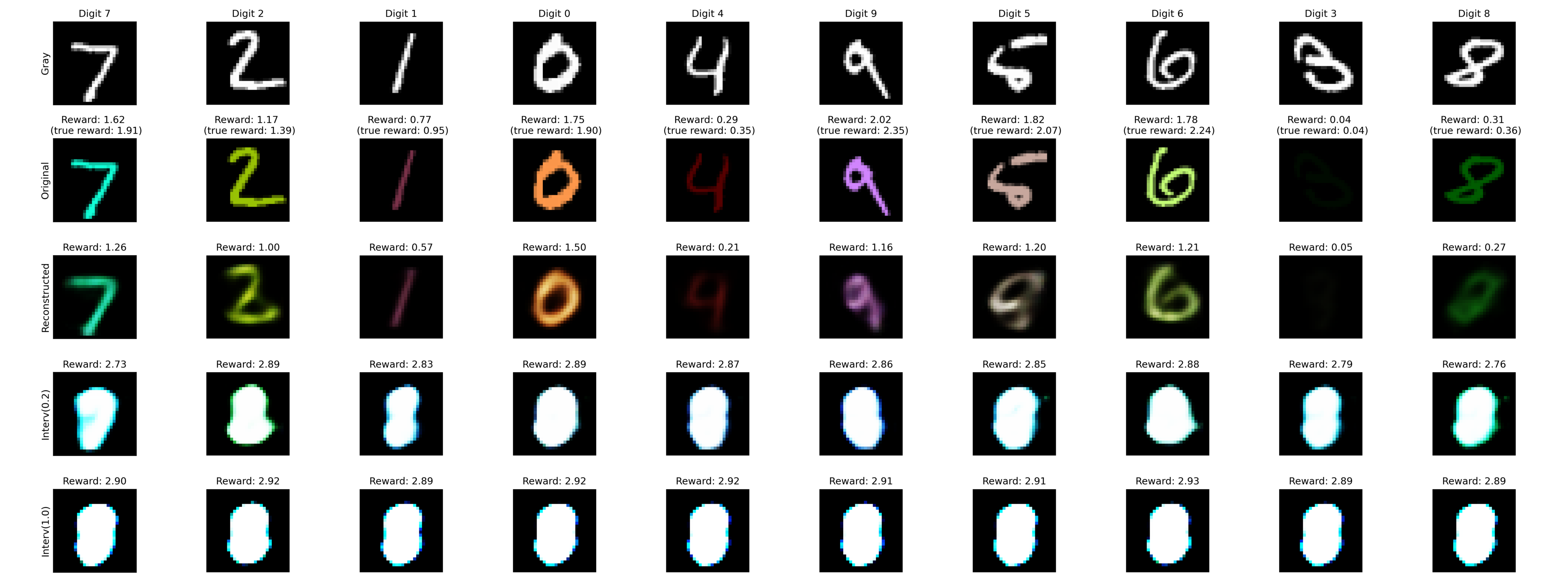}
\caption{{\bf IRAE} on Case 2 DGP for one random seed (random seed 22), with a Dense AutoEncoder, linear HSIC as independence criterion, {\bf latent dimension 10}, regularization weights $\lambda=\mu_1=\mu_2=\mu_3=1$ and training for 50 epochs with early stopping (patience 5 epochs) from scratch (no warm start from IRAE1).}\label{fig:dgp2-digits2-2}
\end{figure}

\begin{figure}[H]
\includegraphics[width=\textwidth]{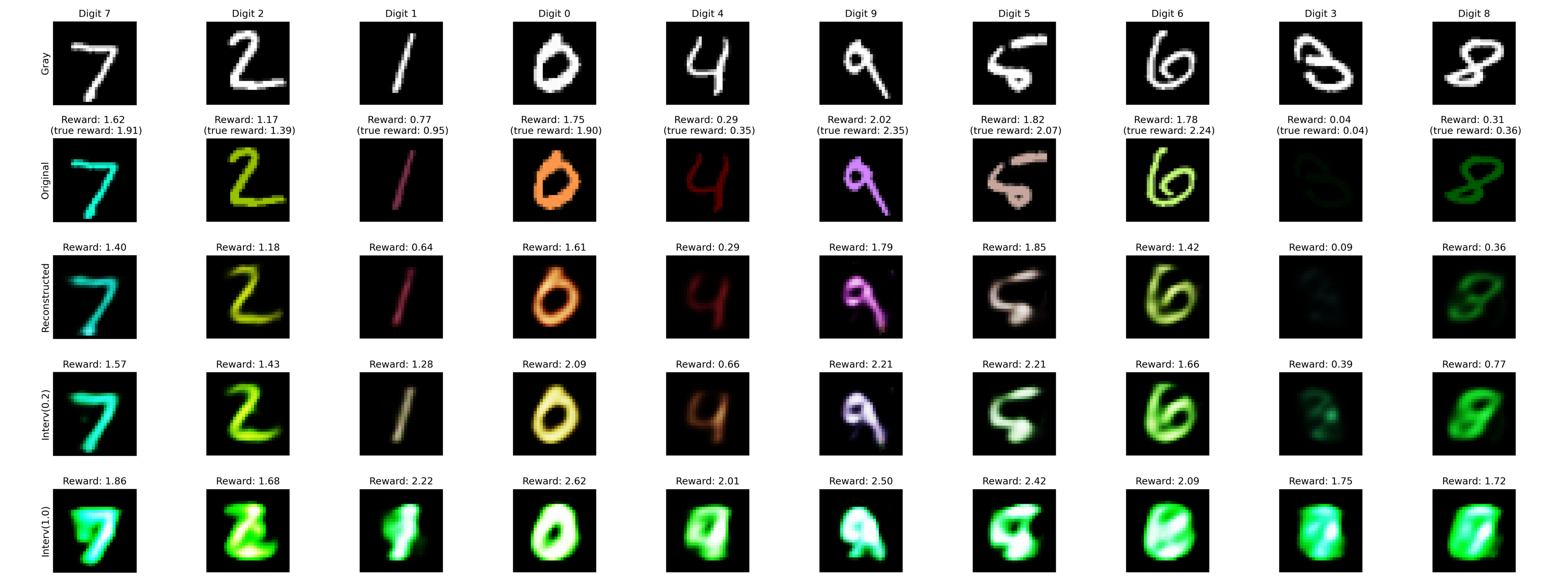}
\caption{{\bf IRAE[2]} on Case 2 DGP for one random seed (random seed 22), with a Dense AutoEncoder, linear HSIC as independence criterion, {\bf latent dimension 32}, regularization weights $\lambda=\mu_1=\mu_2=1$ and training for 50 epochs with early stopping (patience 5 epochs) from scratch (no warm start from IRAE[1]).}\label{fig:dgp2-digits3-2}
\end{figure}

\begin{figure}[H]
\includegraphics[width=\textwidth]{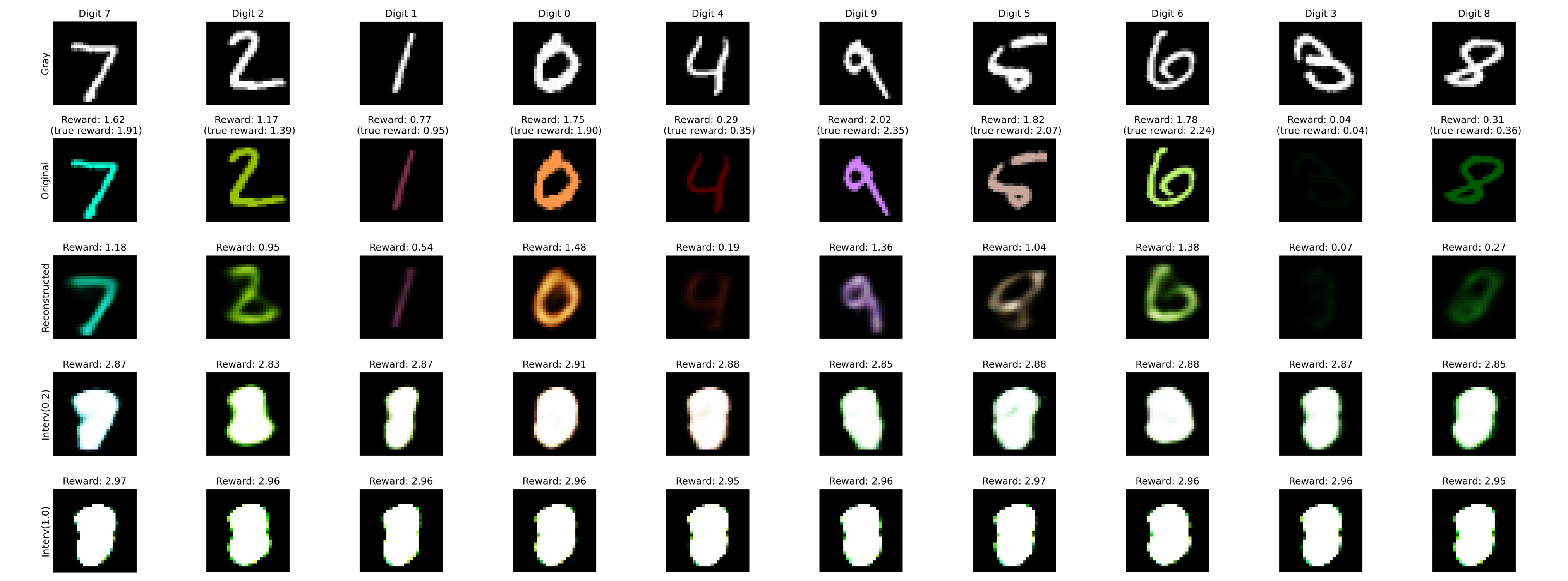}
\caption{{\bf IRAE[2]} on Case 2 DGP for one random seed (random seed 22), with a Dense AutoEncoder, linear HSIC as independence criterion, {\bf latent dimension 10}, regularization weights $\lambda=\mu_1=\mu_2=1$ and training for 50 epochs with early stopping (patience 5 epochs) from scratch (no warm start from IRAE[1]).}\label{fig:dgp2-digits4-2}
\end{figure}

\begin{figure}[H]
\includegraphics[width=\textwidth]{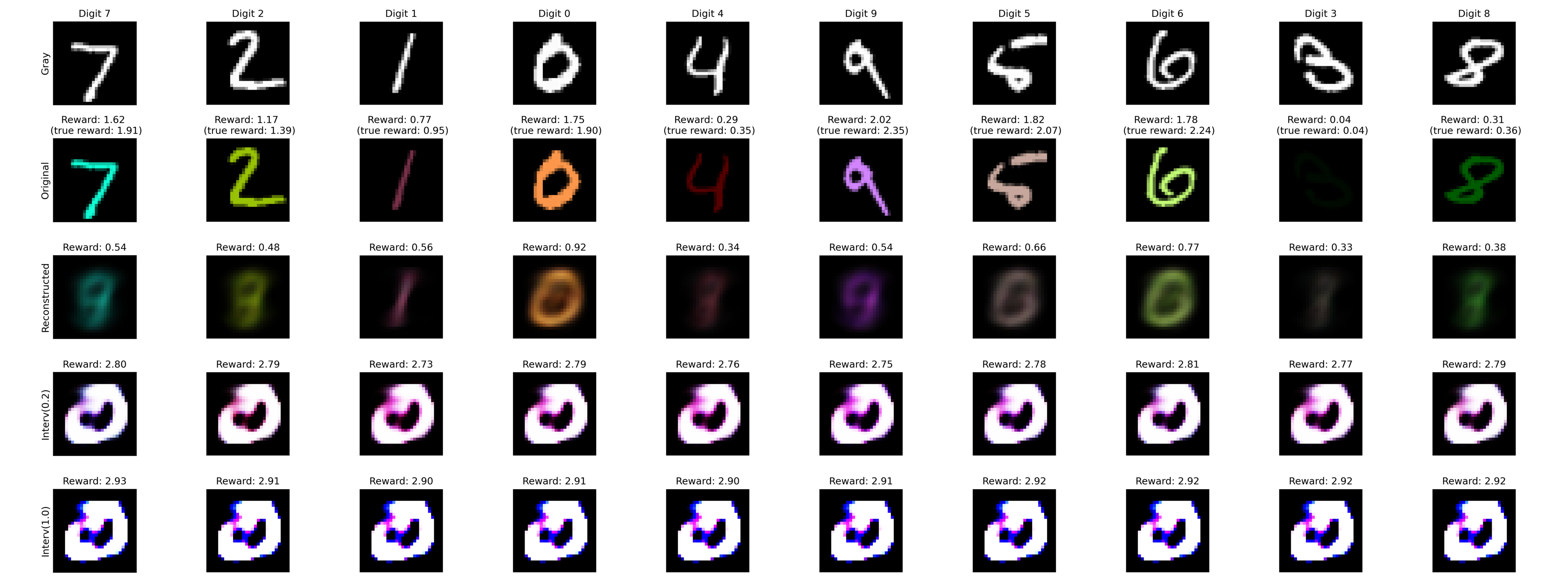}
\caption{{\bf IRAE[1]} on Case 2 DGP for one random seed (random seed 22), with a Dense AutoEncoder, linear HSIC as independence criterion, {\bf latent dimension 3 = number of instruments}, regularization weights $\lambda=\mu_1=1$ and $\mu_2=\mu_3=0$ and training for 50 epochs with early stopping (patience 5 epochs).}\label{fig:dgp2-digits5-2}
\end{figure}

\begin{figure}[H]
\includegraphics[width=\textwidth]{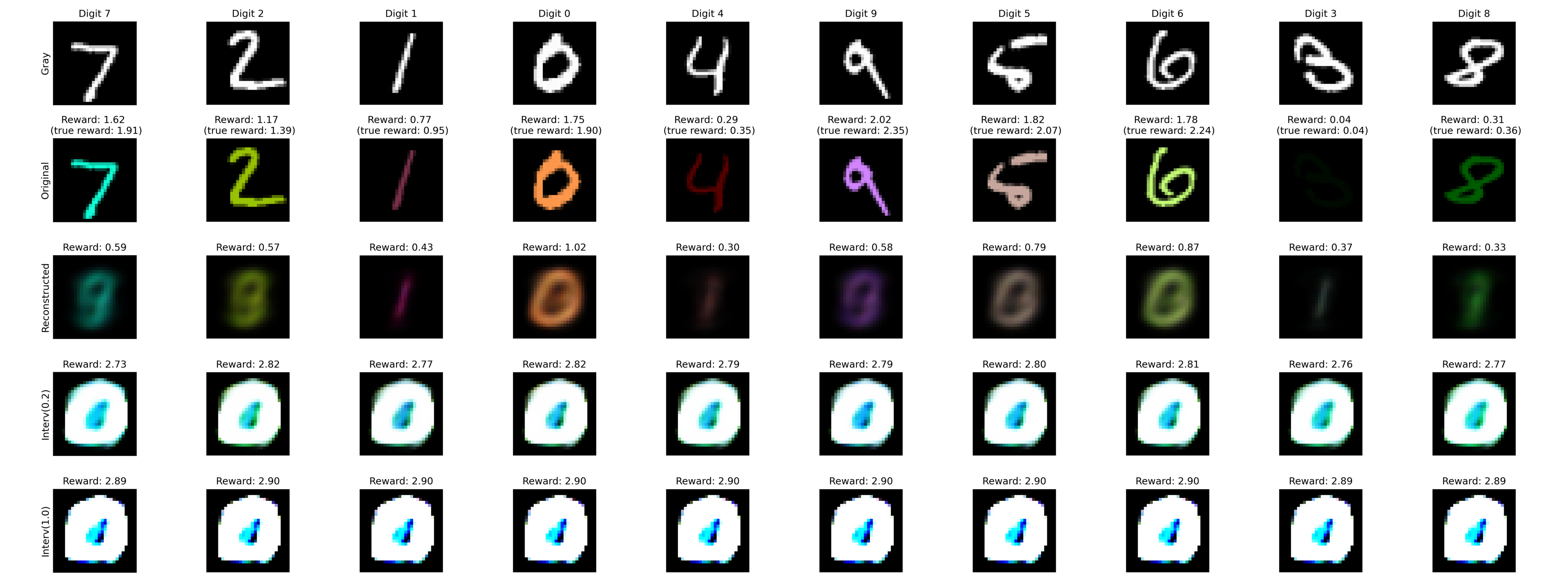}
\caption{{\bf IRAE[0]} on Case 2 DGP for one random seed (random seed 22), with a Dense AutoEncoder, linear HSIC as independence criterion, {\bf latent dimension 3 = number of instruments}, regularization weights $\lambda=\mu_1=1$ and $\mu_2=\mu_3=0$ and training for 50 epochs with early stopping (patience 5 epochs).}\label{fig:dgp2-digits6-2}
\end{figure}

\begin{figure}[H]
\includegraphics[width=\textwidth]{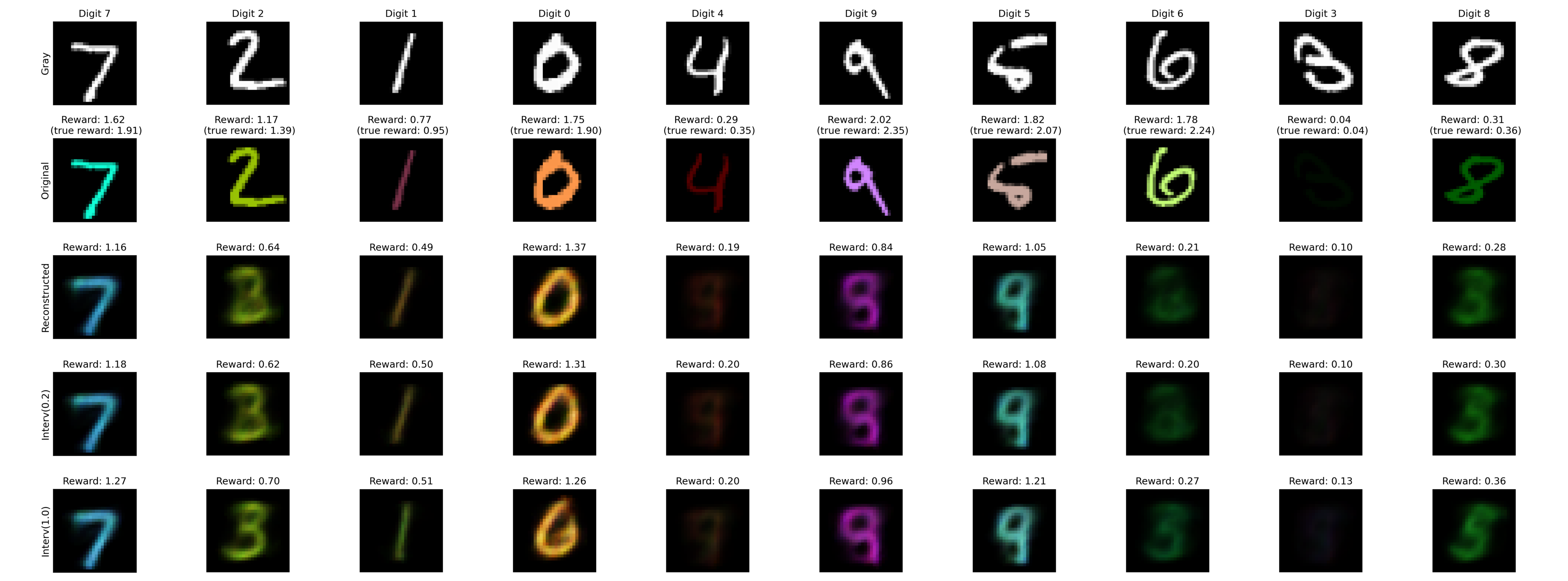}
\caption{{\bf Vanilla AE} on Case 2 DGP for one random seed (random seed 22), with a Dense AutoEncoder, linear HSIC as independence criterion, {\bf latent dimension 3 = number of instruments}, regularization weights $\lambda=\mu_1=\mu_2=\mu_3=0$ and training for 50 epochs with early stopping (patience 5 epochs).}\label{fig:dgp2-digits6}
\end{figure}

\subsection{Case 3: Confounded Outcome}

We examine the following confounded outcome generating process, where the instruments now affect the colors in a more convoluted intertwined manner. We denote this as \emph{Case 3 DGP}. 

All of models are trained with 60k training samples and evaluated on 10k test set, for 40 random seeds. Regularization weights are 0 or 1. All models are trained with 50 epochs after initialization with early stopping of patience 5.

\begin{tcolorbox}[title={\normalsize\bfseries Case 3 DGP}, colback=white, colframe=black!75!white]
Draw DGP parameters $\alpha,\beta \sim \text{Unif}(0.1, 0.7)$. Then generate samples as:
\begin{align*}
G_i &\in [0,1]^{28\times 28} &&\text{(grayscale MNIST image)}\\[4pt]
Z_i,\;U_i &\sim \mathcal N\!\bigl(0,I_3\bigr), 
\qquad Z_i\;\indep\;U_i &&\text{(instrument \& confounder)}\\[4pt]
r_i &= \operatorname{clip}\!\bigl(0.5 + \alpha\,Z_{i1} + \beta\,U_{i1},\,0,\,1\bigr) &&\text{(red channel)}\\[2pt]
g_i &= \operatorname{clip}\!\bigl(0.5 + \alpha\,Z_{i2} + \beta\,U_{i2},\,0,\,1\bigr) &&\text{(green channel)}\\[2pt]
b_i &= \operatorname{clip}\!\bigl(0.5 + \alpha\,Z_{i3} + \beta\,U_{i3},\,0,\,1\bigr) &&\text{(blue channel)}\\[6pt]
X_i(k,\ell,c) &= G_i(k,\ell)\,\cdot\,(r_i,g_i,b_i)_c, \qquad
\begin{aligned}
&c\in\{R,G,B\},\\
&(k,\ell)\in\{1,\dots,28\}^2
\end{aligned}
&&\text{(colour image)}\\[6pt]
Y_i &= r_i + g_i + b_i - U_{i1} - U_{i2} - U_{i3}. &&\text{(confounded outcome)}
\end{align*}
Returns the tuples $\bigl(Z_i, X_i, Y_i\bigr)$.
\end{tcolorbox}

In this confounding setting, we found that IRAE[0], IRAE[1], IRAE[2], IRAE still led to improved outcome, whereas Vanilla AE did not.

\begin{figure}[H]
\centering
\tiny
\begin{tabular}{llllllllll}
\toprule
  &  &  &  &  & Vanilla AE & IRAE[0] & IRAE[1] & IRAE[2] & IRAE \\
Arch & Latent Dim & Reg Type & Warm Start & image &  &  &  &  &  \\
\midrule
\multirow[t]{6}{*}{dense} & \multirow[t]{3}{*}{10} & \multirow[t]{3}{*}{linear} & \multirow[t]{3}{*}{False} & reconstructed & -0.46 (0.02) & -0.67 (0.02) & -0.67 (0.02) & \textbf{-0.27 (0.01)} & \textbf{-0.27 (0.02)} \\
 &  &  &  & intervened(0.2) & -0.45 (0.02) & \textbf{1.4 (0.12)} & \textbf{1.4 (0.1)} & 1.38 (0.15) & 1.36 (0.15) \\
 &  &  &  & intervened(1.0) & -0.37 (0.03) & 1.54 (0.11) & 1.54 (0.09) & 1.57 (0.12) & \textbf{1.58 (0.08)} \\
\cline{2-10} \cline{3-10} \cline{4-10}
 & \multirow[t]{3}{*}{32} & \multirow[t]{3}{*}{linear} & \multirow[t]{3}{*}{False} & reconstructed & -0.46 (0.02) & -0.67 (0.02) & -0.67 (0.02) & \textbf{-0.14 (0.02)} & \textbf{-0.14 (0.02)} \\
 &  &  &  & intervened(0.2) & -0.45 (0.02) & \textbf{1.4 (0.12)} & \textbf{1.4 (0.1)} & 0.74 (0.34) & 0.63 (0.35) \\
 &  &  &  & intervened(1.0) & -0.37 (0.03) & \textbf{1.54 (0.11)} & \textbf{1.54 (0.09)} & 1.42 (0.32) & 1.34 (0.39) \\
\cline{1-10} \cline{2-10} \cline{3-10} \cline{4-10}
\multirow[t]{6}{*}{conv} & \multirow[t]{3}{*}{10} & \multirow[t]{3}{*}{linear} & \multirow[t]{3}{*}{False} & reconstructed & -0.37 (0.02) & -0.6 (0.06) & -0.6 (0.05) & \textbf{-0.21 (0.03)} & -0.22 (0.03) \\
 &  &  &  & intervened(0.2) & -0.36 (0.03) & 0.21 (0.34) & 0.4 (0.4) & \textbf{0.98 (0.23)} & 0.83 (0.41) \\
 &  &  &  & intervened(1.0) & -0.31 (0.07) & 0.4 (0.56) & 0.69 (0.58) & 1.25 (0.54) & \textbf{1.28 (0.51)} \\
\cline{2-10} \cline{3-10} \cline{4-10}
 & \multirow[t]{3}{*}{32} & \multirow[t]{3}{*}{linear} & \multirow[t]{3}{*}{False} & reconstructed & -0.37 (0.02) & -0.6 (0.06) & -0.6 (0.05) & \textbf{-0.1 (0.03)} & \textbf{-0.1 (0.03)} \\
 &  &  &  & intervened(0.2) & -0.36 (0.03) & 0.21 (0.34) & 0.4 (0.4) & \textbf{0.7 (0.33)} & 0.64 (0.38) \\
 &  &  &  & intervened(1.0) & -0.31 (0.07) & 0.4 (0.56) & 0.69 (0.58) & \textbf{1.26 (0.39)} & 1.11 (0.42) \\
\cline{1-10} \cline{2-10} \cline{3-10} \cline{4-10}
\bottomrule
\end{tabular}
\caption{Experimental results for the {\bf Case 3} data generating process. Mean improvement and standard deviation of improvement is reported.}
\end{figure}

\subsection{Case 4: Confounded DGP with One Outcome Relevant Dimension}

We examine the following confounded outcome generating process, where the instruments now affect the colors in a more convoluted intertwined manner. Moreover, only the red channel is relevant for the outcome and the outcome is confounded. We denote this as \emph{Case 4 DGP}.

All of models are trained with 60k training samples and evaluated on 10k test set, for 40 random seeds. Regularization weights are 0 or 1. All models are trained with 50 epochs after initialization with early stopping of patience 5.

\begin{tcolorbox}[title={\normalsize\bfseries Case 4 DGP}, colback=white, colframe=black!75!white]
Draw DGP parameters $\alpha,\beta \sim \text{Unif}(0.1, 0.7)$. Then generate samples as:
\begin{align*}
G_i &\in [0,1]^{28\times 28} &&\text{(grayscale MNIST image)}\\[4pt]
Z_i,\;U_i &\sim \mathcal N\!\bigl(0,I_3\bigr), 
\qquad Z_i\;\indep\;U_i &&\text{(instrument \& confounder)}\\[4pt]
r_i &= \operatorname{clip}\!\bigl(0.5 + \alpha\,(Z_{i1} - Z_{i2}) + \beta\,U_{i1},\,0,\,1\bigr) &&\text{(red channel)}\\[2pt]
g_i &= \operatorname{clip}\!\bigl(0.5 + \alpha\,(Z_{i2} - Z_{i3}) + \beta\,U_{i2},\,0,\,1\bigr) &&\text{(green channel)}\\[2pt]
b_i &= \operatorname{clip}\!\bigl(0.5 + \alpha\,(Z_{i3} - Z_{i1}) + \beta\,U_{i3},\,0,\,1\bigr) &&\text{(blue channel)}\\[6pt]
X_i(k,\ell,c) &= G_i(k,\ell)\,\cdot\,(r_i,g_i,b_i)_c, \qquad
\begin{aligned}
&c\in\{R,G,B\},\\
&(k,\ell)\in\{1,\dots,28\}^2
\end{aligned}
&&\text{(colour image)}\\[6pt]
Y_i &= r_i - U_{i1}. &&\text{(confounded outcome)}
\end{align*}
Returns the tuples $\bigl(Z_i, X_i, Y_i\bigr)$.
\end{tcolorbox}

We demonstrate in this data generating process the importance of running an instrumental variable regression in the latent space. We see below that if instead we had run OLS regressing the outcome on the identified latent factors, then the direction would be erroneous and the interventional images will not be moving the image towards more red colors.

\begin{figure}
\centering
\tiny
\begin{tabular}{llllllllll}
\toprule
  &  &  &  &  & Vanilla AE & IRAE[0] & IRAE[1] & IRAE[2] & IRAE \\
Arch & Latent Dim & Reg Type & Warm Start & image &  &  &  &  &  \\
\midrule
\multirow[t]{6}{*}{dense} & \multirow[t]{3}{*}{10} & \multirow[t]{3}{*}{linear} & \multirow[t]{3}{*}{False} & reconstructed & -0.16 (0.01) & -0.22 (0.01) & -0.22 (0.01) & \textbf{-0.09 (0.01)} & \textbf{-0.09 (0.01)} \\
 &  &  &  & intervened(0.2) & -0.15 (0.01) & \textbf{0.51 (0.03)} & 0.5 (0.03) & \textbf{0.51 (0.02)} & \textbf{0.51 (0.02)} \\
 &  &  &  & intervened(1.0) & -0.1 (0.02) & \textbf{0.55 (0.01)} & \textbf{0.55 (0.02)} & \textbf{0.55 (0.01)} & \textbf{0.55 (0.01)} \\
\cline{2-10} \cline{3-10} \cline{4-10}
 & \multirow[t]{3}{*}{32} & \multirow[t]{3}{*}{linear} & \multirow[t]{3}{*}{False} & reconstructed & -0.16 (0.01) & -0.22 (0.01) & -0.22 (0.01) & \textbf{-0.05 (0.01)} & \textbf{-0.05 (0.01)} \\
 &  &  &  & intervened(0.2) & -0.15 (0.01) & \textbf{0.51 (0.03)} & 0.5 (0.03) & 0.5 (0.01) & 0.49 (0.03) \\
 &  &  &  & intervened(1.0) & -0.1 (0.02) & \textbf{0.55 (0.01)} & \textbf{0.55 (0.02)} & 0.54 (0.01) & 0.54 (0.01) \\
\cline{1-10} \cline{2-10} \cline{3-10} \cline{4-10}
\multirow[t]{6}{*}{conv} & \multirow[t]{3}{*}{10} & \multirow[t]{3}{*}{linear} & \multirow[t]{3}{*}{False} & reconstructed & -0.13 (0.01) & -0.2 (0.02) & -0.2 (0.02) & \textbf{-0.07 (0.03)} & \textbf{-0.07 (0.03)} \\
 &  &  &  & intervened(0.2) & -0.13 (0.01) & 0.26 (0.12) & 0.28 (0.14) & \textbf{0.45 (0.03)} & 0.44 (0.05) \\
 &  &  &  & intervened(1.0) & -0.11 (0.04) & 0.42 (0.19) & 0.44 (0.21) & \textbf{0.54 (0.01)} & 0.53 (0.02) \\
\cline{2-10} \cline{3-10} \cline{4-10}
 & \multirow[t]{3}{*}{32} & \multirow[t]{3}{*}{linear} & \multirow[t]{3}{*}{False} & reconstructed & -0.13 (0.01) & -0.2 (0.02) & -0.2 (0.02) & \textbf{-0.04 (0.03)} & \textbf{-0.04 (0.02)} \\
 &  &  &  & intervened(0.2) & -0.13 (0.01) & 0.26 (0.12) & 0.28 (0.14) & \textbf{0.45 (0.04)} & \textbf{0.45 (0.05)} \\
 &  &  &  & intervened(1.0) & -0.11 (0.04) & 0.42 (0.19) & 0.44 (0.21) & \textbf{0.53 (0.01)} & 0.51 (0.08) \\
\cline{1-10} \cline{2-10} \cline{3-10} \cline{4-10}
\bottomrule
\end{tabular}
\caption{Experimental results for the {\bf Case 4} data generating process. Mean improvement and standard deviation of improvement is reported.}

\end{figure}

\begin{figure}[H]
\includegraphics[width=\textwidth]{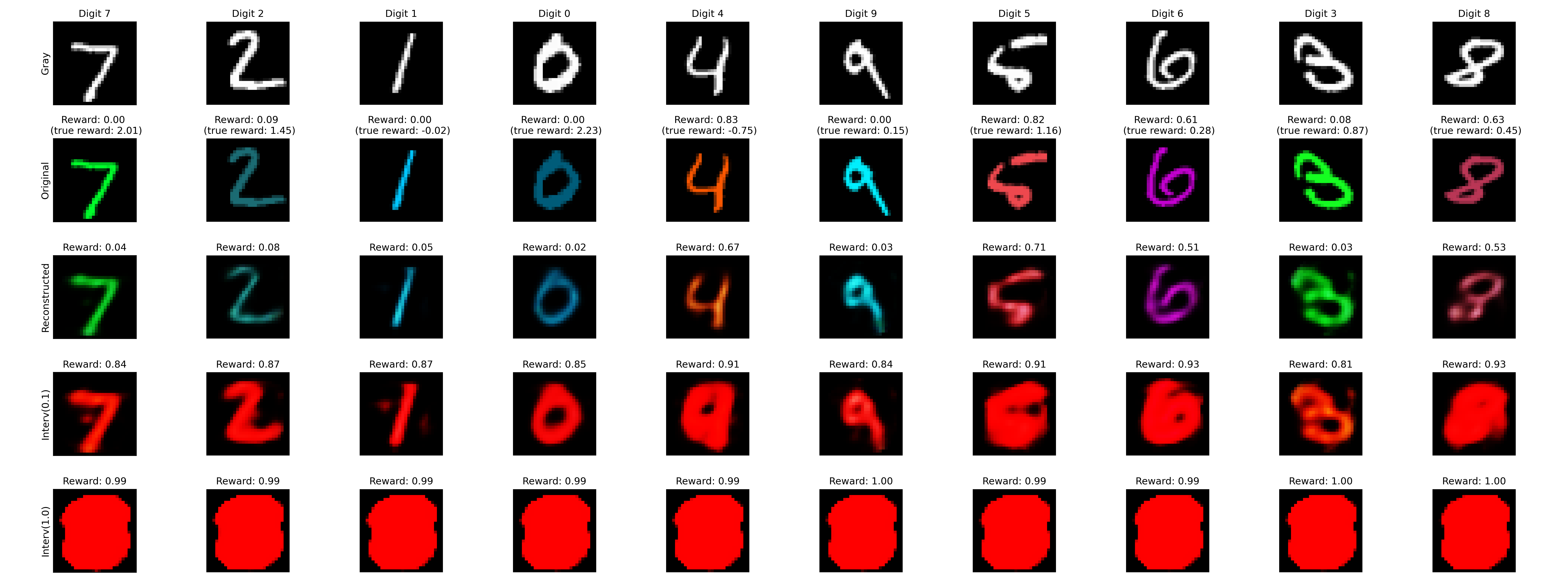}
\caption{{\bf IRAE} on {\bf Case 4 DGP} for one random seed, with a Dense AutoEncoder, linear HSIC as independence criterion, {\bf latent dimension 32}, regularization weights $\lambda=\mu_1=\mu_2=\mu_3=1$ and training for 50 epochs with early stopping (patience 5 epochs) from scratch (no warm start from IRAE1). Interventional images are intervened in the {\bf direction identified by 2SLS} in the latent space with instrument $Z$, treatment $D$ and outcome $Y$. The outcome is larger when the color of the image is changed to red.}
\end{figure}

\begin{figure}[H]
\includegraphics[width=\textwidth]{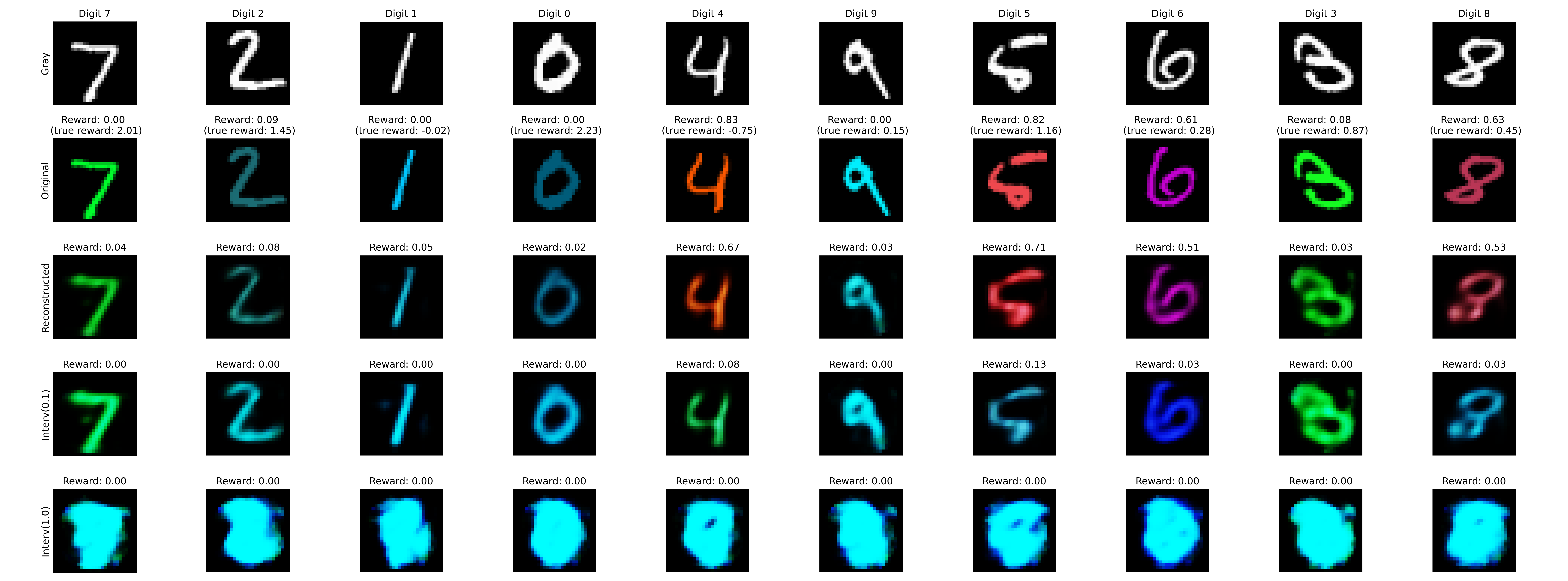}
\caption{{\bf IRAE} on {\bf Case 4 DGP} for one random seed, with a Dense AutoEncoder, linear HSIC as independence criterion, {\bf latent dimension 32}, regularization weights $\lambda=\mu_1=\mu_2=\mu_3=1$ and training for 50 epochs with early stopping (patience 5 epochs) from scratch (no warm start from IRAE[1]). Interventional images are intervened in the {\bf direction identified by OLS($Y \sim D$)} in the latent space. The outcome is larger when the color of the image is changed to red.}
\end{figure}

\end{document}